\theoremstyle{plain}
\theoremstyle{definition}
\theoremstyle{remark}
\newcommand{\xmark}{\ding{55}}%
\def\CC{}
\definecolor{DarkGreen}{RGB}{1,50,32}
\title{\huge Towards Simple and Provable Parameter-Free Adaptive Gradient Methods}
\author
{
     Yuanzhe Tao\thanks{Equal contribution} \thanks{School of Mathematical Sciences, Peking University, Beijing, China; e-mail: {\tt 2100010641@stu.pku.edu.cn}} 
     ~~~
     Yifeng Liu\footnotemark[1] \thanks{Department of Computer Science, University of California, Los Angeles, CA, USA; e-mail: {\tt liuyifeng@g.ucla.edu}} 
     ~~~
     Huizhuo Yuan\thanks{Department of Computer Science, University of California, Los Angeles, CA, USA; e-mail: {\tt hzyuan@cs.ucla.edu}} 
     ~~~
     Xun Zhou\thanks{Bytedance Inc; e-mail: {\tt zhouxun@bytedance.com}}
      ~~~
     Yuan Cao\thanks{School of Computing and Data Science, the University of Hong Kong, Hong Kong; e-mail: {\tt yuancao@hku.hk}} 
     ~~~
     Quanquan Gu\thanks{Department of Computer Science, University of California, Los Angeles, CA, USA; e-mail: {\tt qgu@cs.ucla.edu}}
}
\date{}
\begin{document}

\maketitle

\begin{abstract}
Optimization algorithms such as AdaGrad and Adam have significantly advanced the training of deep models by dynamically adjusting the learning rate during the optimization process. However, ad-hoc tuning of learning rates poses a challenge and leads to inefficiencies in practice. To address this issue, recent research has focused on developing ``parameter-free'' algorithms that operate effectively without the need for learning rate tuning. Despite these efforts, existing parameter-free variants of AdaGrad and Adam tend to be overly complex and/or lack formal convergence guarantees. In this paper, we present AdaGrad++ and Adam++, novel and simple parameter-free variants of AdaGrad and Adam with convergence guarantees. We prove that AdaGrad++ achieves comparable convergence rates to AdaGrad in convex optimization without predefined learning rate assumptions. Similarly, Adam++ matches the convergence rate of Adam without relying on any conditions on the learning rates. Experimental results across various deep learning tasks validate the competitive performance of Adam++. 
\end{abstract}

\section{Introduction}
In recent years, optimization algorithms such as AdaGrad \citep{duchi2011adaptive} and Adam 
\citep{kingma2014adam} have emerged as powerful tools for enhancing the training of deep learning models by efficiently adapting the learning rate during the optimization process. While these algorithms have demonstrated remarkable performance gains in various applications, a notable drawback lies in the necessity of manual tuning for suitable learning rates. The process of learning rate tuning can be laborious and often requires extensive trial and error, hindering the efficiency and scalability of deep learning model training.

The intricate nature of learning rate tuning has motivated a large number of recent works to develop ``learning-rate-free'' or ``parameter-free'' algorithms that can work well under various different settings without learning rate tuning. Among the vast literature of parameter-free optimization methods, \citet{ivgi2023dog} proposed a framework called distance over gradients (DoG), which gives a parameter-free version of stochastic gradient descent (SGD) that shares certain features with the AdaGrad-Norm algorithm \citep{streeter2010less,ward2020adagrad}. Motivated by AdaGrad-Norm, another recent work \citep{defazio2023learning} also gave a framework named D-adaptation, and parameter-free variants of SGD and Adam were proposed under this framework. \citet{defazio2024road} proposed a different approach for schedule-free online optimization, based on which the authors developed new variants of schedule-free SGD and Adam/AdamW. Very recently, \citet{kreisler2024accelerated} introduced a new parameter-free optimization algorithm named U-DoG, achieving a near-optimal convergence rate in smooth stochastic convex optimization by combining the adaptive learning rates introduced in \citet{kavis2019unixgrad} and \citet{ivgi2023dog}.

\begin{table*}[t!]
    \centering
    \small
    \caption{Comparison of parameter-free (p.-f.) versions of AdaGrad and Adam and their convergence (conv.) guarantees in different works. In the table, we use the check mark ({\color{DarkGreen}{\checkmark}}) to indicate that the corresponding paper  gives the corresponding parameter-free algorithm or the convergence guarantee, while the cross mark ({\color{red}{\xmark}}) indicates that the corresponding paper does not propose the corresponding algorithm or the convergence guarantee.\vspace{5pt}}\label{tab:compare}
    \resizebox{\linewidth}{!}{
    \begin{tabular}{lcccc} 
      \toprule 
        &  p.-f. AdaGrad &  conv. of  p.-f. AdaGrad  & p.f. Adam &  conv. of p.-f. Adam\\
      \midrule 
      DoG \citep{ivgi2023dog}  &\Large\color{red}{\xmark} &\Large\color{red}{\xmark} & \Large\color{red}{\xmark} &\Large\color{red}{\xmark}\\
    D-adaptation \citep{defazio2023learning} &\Large\color{DarkGreen}{\checkmark} &\Large\color{DarkGreen}{\checkmark} & \Large\color{DarkGreen}{\checkmark} &\Large\color{red}{\xmark}\\
    Prodigy  \citep{mishchenko2023prodigy} & 
      \Large\color{red}{\xmark}
       & \Large\color{red}{\xmark} &\Large\color{DarkGreen}{\checkmark} &\Large\color{red}{\xmark} \\
    Schedule-Free \citep{defazio2024road} & 
     \Large\color{red}{\xmark}
       & \Large\color{red}{\xmark} & \Large\color{DarkGreen}{\checkmark} & \Large\color{red}{\xmark} \\
     U-DoG  \citep{kreisler2024accelerated} & 
      
      \Large\color{red}{\xmark}
       & \Large\color{red}{\xmark} & \Large\color{red}{\xmark} & \Large\color{red}{\xmark} \\
       This work & 
      \Large\color{DarkGreen}{\checkmark}
       & \Large\color{DarkGreen}{\checkmark} & \Large\color{DarkGreen}{\checkmark} & \Large\color{DarkGreen}{\checkmark}\\
      \bottomrule 
    \end{tabular}
    }
\end{table*}

Despite the recent advances of parameter-free optimization algorithms, research on parameter-free adaptive gradient methods\footnote{Adaptive gradient methods usually have multiple hyperparameters other than learning rates. For example, Adam implements exponential moving averages of first and second moments of gradients, which are controlled by parameters $\beta_1$ and $\beta_2$. Here we clarify that when discussing parameter-free adaptive gradient methods, we still allow the algorithm to have such hyperparameters which do not require extensive tuning. This is consistent with the convention in recent works on parameter-free optimization \citep{defazio2023learning,mishchenko2023prodigy,defazio2024road}.} remains relatively limited. Specifically, most existing parameter-free algorithms are  variants of SGD, and \textit{entry-wise adaptive learning rates} in standard AdaGrad and Adam algorithms are rarely considered in most of the existing parameter-free methods. Although 
\citet{defazio2023learning,mishchenko2023prodigy,defazio2024road} recently proposed variants of parameter-free AdaGrad, Adam and AdamW that implement entry-wise adaptive gradients, these algorithms all introduce rather significant modifications to the original algorithms, and the parameter-free versions of Adam/AdamW are not backed up by theoretical convergence guarantees.

Motivated by the limitations of existing studies, in this work, we propose simple but efficient versions of AdaGrad and Adam with provable convergence guarantees, which we name AdaGrad++ and Adam++\footnote{Adam++ follows certain designs and corrections proposed in the AMSGrad algorithm \citep{reddi2019convergence}. However, since AMSGrad is widely considered as an algorithm in the Adam family, we still name the algorithm as Adam++.} respectively. For ease of comparison, we summarize the results regarding parameter-free versions of AdaGrad and Adam in recent works in Table~\ref{tab:compare}. The main contributions of this work can be summarized as follows:
\begin{itemize}
    \item We propose the AdaGrad++ algorithm, which is a parameter-free version of AdaGrad. We demonstrate that without any assumptions on learning rates, AdaGrad++ can still achieve an $O(1/\sqrt{T})$ worst-case convergence rate in convex optimization, which is the same as AdaGrad. This highlights the efficacy and versatility of AdaGrad++ as a more accessible and user-friendly alternative. 
    \item Based on AdaGrad++, we further derive the Adam++ algorithm as a parameter-free variant of Adam. By eliminating the reliance on a well-tuned learning rate schedule, Adam++ offers enhanced adaptability and robustness compared to Adam. Our theoretical results demonstrate the capability of Adam++ to match the convergence rate of Adam in convex optimization, even in the absence of any assumptions regarding learning rates.
    \item 
    We conduct experiments on image classification and large language model pretraining tasks to evaluate the performance of the proposed algorithms. For CIFAR-10, with minimal parameter tuning, Adam++ outperforms Adam by 0.30\% on accuracy using a cosine learning rate schedule on ResNet-18, and by 3.53\% using a constant learning rate schedule on DenseNet-121. For GPT-2 small and medium tasks, Adam++ surpasses Adam by 0.02 in both training and test losses. Additionally, we perform an ablation study on the choice of initial and base learning rates, which confirms our theoretical findings.
\end{itemize}

\textbf{Notation.} We denote scalars by lowercase letters, vectors by lowercase boldface letters, and matrices by uppercase boldface letters. For a positive integer $d$, we denote $[d] = \{1,\ldots, d\}$. For a vector $\xb = [x_1,\ldots,x_d]^\top$ and $p\geq 1$, we denote the $\ell_p$ norm of $\xb$ by $\| \xb \|_p = \big(\sum_{i=1}^d |x_i|^p\big)^{1/p}$, and the $\ell_\infty$ norm of $\xb$ by $\|\xb\|_\infty = \max_{i\in[d]} |x_i|$. Given two sequences $\{a_n\}$ and $\{b_n\}$, we write $a_n = O(b_n)$ if there exists a constant $C>0$ such that $a_n \leq C\, b_n$. We use the notation $\tilde{O}(\cdot)$ to hide logarithmic factors.

\section{Review of existing methods and preview of proposed methods}

In this section, we give a brief review of the adaptive gradient methods, and discuss existing literature of parameter-free adaptive gradient methods, followed by a preview of our proposed methods.

We consider the optimization problem as follows
\begin{align}\label{eq:obj}
\min_{\xb\in\mathbb{R}^d}f(\xb), 
\end{align}
where $f$ can be a convex or nonconvex function. \CC{In order to} optimize \eqref{eq:obj}, the standard stochastic gradient descent (SGD) performs the following update rule 
\begin{align}\label{eq:sgd}
    \xb_{t+1} = \xb_{t} - \eta_t \gb_t,
\end{align}
where $\gb_t$ represents the stochastic gradient at the $t$-th iteration, $\eta_t$ denotes the learning rate. Adaptive gradient methods \citep{duchi2011adaptive,hinton2012neural,kingma2014adam,reddi2019convergence,loshchilov2017fixing,chen2020closing} aim to give well-designed adjustments to the learning rate $\eta_t$, particularly focusing on applying different learning rates for different entries of the iterates.

Among popular adaptive gradient methods, 
AdaGrad \citep{duchi2011adaptive} stands out as one of the pioneering methods. The update rule for AdaGrad is given by: 
\begin{align}\label{eq:adagrad}
\xb_{t+1} = \xb_{t} -  \frac{\eta_t}{\sqrt{\sum_{i=1}^t\gb_i^2} + \delta}\cdot \gb_t, \end{align} where $\delta$ is a small positive constant, and we use the common notation where the square $(\cdot)^2$ and square root $\sqrt{\cdot}$ operations are performed entry-wise when applied to a vector.

Adam \citep{kingma2014adam} is probably the most widely recognized adaptive gradient method. Compared with AdaGrad, it implements exponential moving averages over $\gb_t^2$'s, as well as momentum acceleration, with the update rule defined as follows: 
\begin{align}
&\mb_t = \beta_1 \mb_{t-1} + (1- \beta_1) \gb_t, \nonumber\\&\vb_t = \beta_2 \vb_{t-1} + (1- \beta_2) \gb_t^2,\nonumber\\&\xb_{t+1} = \xb_{t} - \eta_t \frac{\mb_t}{\sqrt{\vb_t} + \delta}.\label{eq:adam}
\end{align}
Another line of research on parameter-free optimization seeks to reduce or remove the necessity of learning rate tuning. The distance over gradient (DoG) \citep{ivgi2023dog} framework is a popular method which sets the learning rate $\eta_t$ in stochastic gradient descent \eqref{eq:sgd} as
$$
\eta_t=\frac{\max_{i\leq t}\|\xb_0-\xb_i\|_2}{\sqrt{\sum_{i=1}^t\|\gb_i\|_2^2}}.$$
DoG can be treated as a modification on the AdaGrad-Norm algorithm \citep{duchi2011adaptive,streeter2010less,ward2020adagrad} with $\eta_t=D / \sqrt{\sum_{i=1}^t\|\gb_i\|_2^2}$,
where the parameter $D$ is set as $\max_{i\leq t}\|\xb_0-\xb_i\|_2$ in DoG. Several other parameter-free methods \citep{defazio2023learning,mishchenko2023prodigy} also focused on estimating the parameter $D$ with different criteria. Notably, these recent studies of parameter-free algorithms focus more on the variants of SGD, which do not implement the entry-wise adaptive learning rates in AdaGrad and Adam. Although several recent works \citep{defazio2023learning,mishchenko2023prodigy,defazio2024road} proposed parameter-free variants of AdaGrad or Adam, they are mostly not backed up with theoretical guarantees. Moreover, existing parameter-free variants of AdaGrad and Adam are mostly relatively complicated, deviating significantly from the standard forms of AdaGrad and Adam. 

\noindent\textbf{Preview of our proposed methods.} Inspired by DoG \citep{ivgi2023dog}, we propose simple parameter-free variants of AdaGrad and Adam, which we call AdaGrad++ and Adam++ respectively. Specifically, AdaGrad++ follows the update rule of AdaGrad in \eqref{eq:adagrad}, but with 
\begin{align*}
    \eta_t = d^{-1/2}\cdot \max_{i\leq t} \| \xb_{i} - \xb_0 \|_2,
\end{align*}
where $d$ is the dimension of $\xb$. Note that $\eta_t$ is the maximum distance between the initialization $\xb_0$ and all the iterates along the optimization trajectory normalized by $\sqrt{d}$. Moreover, a specific and simplified case in Adam++ is  directly based on the update rule of Adam in \eqref{eq:adam}, with equivalent learning rate
\begin{align*}
    \tilde{\eta}_t = \frac{\max_{i\leq t} \| \xb_{i} - \xb_0 \|_2}{ \sqrt{d(t+1)}}.
\end{align*}
Compared with existing parameter-free versions of AdaGrad and Adam, AdaGrad++ and Adam++ are in a much simpler form. \CC{Interestingly, despite the simplicity, our analysis demonstrates that AdaGrad++ and Adam++ enjoy good theoretical convergence guarantees, and perform very well in various experiments. For more details, please refer to Sections \ref{section:adagrad++} and \ref{section:adam++}.}

\section{AdaGrad++: a parameter-free version of AdaGrad}\label{section:adagrad++}
In this section, we present the details of the AdaGrad++ algorithm, and then give theoretical guarantees on its performance in convex optimization.

\subsection{Algorithm}
 We consider the optimization problem as introduced in \eqref{eq:obj} in the setting of stochastic optimization, and we assume access to a \textit{stochastic gradient oracle} $\mathcal{G}(\xb)$ satisfying $\mathbb{E}[\mathcal{G}(\xb)|\xb]\in \partial f(\xb)$. The AdaGrad++ algorithm is  presented in Algorithm~\ref{alg:adagrad++}. 


\begin{algorithm}[t!]
\caption{Parameter-Free AdaGrad (AdaGrad++)}
\begin{algorithmic}[1]\label{alg:adagrad++}
    \STATE \textbf{input:} $\xb_0, \eta_0=\epsilon, \delta$
    \FOR {$t=0,$ \textbf{to} $T$}
        \STATE $r_t=\|\xb_t-\xb_0\|_{2}/\sqrt{d}$
        \STATE $\eta_t=\max(\eta_{t-1},r_t)$
        \STATE $\gb_t=\mathcal{G}(\xb_t)$
        \STATE $\sbb_t=(\sum_{k=0}^t\gb_{k}\odot \gb_{k})^{1/2}$
        \STATE $\Hb_t=\delta \Ib+\text{diag}(\sbb_t)$
        \STATE $\xb_{t+1}=\xb_t-\eta_t\cdot\Hb_t^{-1}\gb_t$
    \ENDFOR
\end{algorithmic}
\end{algorithm}

In Algorithm~\ref{alg:adagrad++}, it is clear that the key innovation of AdaGrad++ lies in the introduction of the quantity $r_t=\|\xb_t-\xb_0\|_{2}/\sqrt{d}$, and the definition that $\eta_t = \max(\eta_{t-1},r_t)$. These definitions are inspired by the DoG framework \citep{ivgi2023dog}, and are the key to a parameter-free approach. We would also like to comment that introducing the factor $d^{-1/2}$ in the definition of $r_t$ is crucial in AdaGrad++, resulting in strong theoretical guarantees and robust practical performance across different tasks with varying dimensions. \CC{The intuition is that AdaGrad++ implements different adaptive learning rates for different coordinates, and the $d^{-1/2}$ factor converts the ``total distance'' in DoG to the ``mean squared distance (displacement)'', which is more robust to $d$.}
\subsection{Convergence Guarantee}
In this subsection, we present convergence guarantees of AdaGrad++ (Algorithm~\ref{alg:adagrad++}) under the setting where $f(\xb)$ is convex.\footnote{We also provide convergence guarantees for AdaGrad++ in the nonconvex setting in Appendix~\ref{sec:nonconvex}.} We first give an assumption on the stochastic gradient $\mathcal{G}(\xb)$. 




\begin{assumption}\label{asp:gradientbound_mild}
    There exists some continuous function $l:\mathbb{R}^d\to\mathbb{R}$ such that $\|\mathcal{G}(\xb)\|_2\leq l(\xb)$ almost surely.
\end{assumption}

Assumption~\ref{asp:gradientbound_mild} states that the stochastic gradients have a deterministic bound $l(\xb)$ on their norm. By allowing different bounds at different $\xb$, this assumption is much weaker compared to the more common Lipschitz assumption that directly requires that $\|\mathcal{G}(\xb)\|_2 $ is bounded to a constant. The same assumption has been made in \citet{ivgi2023dog}. Note that this assumption is strictly weaker than the typical bounded gradient norm assumption, and as we will show later, it yields stronger convergence results.

Our main result on the convergence of AdaGrad++ is given in the following theorem.



\begin{theorem}\label{thm:adadog_stochastic} Let $\xb_0,\ldots, \xb_T$ be the iterates of AdaGrad++. Further let $\tau\in \text{arg}\max_{t\leq T}\sum_{i=0}^{t-1}\frac{\eta_i}{\eta_t}$ and define $\overline{\xb}_{\tau}=\frac{\sum_{t=0}^{\tau-1}\eta_t\xb_t}{\sum_{t=0}^{\tau-1}\eta_t}$. Then under Assumption \ref{asp:gradientbound_mild}, for any $\tilde{\delta}\in(0,1)$, $L>0$ and any $\xb^*\in \RR^d$, with probability at least
$1-\tilde{\delta}-\mathbb{P}(\max_{t\leq T}l(\xb_t)>L)$, it holds that
\begin{align*}
    f(\overline{\xb}_{\tau})- f(\xb^*) \leq O\Bigg(\frac{M_1 \| \sbb_{\tau} \|_2+M_4+\sqrt{M_2\|\sbb_{\tau}\|_2^2+M_3}}{T}\Bigg),
\end{align*}
where 
\begin{align*}
    &M_1 = \frac{D_{\tau}^2\sqrt{d}}{ \eta_0}\cdot  \log\bigg(\frac{\eta_{T}}{\eta_0}\bigg),M_2 = \overline{D}_{\tau}^2 \log^2\bigg(\frac{\eta_{T}}{\eta_0}\bigg) \cdot \log\bigg[\frac{60\log(6T)}{\tilde{\delta}}\bigg],\\
    &M_3 = L^2 \log^2\bigg(\frac{\eta_{T}}{\eta_0}\bigg) \cdot \log^2\bigg[\frac{60\log(6T)}{\tilde{\delta}}\bigg],M_4=\frac{D_\tau^2\delta d}{\eta_0}\log\bigg(\frac{\eta_{T}}{\eta_0}\bigg),
\end{align*}
and $D_{\tau}=\max_{t\leq \tau}\|\xb_{t} - \xb^*\|_{\infty}$, $\overline{D}_{\tau}=\max_{t\leq \tau}\|\xb_{t} - \xb^*\|_2$.

\end{theorem}
Theorem~\ref{thm:adadog_stochastic} gives the bound of $f(\overline{\xb}_{\tau})$ that is defined by an arbitrarily chosen reference point $\xb^*$, and the bound contains a term $f(\xb^*)$  as well as several other terms that are related to the distance between algorithm iterates and $\xb^*$. This type of bound matches standard bounds in convex and Lipschitz/smooth optimization \citep{bubeck2015convex}. Moreover, the probability for the bound in Theorem~\ref{thm:adadog_stochastic} to hold depends on $\mathbb{P}(\max_{t\leq T}l(\xb_t)>L)$, and the bound holds with high probability when $\mathbb{P}(\max_{t\leq T}l(\xb_t)>L)$ is small. It is worth noting that if $l(\cdot)$ is always bounded, which corresponds to a Lipschitz $f$, then $\mathbb{P}(\max_{t\leq T}l(\xb_t)>L) = 0$ with an appropriately chosen constant $L$. Notably, Theorem~\ref{thm:adadog_stochastic} covers more general and non-Lipschitz cases as well, since $l(\cdot )$ only needs to be bounded along the optimization trajectory $\xb_0,\ldots,\xb_T$ to grant $\mathbb{P}(\max_{t\leq T}l(\xb_t)>L) = 0$. 

In addition, it is worth noting that $D_\tau=\max_{t\le\tau}\|\mathbf{x}_t-\mathbf{x}^*\|_\infty$ and $\bar{D}_\tau=\max_{t\le\tau}\|\mathbf{x}_t-\mathbf{x}^*\|_2$ depends on $t$ and may grow as $t$ increases in the worst case. The same issue appears in DoG as discussed in~\citet{ivgi2023dog}. Nevertheless, \citet{ivgi2023dog} argue that $\|\mathbf{x}_t-\mathbf{x}^*\|_2$ is typically bounded by $O(\|\mathbf{x}_0-\mathbf{x}^*\|_2)$ for all $t$, e.g., $\|\mathbf{x}_t-\mathbf{x}^*\|_2 \leq 3 \|\mathbf{x}_0-\mathbf{x}^*\|_2$, so the iterates do not move too far away from $\mathbf{x}_0$. This is also observed in our experiments. In addition, \citet{ivgi2023dog} proposed a step size scheme whose iterates are guaranteed to satisfy $\|\mathbf{x}_t-\mathbf{x}^*\|_2 \leq O(\|\mathbf{x}_0-\mathbf{x}^*\|_2)$ with high probability (See Section 3.3 and Proposition 2 in their paper). By using a similar step size scheme, we can also develop modified versions of AdaGrad++ (and Adam++ in the next section), such that the distance $\|\mathbf{x}_t-\mathbf{x}^*\|_2$ will be bounded by $O(\|\mathbf{x}_0-\mathbf{x}^*\|_2)$ with high probability. Therefore, in the following discussion, we assume $\eta_t \leq O(\|\mathbf{x}_0-\mathbf{x}^*\|_2/\sqrt{d})$,  $D_\tau=\max_{t\le\tau}\|\mathbf{x}_t-\mathbf{x}^*\|_\infty \leq O(\|\mathbf{x}_0-\mathbf{x}^*\|_\infty)$ and $\bar{D}_\tau=\max_{t\le\tau}\|\mathbf{x}_t-\mathbf{x}^*\|_2 \leq O(\|\mathbf{x}_0-\mathbf{x}^*\|_2)$. Under these assumptions, $M_1,M_2$ and $M_3$ can be viewed as some constants up to logarithmic factors.

With these assumptions in hand, Theorem~\ref{thm:adadog_stochastic} reveals that an important term $\| \sbb_{\tau} \|_2$ determines the convergence rate of AdaGrad++. We note that a similar quantity has been investigated by \citet{zhou2018convergence} in the study of nonconvex convergence guarantees of adaptive gradient methods. This similarity demonstrates that our proposed parameter-free algorithm AdaGrad++ still captures the key nature of AdaGrad. Taking a closer look at the quantity $\| \sbb_{\tau} \|_2$, by definition, we have $\| \sbb_{\tau} \|_2 = \sqrt{ \sum_{t=0}^\tau \| \gb_t \|_2^2 }$. When the objective function is Lipschitz (i.e., $l(\cdot)$ is bounded), it is clear that a worst-case upper bound of $\| \sbb_{\tau} \|_2$ is $\sqrt{T}$, leading to a $1/\sqrt{T}$ bound on the convergence rate (see Corollary~\ref{col:adadog_lipschitz} in Appendix~\ref{sec:further-adagrad++}). However, as discussed in \citet{zhou2018convergence}, here we point out that in practice, we often observe that $\| \sbb_{\tau} \|_2 \ll \sqrt{T}$ due to the fact that the algorithm converges and the stochastic gradients $\| \gb_t \|_2$ may converge to zero. When $\| \sbb_{\tau} \|_2 = O(T^{1/2 - \alpha})$ for some $\alpha \in (0,1/2)$, we will have a better convergence rate of AdaGrad++ (see Corollary~\ref{col:adadog_better} in Appendix~\ref{sec:further-adagrad++}). Finally, $M_1$ has a $\sqrt{d}$ dependence, which may seem large. However, this is not the case, since $D_{\tau}$ is defined with respect to the vector infinity norm, and thus $D_{\tau}^2 \sqrt{d}$ is of the same order as $\bar{D}_{\tau}^2$.

\section{Adam++: a parameter-free version of Adam}\label{section:adam++}
Based on the analysis of Adagrad++, in this section, we introduce the Adam++ algorithm together with its theoretical convergence guarantees. 

\subsection{Algorithm}
We consider the same optimization problem as introduced in \eqref{eq:obj} in the stochastic setting. We also consider the same stochastic gradient oracle $\mathcal{G}(\xb)$ satisfying $\mathbb{E}[\mathcal{G}(\xb)|\xb]\in \partial f(\xb)$. The Adam++ algorithm is depicted in Algorithm~\ref{alg:adam++}. 
\begin{algorithm}[!t]
\caption{Parameter-Free Adam (Adam++)}
\begin{algorithmic}[1]\label{alg:adam++}
    \STATE \textbf{input:} $\xb_0$, $ \eta_0=\epsilon$, $\delta$, $\beta_1$, $\beta_2$, $\lambda$
    \FOR {$t=0,$ \textbf{to} $T$}
        \STATE $r_t=\|\xb_t-\xb_0\|_{2}/\sqrt{d}$
        \STATE $\eta_t=\max(\eta_{t-1},r_t)$
        \STATE $\gb_t=\mathcal{G}(\xb_t)$ 
        \STATE $\beta_{1t}=\beta_1\lambda^{t}$
        \STATE $\mb_t=\beta_{1t}\mb_{t-1}+(1-\beta_{1t})\gb_t$
        \STATE \textbf{Case 1:}  $\sbb_t=(\sum_{k=0}^t\gb_{k}\odot \gb_{k})^{1/2}$ 
        \STATE \textbf{Case 2:}  $\vb_t=\beta_2\vb_{t-1}+(1-\beta_2)\gb_{t}\odot \gb_{t}, \sbb_t=\sqrt{(t+1)\cdot\max_{k\leq t}(\vb_{k})}$
        \STATE $\Hb_t=\delta\Ib+\text{diag}(\sbb_t)$
        \STATE $\xb_{t+1}=\xb_t-\eta_t\cdot\Hb_t^{-1}\mb_t$
    \ENDFOR
\end{algorithmic}
\end{algorithm}

There are several key points in 
Algorithm~\ref{alg:adam++} to note. First of all, Adam++ also implements the key quantity $r_t =\|\xb_t-\xb_0\|_{2}/\sqrt{d} $ introduced in  AdaGrad++ to automatically adapt the ``learning rate''. Moreover, Adam++ allows dynamically decaying first-moment parameter $\beta_{1t} = \beta_1 \lambda^{t}$, which follows the definition in AMSGrad \citep{reddi2019convergence}. When setting $\lambda = 1$, we can recover the common setup with a constant $\beta_1$. The introduction of the decaying
$\beta_{1t}$ is due to technical reasons, and our theoretical analysis on Adam relies on a $\lambda \in (0,1)$. However, we remark that Adam++ with $\lambda = 1$ can achieve highly competitive performance under various practical settings. 

Another key feature of Adam++ is that it covers two cases. In \textbf{Case 1}, we implement entry-wise adaptive learning rates that are similar to AdaGrad and AdaGrad++. In \textbf{Case 2}, we implement a more common exponential moving average of the second moment $\vb_t$ but also introduce another quantity $\sbb_t$. 
Particularly regarding the definition of $\sbb_t = \sqrt{(t+1) \cdot \max_{t' \leq t}(\vb_{t'})}$, we note that the factor $\sqrt{(t+1)}$ ensures reasonable scaling when incorporated with the quantity $r_t$. This factor makes the scaling of $\sbb_t$ in \textbf{Case 2} more compatible with that in \textbf{Case 1}. Moreover, the max operation $\max_{t' \leq t}(\vb_{t'})$ is inherited from the AMSGrad modification to Adam \citep{reddi2019convergence}, which has been shown to be crucial for establishing convergence guarantees. However, in practice, Adam (or AdamW) performs better than AMSGrad for training LLMs. Therefore, we apply Adam-like update rule $\sbb_t=\sqrt{(t+1)\vb_t}$ for language modeling tasks and use AMSGrad-like update rule (Algorithm \ref{alg:adam++} Case 2) for other computer vision experiments.

\subsection{Convergence Guarantee of Adam++}
In this section, we give the convergence guarantee of Adam++, which is based on the same assumption (Assumption~\ref{asp:gradientbound_mild}) as that of AdaGrad++. The main result is given in the following theorem.

\begin{theorem}\label{thm:adamdog_stochastic}
Let $\xb_0,\ldots, \xb_T$ be the iterates of Adam++ following either \textbf{Case 1} or \textbf{Case 2} in Algorithm~\ref{alg:adam++}. In addition, let $\tau\in \text{arg}\max_{t\leq T}\sum_{i=0}^{t-1}\frac{\eta_i}{\eta_t}$ and define $\overline{\xb}_{\tau}=\frac{\sum_{t=0}^{\tau-1}\eta_t\xb_t}{\sum_{t=0}^{\tau-1}\eta_t}$. Suppose $0 <\beta_1 < \sqrt{\beta_2}$ and $0 <\lambda <1$. Then under Assumption \ref{asp:gradientbound_mild}, for any $\tilde{\delta}\in(0,1)$, $L>0$ and any $\xb^*\in \RR^d$, with probability at least
$1-\tilde{\delta}-\mathbb{P}(\max_{t\leq T}l(\xb_t)>L)$, the following results hold:
\begin{align*}
    f(\overline{\xb}_{\tau})- f(\xb^*) \leq O\Bigg(\frac{M_1 \| \sbb_{\tau} \|_2+\sqrt{M_2\|\sbb_{\tau}\|_2^2+M_3}}{T}\Bigg),
\end{align*}
Here, $D_{\tau}=\max_{t\leq \tau}\|\xb_{t} - \xb^*\|_{\infty}$, $\overline{D}_{\tau}=\max_{t\leq \tau}\|\xb_{t} - \xb_*\|_2$, and 
\begin{align*}
    &M_1 = \frac{D_{\tau}^2\sqrt{d}}{ \eta_0}  \log\bigg(\frac{\eta_{T}}{\eta_0}\bigg),\\
    &M_2 = \overline{D}_{\tau}^2 \log^2\bigg(\frac{\eta_{T}}{\eta_0}\bigg)\log\bigg[\frac{60\log(6t)}{\tilde{\delta}}\bigg],\\
    &M_3 = L^2 \log^2\bigg(\frac{\eta_{T}}{\eta_0}\bigg) \log^2\bigg[\frac{60\log(6t)}{\tilde{\delta}}\bigg].
\end{align*}

\end{theorem}

Theorem~\ref{thm:adamdog_stochastic} gives the convergence guarantee for Adam++.  To the best of our knowledge, this is the first convergence guarantee of a parameter-free version of Adam. 
The bound holds with high probability when $l(\cdot)$ is bounded along the optimization trajectory $\xb_0,\ldots, \xb_T$. 
As we explained before, it is reasonable to assume that $M_1, M_2$ and $M_3$ are constants up to logarithmic factors. The subsequent discussion proceeds under this assumption.
Similar to the bound for AdaGrad++, the quantity $\| \sbb_\tau \|_2$ is a key quantity: when $l(\xb)$ is bounded, the worst-case bound of $\| \sbb_\tau \|_2$ is $O(\sqrt{T})$, leading to a $\tilde{O}(1/\sqrt{T})$ convergence rate. However, if $\| \sbb_{\tau} \|_2 = O(T^{1/2 - \alpha})$ for some $\alpha \in (0,1/2)$, we can expect a faster convergence rate.

Clearly, we can also establish the counterparts of Corollaries~\ref{col:adadog_lipschitz} and \ref{col:adadog_better} for Adam++. However, to avoid repetitions, here we only give the corollary below as the counterpart of Corollary~\ref{col:adadog_better}. The counterpart of Corollary~\ref{col:adadog_lipschitz} can be obtained by setting $\alpha = 0$.

\begin{corollary}\label{col:adam_better}
    Suppose that the assumptions in Theorem~\ref{thm:adamdog_stochastic} hold. Further assume that there exist $G >0$ such that $l(\xb)\leq G$ and $\| \sbb_{\tau} \|_2 \leq G\cdot T^{1/2 - \alpha}$ for some $\alpha \in [0,1/2)$. Then for any $\xb^*\in \RR^d$, with probability at least $1-\delta$, it holds that 
\begin{align*}
    f(\overline{\xb}_{\tau})- f(\xb_*) \leq \tilde{O}\Bigg( \frac{ D_{\tau}^2 G \cdot \sqrt{d} }{ T^{1/2 + \alpha}}\Bigg),
\end{align*}
where $D_{\tau}=\max_{t\leq \tau}\|\xb_{t} - \xb_*\|_{\infty}$. 
\end{corollary}
In the worst case where $\alpha = 0$, Corollary~\ref{col:adam_better} gives a convergence rate of order $\sqrt{d/T}$ for Adam++. This matches the convergence rate in the original paper of AMSGrad (Theorem 4 and Corollary 1 in \citep{reddi2019convergence}) as well as those in more recent works (e.g., Theorems 4.3, 5.2, Corollaries 4.6, 5.5 in \citet{zhou2018convergence} and Theorems 1,2,3,4 in \citet{defossez2020simple}). We note that a very recent work \citep{ahn2024adam} gives a new convergence result for Adam improving the dependency in the dimension $d$. However, \citet{ahn2024adam} focuses on the nonconvex setting and studies the convergence rate towards a stationary point, which is different from the setting considered here. We will also provide the convergence guarantee of Adam++ for nonconvex optimization in Appendix \ref{sec:nonconvex}. 

\begin{figure*}[h]
\vspace{-2mm}
\centering   
\subfigure[ResNet-18, test accuracy]{\includegraphics[width=0.31\textwidth]{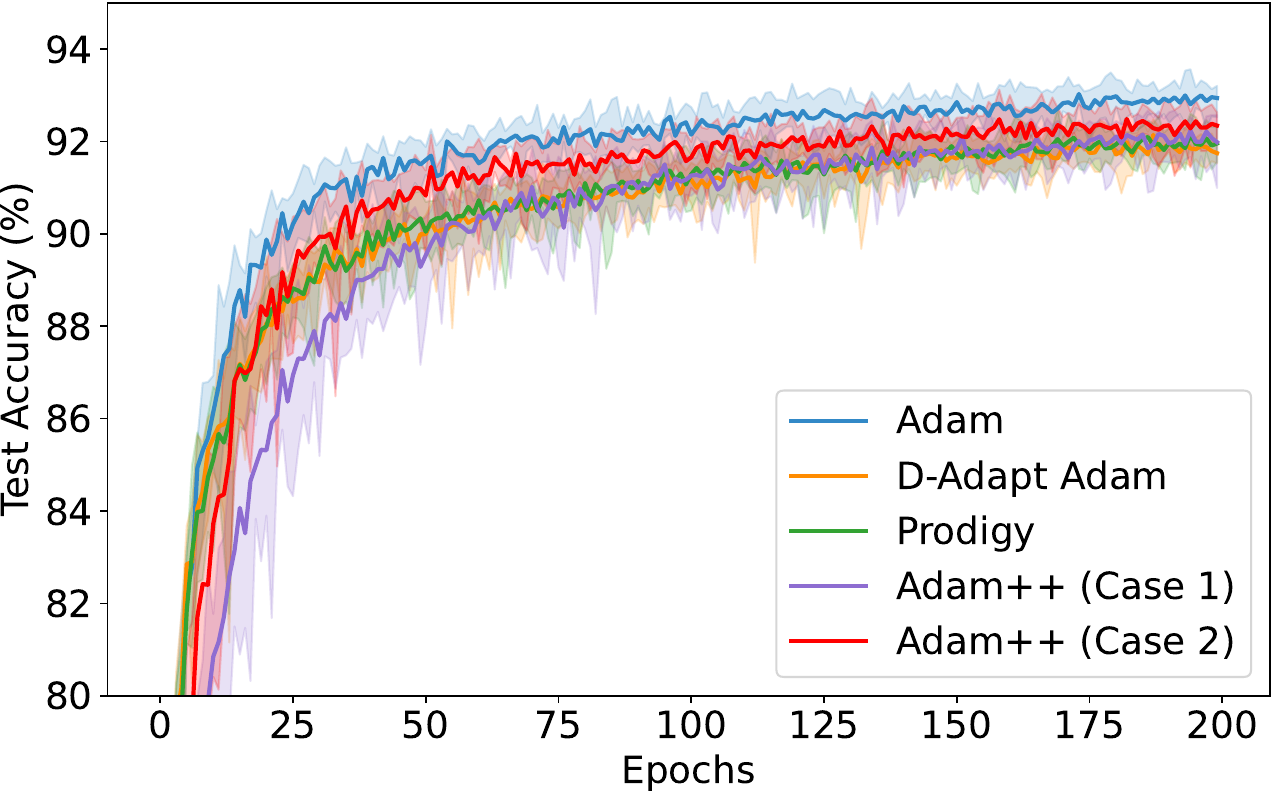}}
\subfigure[ResNet-18, training loss]{\includegraphics[width=0.31\textwidth]{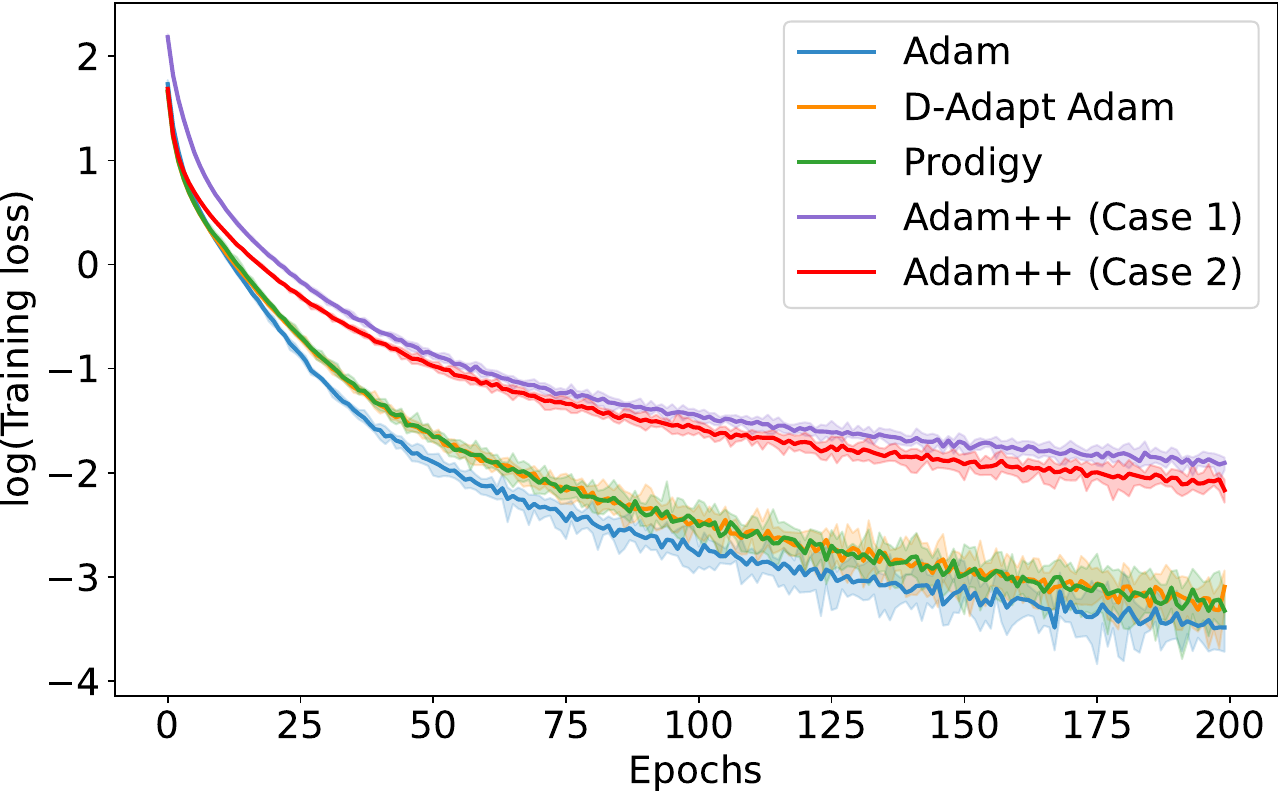}}
\subfigure[ResNet-18, test loss]{\includegraphics[width=0.31\textwidth]{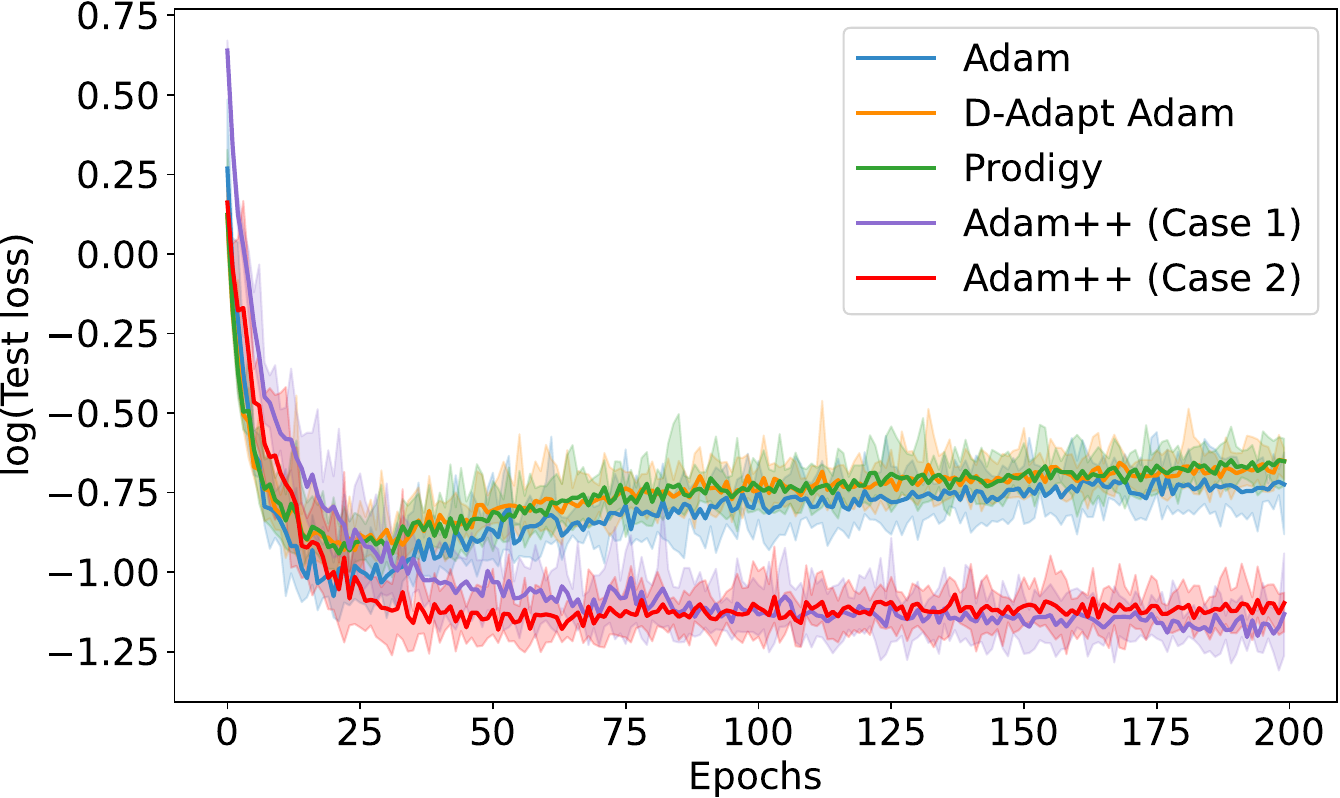}}
\subfigure[VGG16, test accuracy]{\includegraphics[width=0.31\textwidth]{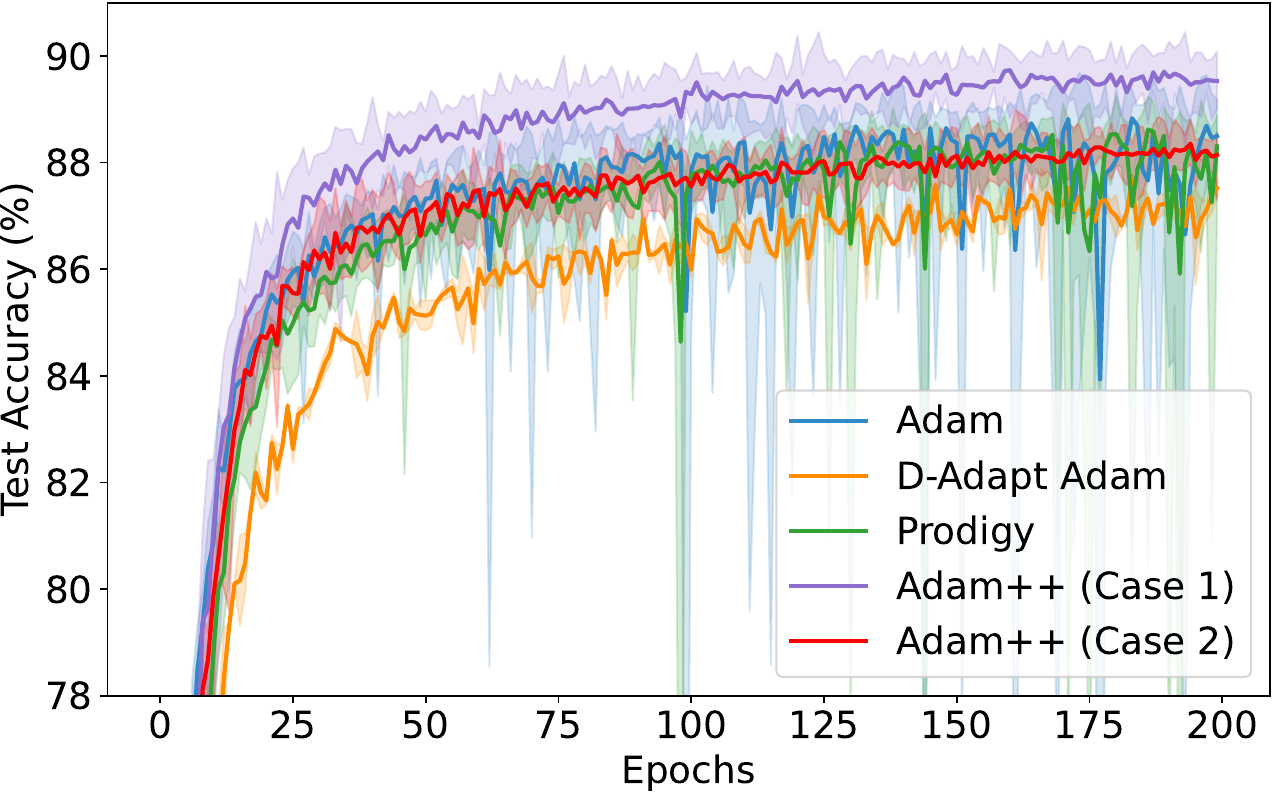}}
\subfigure[VGG16, training loss]{\includegraphics[width=0.31\textwidth]{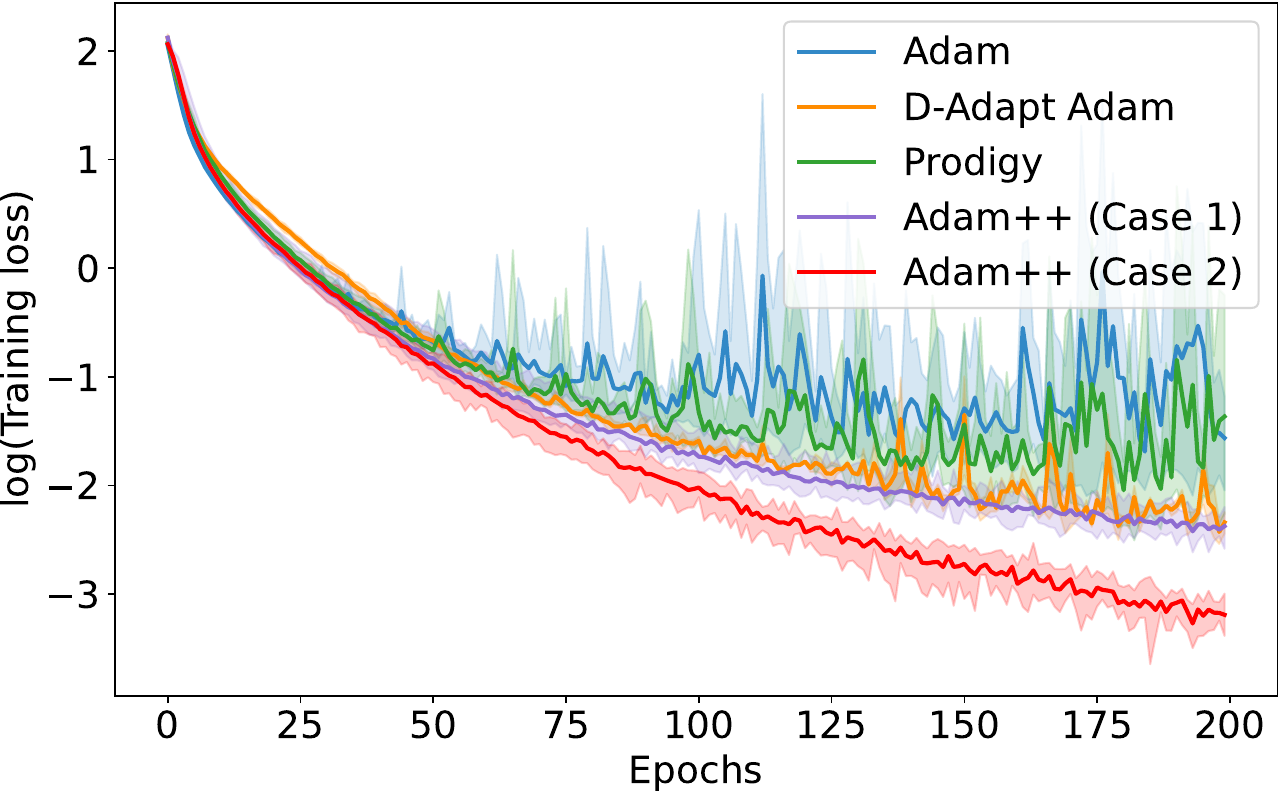}}
\subfigure[VGG16, test loss]{\includegraphics[width=0.31\textwidth]{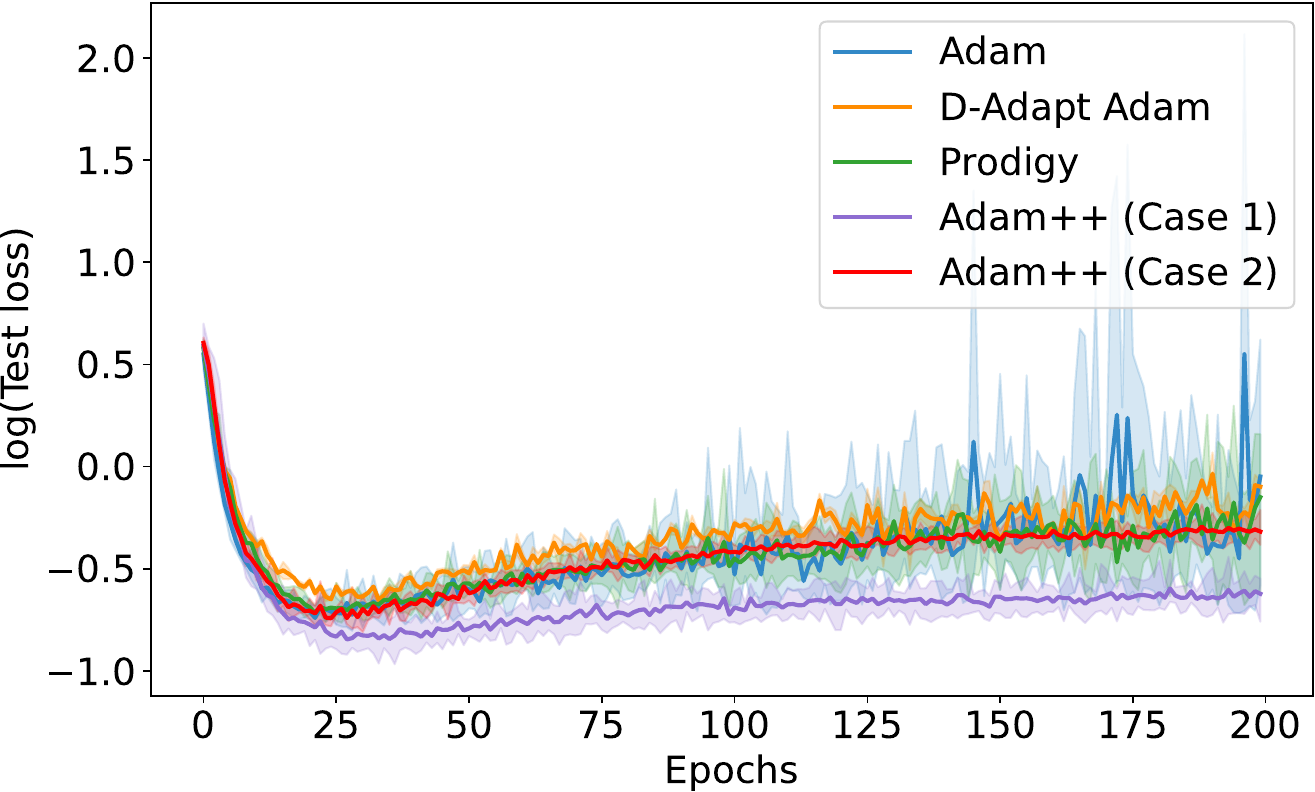}}
\subfigure[DenseNet-121, test accuracy]{\includegraphics[width=0.31\textwidth]{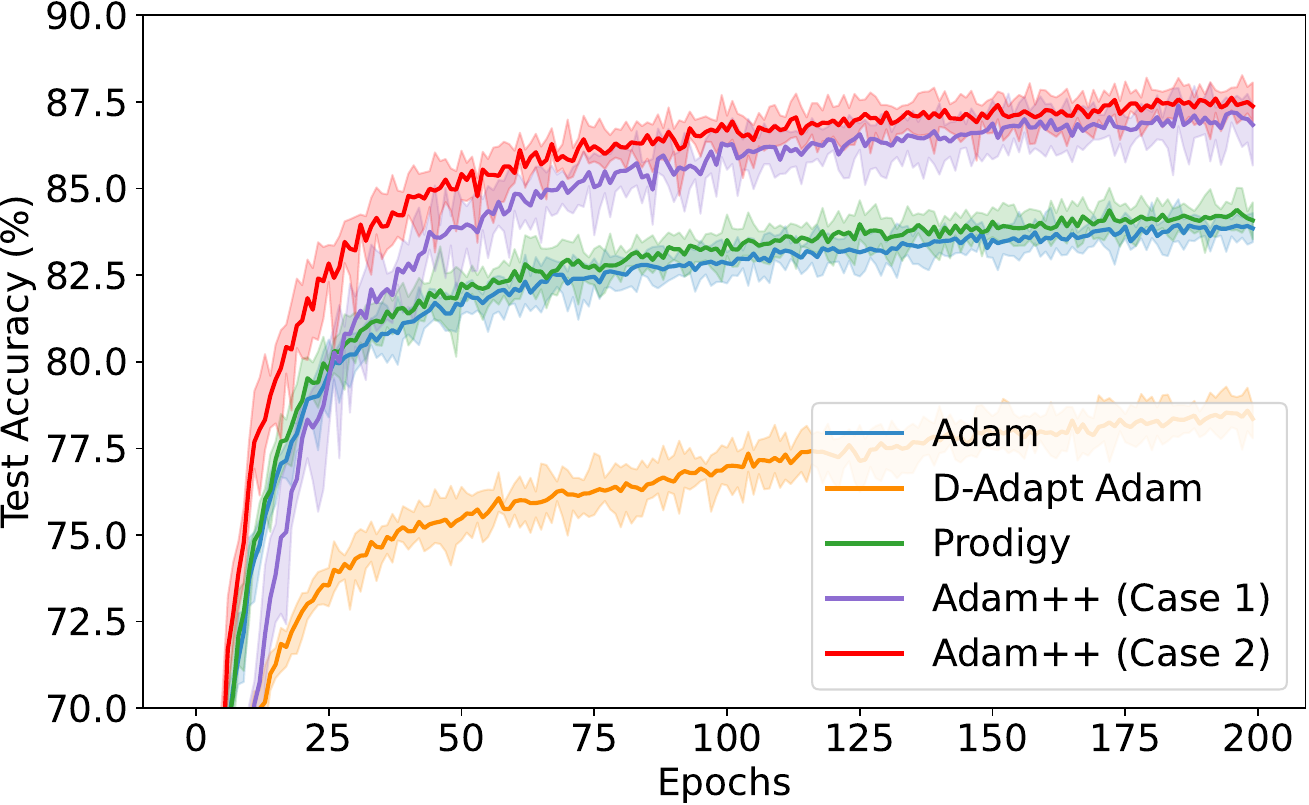}}
\subfigure[DenseNet-121, training loss]{\includegraphics[width=0.31\textwidth]{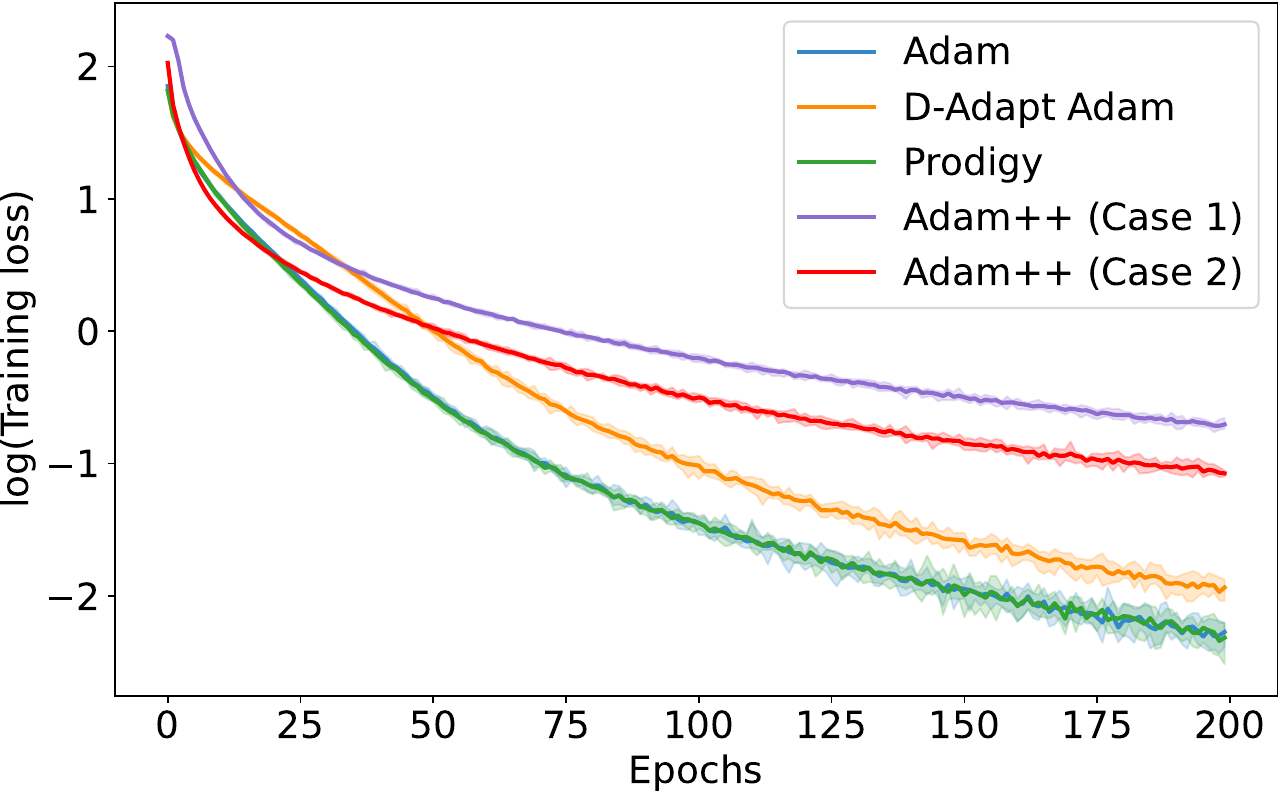}}
\subfigure[DenseNet-121, test loss]{\includegraphics[width=0.31\textwidth]{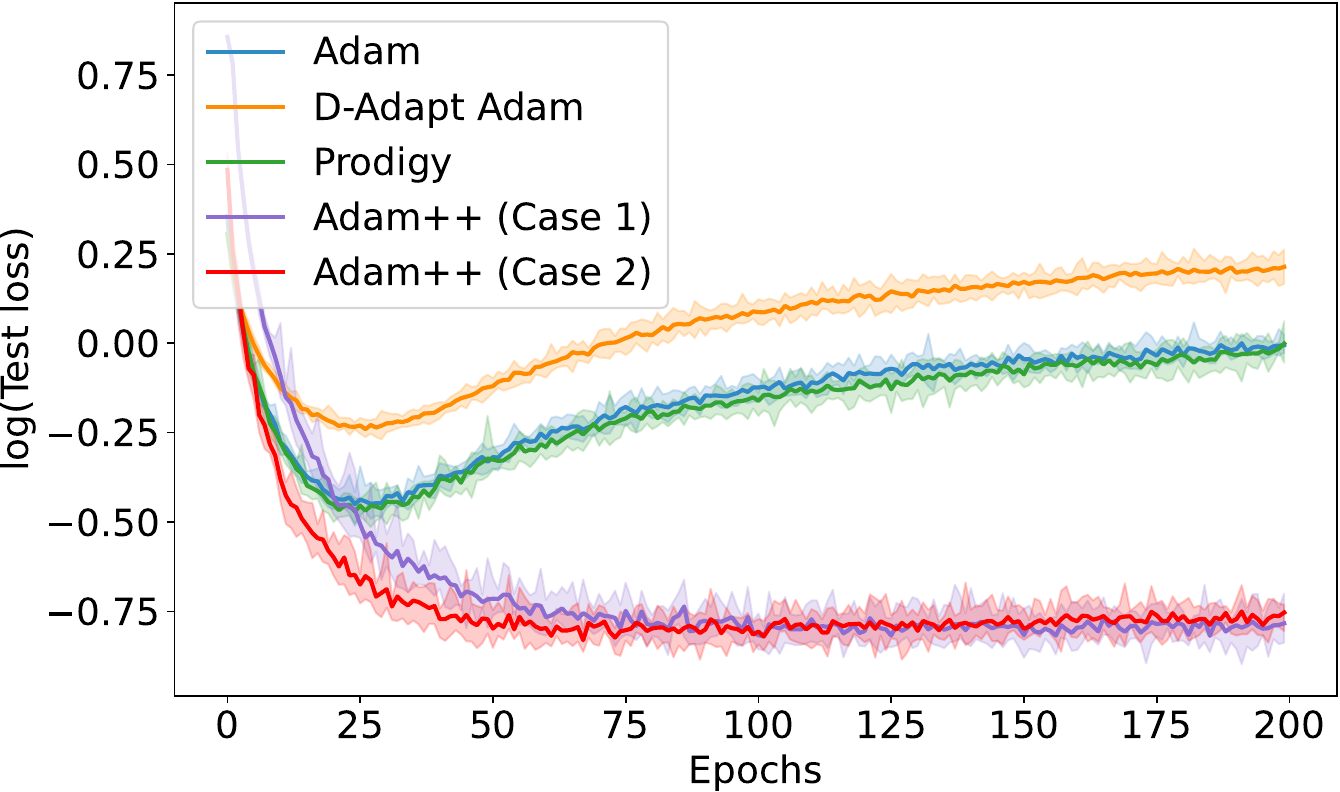}}
\caption{\CC{The results of training ResNet-18, VGG16 and DenseNet-121 on CIFAR-10 with a constant learning rate schedule.}}
\label{fig:constant_plot}

\end{figure*}

\begin{figure*}[ht!]
\vspace{-2mm}
\centering   
\subfigure[ResNet-18, test accuracy]{\includegraphics[width=0.31\textwidth]{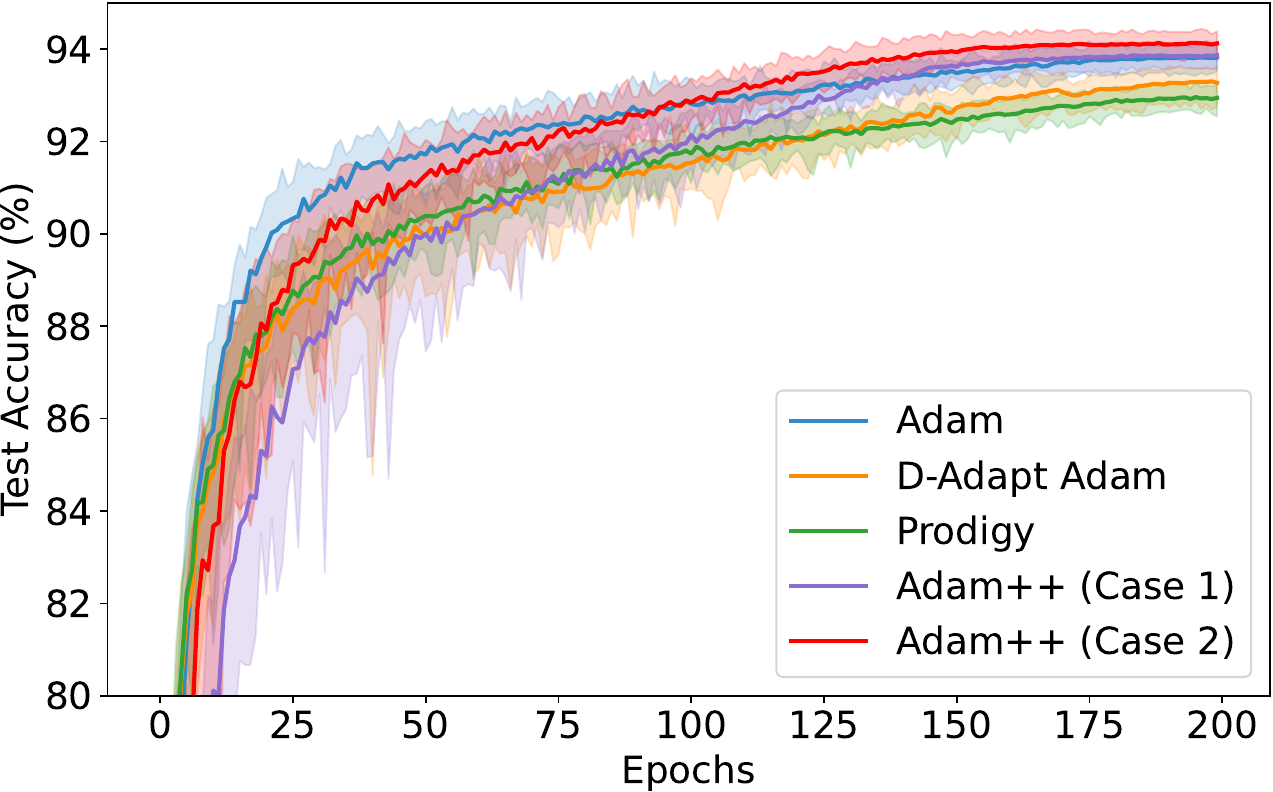}}
\subfigure[ResNet-18, training loss]{\includegraphics[width=0.31\textwidth]{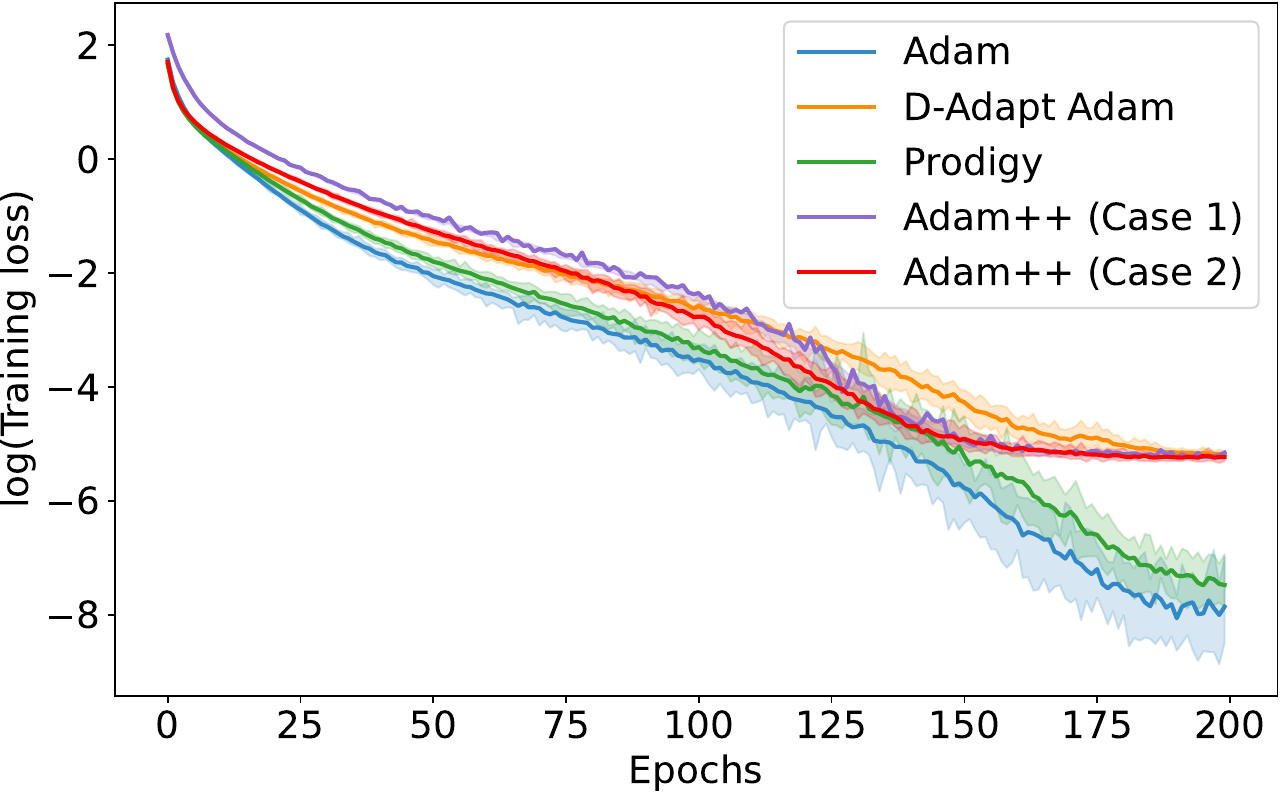}}
\subfigure[ResNet-18, test loss]{\includegraphics[width=0.31\textwidth]{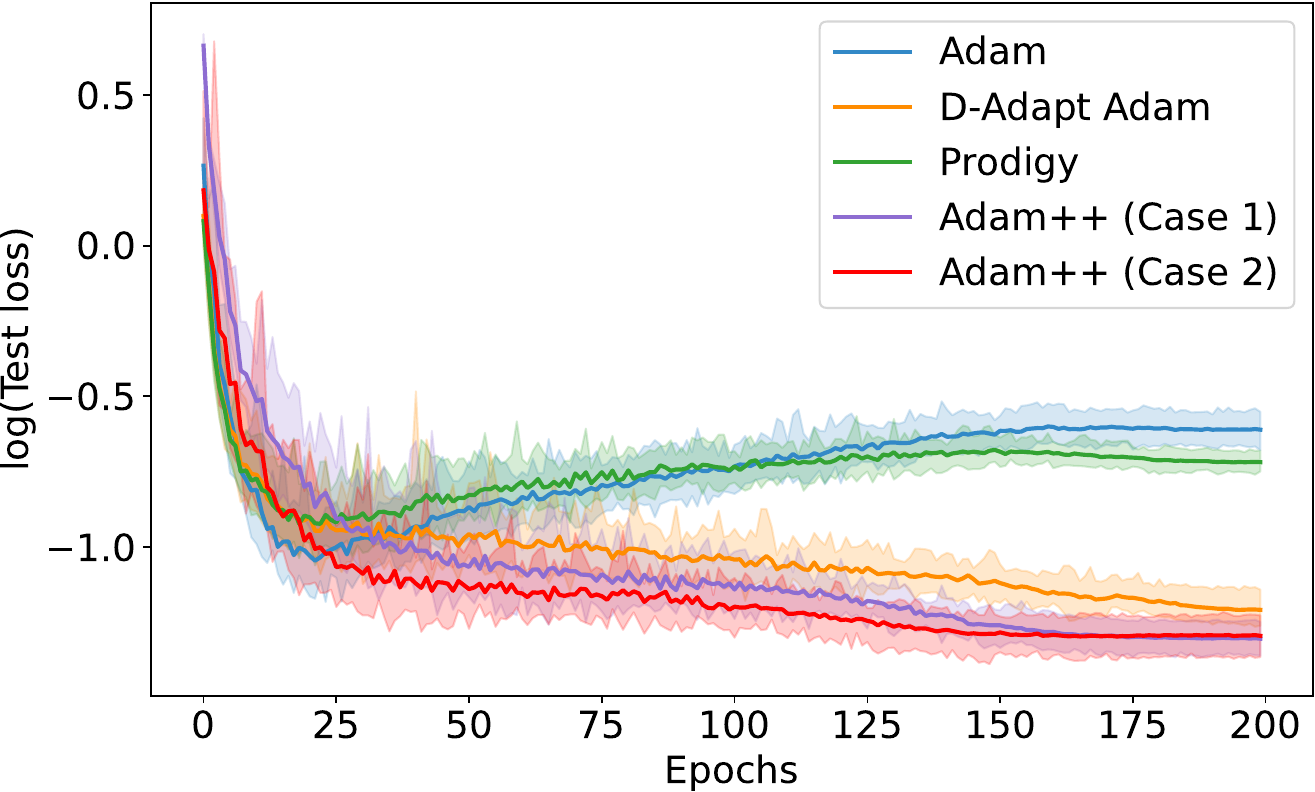}}
\subfigure[VGG16, test accuracy]{\includegraphics[width=0.31\textwidth]{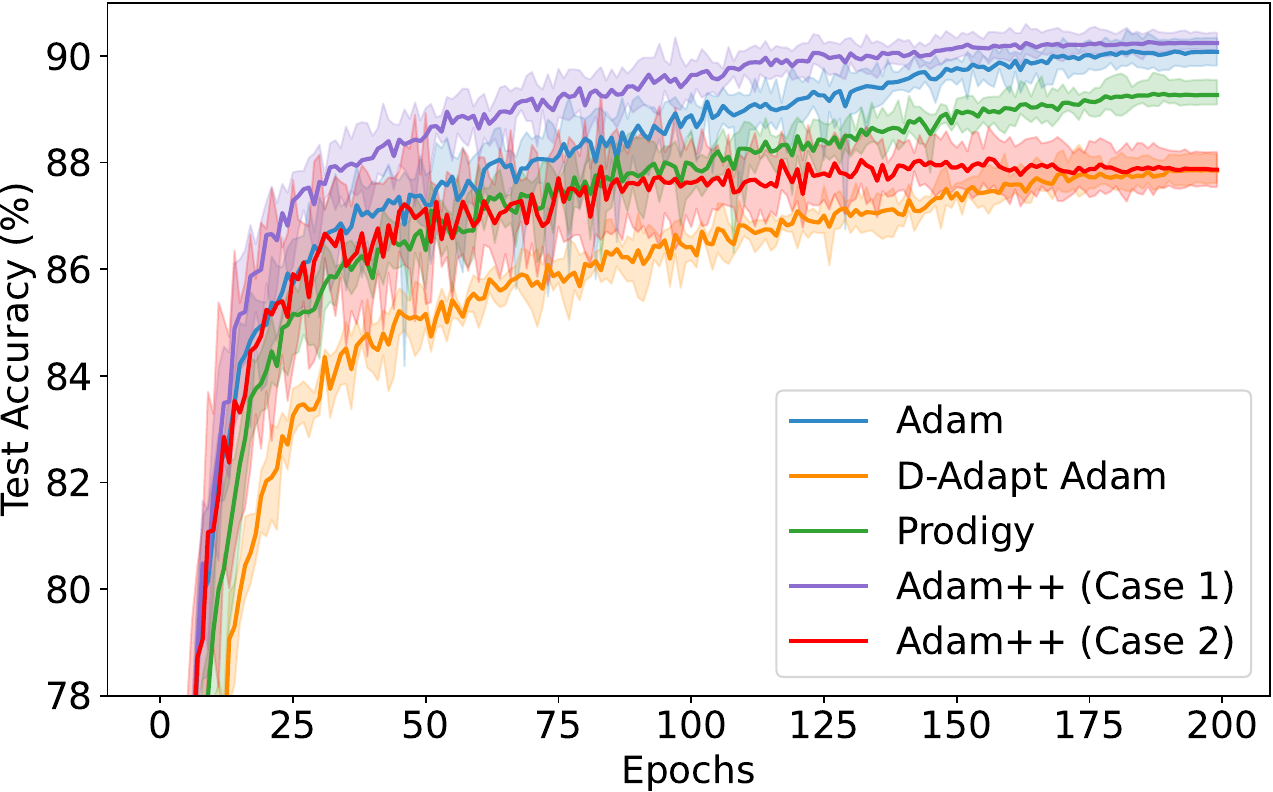}}
\subfigure[VGG16, training loss]{\includegraphics[width=0.31\textwidth]{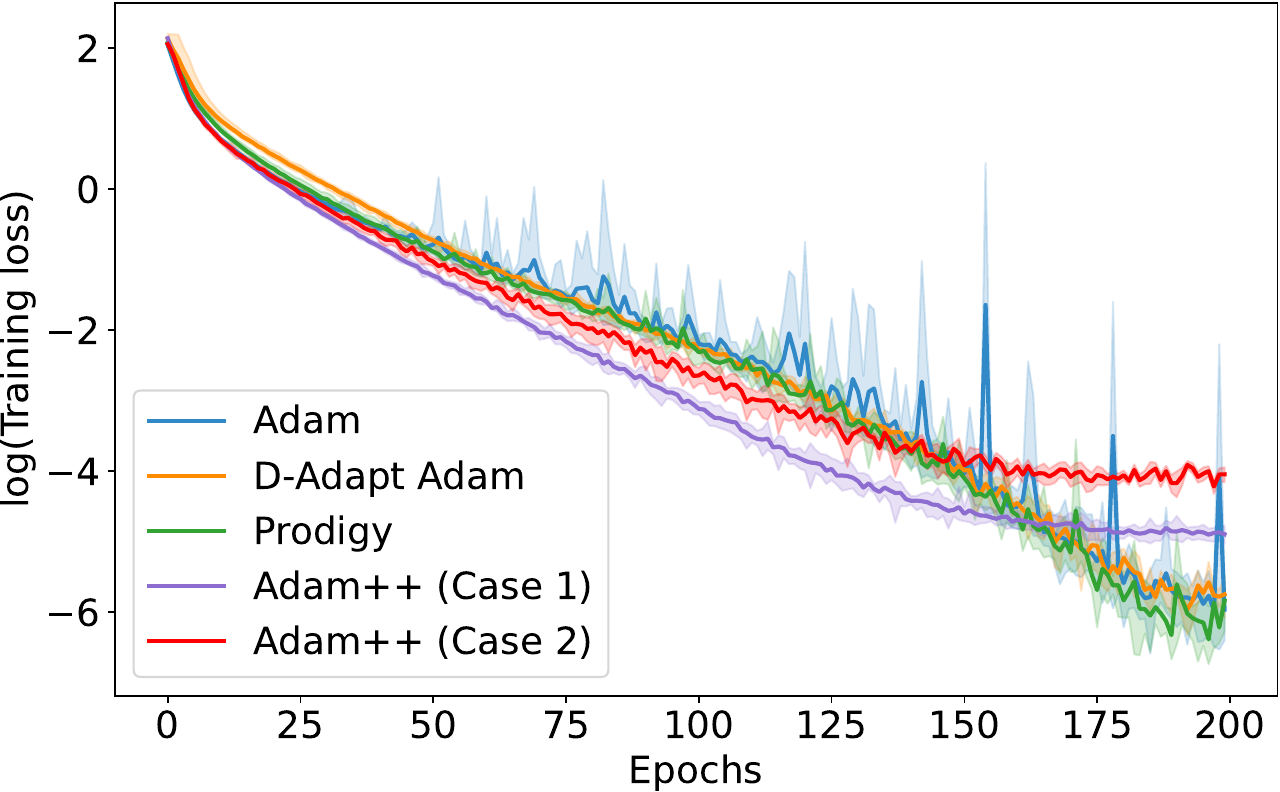}}
\subfigure[VGG16, test loss]{\includegraphics[width=0.31\textwidth]{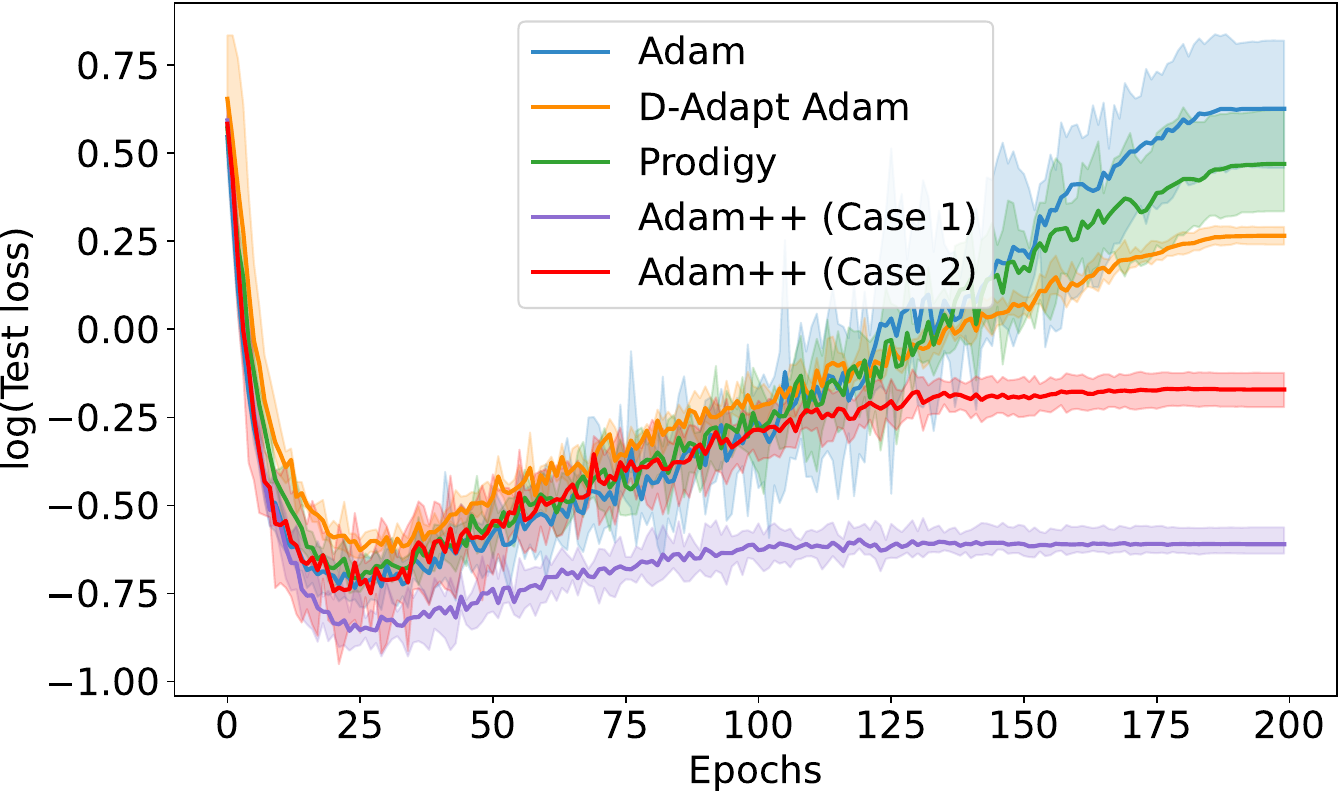}}
\subfigure[DenseNet-121, test accuracy]{\includegraphics[width=0.31\textwidth]{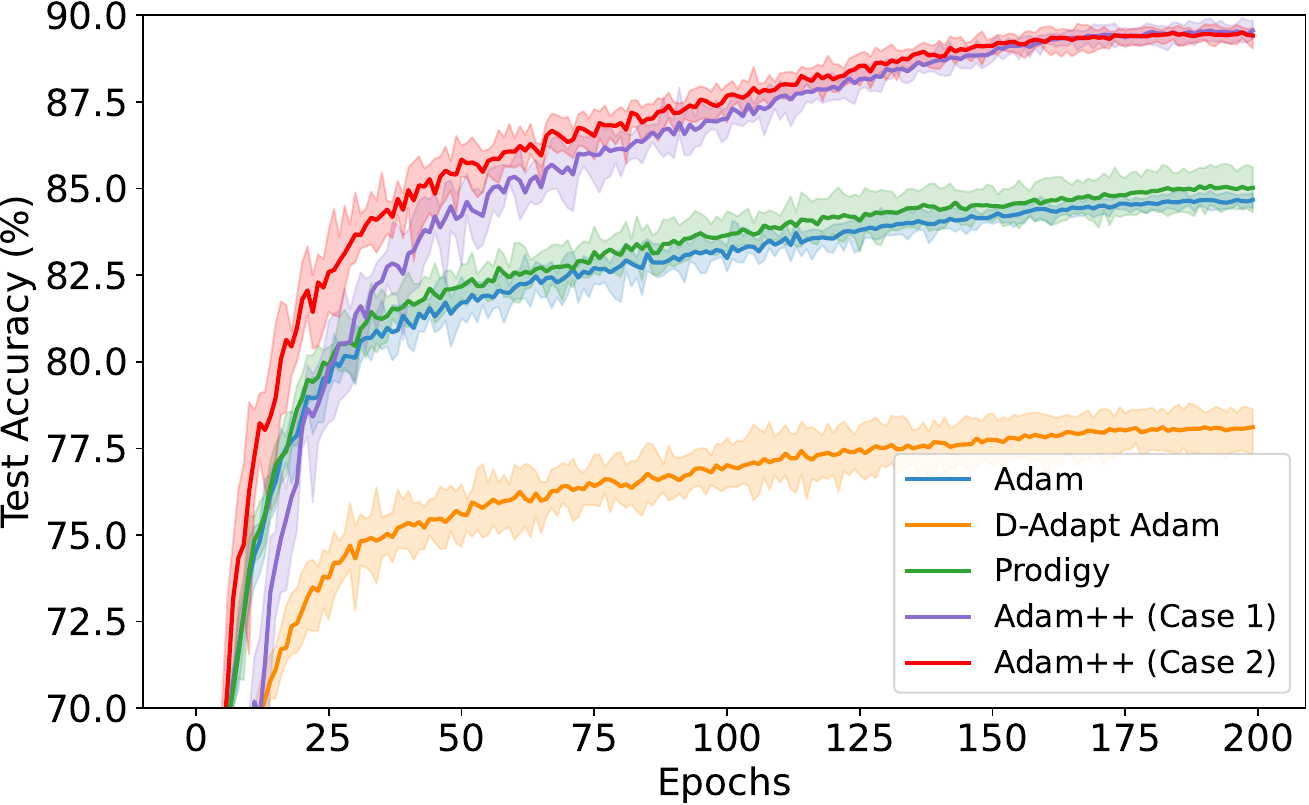}}
\subfigure[DenseNet-121, training loss]{\includegraphics[width=0.31\textwidth]{cifar10_densenet_constant_train.pdf}}
\subfigure[DenseNet-121, test loss]{\includegraphics[width=0.31\textwidth]{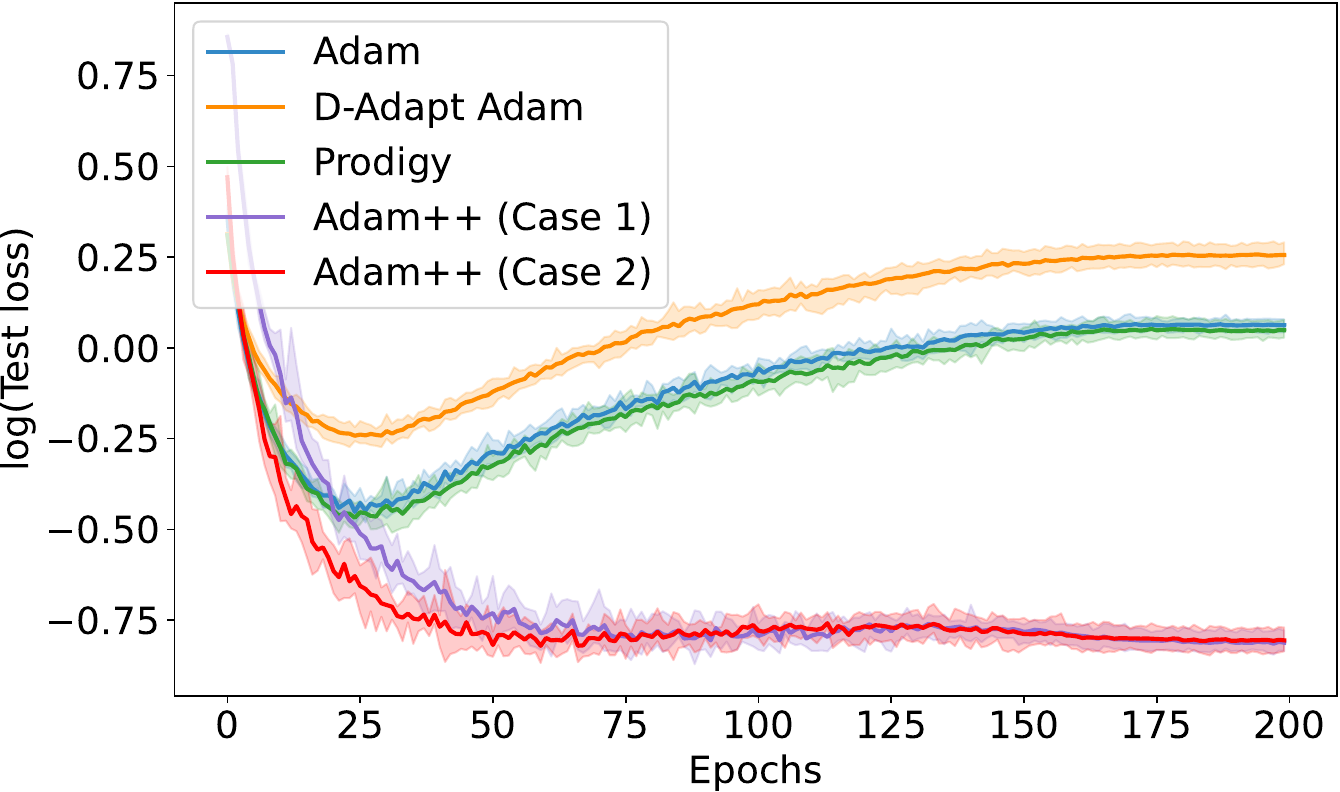}}
\caption{\CC{The results of training ResNet-18, VGG16 and DenseNet-121 on CIFAR-10 with a cosine learning rate schedule. }}
\label{fig:cosine_plot}
\end{figure*}

\section{Experiments}
\label{sec:exp}
In this section, we evaluate the performance of Adam++ across image classification and large language model pretraining tasks to test its efficacy.
For image classification problems, we train models on the CIFAR-10 dataset and CIFAR-100 datasets~\citep{krizhevsky2009learning}. To demonstrate Adam++'s versatility and stability across different network structures, we apply it to neural network architectures including ResNet-18~\citep{he2016deep}, VGG16~\citep{simonyan2014very}, and DenseNet-121~\citep{huang2017densely}. We use AdamW, a version of Adam~\citep{kingma2014adam} with decoupled weight decay\footnote{To compare with D-Adaptation Adam, we still denote "Adam" in the figures.}, as the baseline, and also compare Adam++ against two state-of-the-art parameter-free algorithms: D-Adaptation Adam~\citep{defazio2023learning} and Prodigy~\citep{mishchenko2023prodigy}. 
For large language model pretraining tasks, we use a reproduced GPT-2 model with 125M and 355M parameters respectively on the OpenWebText dataset~\citep{Gokaslan2019OpenWeb}. Our training settings are based on those from NanoGPT and Sophia~\citep{liu2023sophia}. Other experiments including large language model experiments and ablation study can be seen in Appendix~\ref{sec:additional-exp}. We omit the experiments for AdaGrad++ as we found it consistently underperforms compared to Adam and Adam++, despite being better than AdaGrad. 



\subsection{Image Classification}\label{sec:image}

We aim to compare the optimization algorithms in a setting with minimal or no parameter tuning.
On ResNet-18, VGG16 and DenseNet-121, we run the baseline AdamW optimizer with learning rate searched from $1 \times 10^{-4}$ to $1\times 10^{-2}$. For all parameter-free algorithms, including DAdapt Adam, Prodigy, and Adam++, although there is no learning rate choice required, we set a base learning rate factor that can be applied on top of the adaptive learning rate, as introduced in~\citet{ivgi2023dog, mishchenko2023prodigy, defazio2023learning}. For these parameter-free optimizers, we search for the base learning rate factor ranging from 0.1 to 3.0, while keeping all other parameters consistent with those of AdamW, ensuring a fair comparison.~\footnote{We discuss the reason for learning rate search on parameter-free algorithms in Appendix~\ref{sec:lr_search}.}. Moreover, we repeat these experiments for 8 times with different random seeds to reduce the effect of randomness, and we plot the range of these curves with colored shadows. 
We provide a detailed list of all training parameters in Appendix~\ref{app:detail}. And the results show that our algorithm demonstrates superior or comparable performances to other algorithms.

\noindent\textbf{Constant Learning Rate Schedule} Figure \ref{fig:constant_plot} illustrates the training loss, test loss and test accuracy curves against training epochs on the CIFAR-10 dataset for various network architectures and algorithms.
The task is challenging due to the use of a fixed learning rate throughout all epochs.
For the Adam++ algorithm, we implement both variants. It can be observed that Adam++ demonstrates superior or comparable performance to baseline parameter-free optimization methods, including Prodigy and D-Adaptation Adam. Moreover, Adam++ either matches or surpasses Adam's performance. On DenseNet-121, Adam++ even achieves much higher test accuracies than AdamW.

\noindent\textbf{Cosine Learning Rate Schedule} 
In addition to the learning rates found by parameter-free algorithms, it is common to apply an additional learning rate schedule on top of that according to ~\citep{ivgi2023dog, mishchenko2023prodigy, defazio2023learning}. Figure~\ref{fig:cosine_plot} provides a comparison of our algorithm with other baselines with cosine scheduler. This annealed schedule aids in stabilizing training by being more cautious near the optimal point. Under the annealed setting, all the algorithms exhibit improvement over their counterparts with a constant learning rate. Moreover, Adam++ can also achieve comparable or even better performance than AdamW in most cases, and maintain a competitive edge over other parameter-free algorithms including D-Adaptation Adam.

\subsection{Large Language Model (LLM) Pretraining}
\label{sec:llm}
We also pretrain GPT-2 models with 125M and 355M parameters using the OpenWebText dataset. For the baseline, we employ the AdamW optimizer instead of Adam, since empirically AdamW performs better than Adam in LLM tasks. 
For all parameter-free algorithms, including our proposed Adam++, we apply decoupled weight decay to align with AdamW, referring to the adjusted version of Adam++ as AdamW++. In detail, AdamW uses a standard cosine learning rate schedule with 2000 warm-up steps. All parameter-free algorithms use the same hyperparameters and learning rate schedule as AdamW. Additional details for pretraining are provided in Appendix~\ref{app:detail}.

From Figures~\ref{fig:gpt-small} and~\ref{fig:gpt-medium} in the Appendix~\ref{sec:language_model_task_figs}, we observe that AdamW++ outperforms AdamW by 0.02 in both training loss and validation loss on GPT-2 small and GPT-2 medium. In contrast, Prodigy is worse than AdamW on GPT-2 small and matches AdamW on GPT-2 medium, while D-Adapt Adam shows the weakest performance on these tasks. These results emphasize the ability of our algorithm to effectively handle large-scale language modeling tasks.

\section{Conclusion}
In this paper, we propose two simple but effective algorithms, namely AdaGrad++ and Adam++, that are parameter-free variants of AdaGrad and AdamW respectively. We demonstrate that, despite the simple intuition, AdaGrad++ and Adam++ are guaranteed to converge with a reasonable convergence rate, and also perform surprisingly well in various experiments. These theoretical and empirical results highlight the potential of AdaGrad++ and Adam++ to be robust and practical choices for a wide range of optimization tasks. 



\appendix
\renewcommand{\appendixpagename}{\centering \huge Appendix}
\appendixpage
\counterwithin{theorem}{section}

\startcontents[section]
\printcontents[section]{l}{1}{\setcounter{tocdepth}{2}}
\clearpage
\section{Related Work}
In this section, we give a more comprehensive review of the existing literature on parameter-free optimization and adaptive gradient methods.

\noindent\textbf{Parameter-free optimization.} Several recent works have explored parameter-free algorithms based on modifications of the Polyak step size \citep{loizou2021stochastic, gower2021stochastic, orvieto2022dynamics, rolinek2018l4, berrada2020training}. In addition, several studies have investigated step-size selection methods derived from Line-Search algorithms \citep{vaswani2019painless, paquette2018stochastic}. 
Another line of works, including LARS \citep{you2017large}, LAMB \citep{you2017scaling}, Adafactor \citep{simonyan2014very}, and Fromage \citep{bernstein2020distance}, introduced learning rate adjustment schemes based on the norms of iterates.  
Moreover, \citet{chandra2022gradient} proposed a scheme to adjust the learning rates based on certain automatically calculated hypergradients. Several recent works \citep{orabona2017training,chen2022better} have also proposed parameter-free algorithms by reducing the optimization process to a game of betting on a coin.
\CC{Another recent work \citep{kleinsorge2023elra} proposed a novel rotation invariant parameter-free algorithm based on exponential learning rate adaption.}
Finally, a line of recent works \citep{orabona2014simultaneous,kempka2019adaptive} have studied parameter-free algorithms in solving specific learning tasks such as linear and kernel regression. 

\noindent\textbf{Adaptive gradient methods.} There is a large body of work on variants of AdaGrad and Adam. Specifically, RMSProp \citep{kurbiel2017training} was the first work that proposed using an exponential moving average instead of a cumulative sum to handle the second moment in AdaGrad. \citet{reddi2019convergence} pointed out an extreme case where Adam may face convergence issues, and proposed AMSGrad accordingly with convergence guarantees. RMSProp, Adam and AMSGrad have also inspired many variants, including SC-AdaGrad, SC-RMSprop \citep{mukkamala2017variants}, Sadagrad \citep{chen2018sadagrad}, YOGI \citep{zaheer2018adaptive}, Padam \citep{chen2020closing}, and RAdam \citep{liu2019variance}.
More recently, several works such as STORM \citep{cutkosky2019momentum}, adaptive normalized SGD \citep{cutkosky2020momentum}, Adam+ \citep{liu2020adam}, SUPER-ADAM \citet{huang2021super} implemented various variance reduction techniques in Adam. \citet{guo2021novel} presented a novel convergence analysis for a family of Adam-style methods with an increasing momentum parameter for the first-order moment.
\citet{alacaoglu2020new} proposed a new type of framework to analyze the regret of the Adam style methods. \citet{zhou2018convergence} established high-probability convergence guarantees of AdaGrad and Adam in nonconvex optimization. Moreover,~\citet{taniguchi2024adopt}  proposed a new adaptive gradient method, ADOPT, which resolves Adam’s non-convergence issue by removing the current gradient from the second-moment estimate and reordering the momentum and normalization updates. They also proved an $O(1/\sqrt{T})$ convergence
rate with any choice of $\beta_2$ without depending on the bounded noise
assumption. 
\section{Further Discussion on AdaGrad++}
\label{sec:further-adagrad++}
Based on Theorem~\ref{thm:adadog_stochastic}, we have the following corollaries:
\begin{corollary}\label{col:adadog_lipschitz}
    Suppose that the assumptions in Theorem~\ref{thm:adadog_stochastic} hold. Further assume that $ l(\xb) \leq G$ for all $\xb$. Then for any $\xb^*\in \RR^d$, with probability at least $1-\delta$, it holds that 
\begin{align*}
    f(\overline{\xb}_{\tau})- f(\xb^*) \leq \tilde{O}\Bigg(D_{\tau}^2 G \cdot \sqrt{\frac{ d }{ T}}\Bigg),
\end{align*}
where $D_{\tau}=\max_{t\leq \tau}\|\xb_{t} - \xb_*\|_{\infty}$. 
\end{corollary}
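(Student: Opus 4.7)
\textbf{Proof plan for Corollary~\ref{col:adadog_lipschitz}.} The plan is to directly specialize Theorem~\ref{thm:adadog_stochastic} to the uniformly Lipschitz regime and simplify. I would set $L = G$ in the theorem. Under the assumption $l(\xb) \leq G$ for all $\xb$, the event $\{\max_{t \leq T} l(\xb_t) > G\}$ is empty, hence $\mathbb{P}(\max_{t \leq T} l(\xb_t) > L) = 0$ and the conclusion of Theorem~\ref{thm:adadog_stochastic} holds with probability at least $1 - \delta$. Since $\|\gb_t\|_2 \leq l(\xb_t) \leq G$ almost surely, I would then bound
\begin{align*}
\|\sbb_\tau\|_2^2 \;=\; \sum_{t=0}^{\tau} \|\gb_t\|_2^2 \;\leq\; (T+1)\, G^2,
\end{align*}
so that $\|\sbb_\tau\|_2 = O(G\sqrt{T})$. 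To translate the $\ell_2$ diameter $\overline{D}_\tau$ into the $\ell_\infty$ diameter $D_\tau$ that appears in the corollary, I would invoke the standard inequality $\overline{D}_\tau \leq \sqrt{d}\, D_\tau$.

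Next, I would argue that the ambient logarithmic and constant factors all collapse into $\tilde O$. The quantity $\theta_{\tau,\delta} = \log(60\log(6\tau)/\delta)$ is polylogarithmic in $T$ and $1/\delta$. The factor $\log(\eta_T/\eta_0)$ is also polylogarithmic in $T$, since $\eta_T \leq \max_{t \leq T} r_t = \max_{t \leq T}\|\xb_t - \xb_0\|_2/\sqrt{d}$ is polynomially bounded in terms of $D_\tau$, while $\eta_0 = \epsilon$ is a fixed input constant; the prefactor $1/\eta_0$ is likewise absorbed into $\tilde O$. With these observations, the first term in the bound of Theorem~\ref{thm:adadog_stochastic} becomes
\begin{align*}
\frac{D_\tau^2 \sqrt{d}\, \|\sbb_\tau\|_2}{T\eta_0}\,\log\!\bigl(\eta_T/\eta_0\bigr) \;=\; \tilde O\!\left(\frac{D_\tau^2 \sqrt{d}\, G\sqrt{T}}{T}\right) \;=\; \tilde O\!\left(D_\tau^2 G \sqrt{d/T}\right),
\end{align*}
which already matches the claimed rate. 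The second term reduces to
\begin{align*}
\frac{\overline{D}_\tau \sqrt{\theta_{\tau,\delta}\|\sbb_\tau\|_2^2 + L^2\theta_{\tau,\delta}^2}}{T}\,\log\!\bigl(\eta_T/\eta_0\bigr) \;=\; \tilde O\!\left(\frac{\sqrt{d}\, D_\tau \cdot G\sqrt{T}}{T}\right) \;=\; \tilde O\!\left(D_\tau G \sqrt{d/T}\right),
\end{align*}
which is of lower order whenever $D_\tau \gtrsim 1$ and can otherwise be absorbed into the first term up to constants. Summing the two yields the corollary.

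\textbf{Expected obstacle.} The derivation is essentially a bookkeeping exercise rather than an independent argument, so I do not expect any substantively hard step. The one point that needs care is the tacit absorption of $1/\eta_0$ and $\log(\eta_T/\eta_0)$ into $\tilde O$: this relies on treating $\eta_0 = \epsilon$ as a fixed input and on a polynomial-in-$T$ a priori bound for $\eta_T$ coming from $\eta_T \leq \overline{D}_T/\sqrt{d}$. If one instead wanted to expose the explicit $\epsilon$-dependence, the bound would carry an extra $\log(1/\epsilon)/\epsilon$ factor, but under the $\tilde O$ convention of the corollary statement this is suppressed.
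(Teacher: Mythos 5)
Your plan is correct and is essentially the (unwritten) proof of the corollary: the paper provides no separate argument for Corollary~\ref{col:adadog_lipschitz}, and the natural derivation is exactly what you describe — set $L=G$ so the failure event is empty, bound $\|\sbb_\tau\|_2^2=\sum_{k\le\tau}\|\gb_k\|_2^2\le(T+1)G^2$, invoke $\overline D_\tau\le\sqrt d\,D_\tau$, and absorb $\theta_{\tau,\delta}$ and $\log(\eta_T/\eta_0)$ into $\tilde O$. Two small caveats you already flag and that are inherent to the paper's statement rather than to your argument: (i) the $1/\eta_0$ factor in the first term of Theorem~\ref{thm:adadog_stochastic} is a constant in $T$ but not a logarithm, so folding it into $\tilde O$ tacitly treats $\eta_0=\epsilon$ as a fixed input constant; and (ii) the second (noise) term simplifies to $\tilde O\bigl(D_\tau G\sqrt{d/T}\bigr)$, which is only dominated by the displayed $\tilde O\bigl(D_\tau^2 G\sqrt{d/T}\bigr)$ when $D_\tau\gtrsim 1$ — otherwise the corollary's bound should really carry a $\max(D_\tau,D_\tau^2)$. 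Both points reflect the level of informality in the paper's own $\tilde O$ convention, and your derivation matches it faithfully.
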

Corollary~\ref{col:adadog_lipschitz} gives a simplified version of Theorem~\ref{thm:adadog_stochastic} under the special case when $l(\xb) \leq G$. We note that \citet{mishchenko2023prodigy} proposed a parameter-free version of AdaGrad named D-Adapted AdaGrad and established a convergence rate of the order $O( d G_\infty /\sqrt{T} )$, under the assumption that $\| \cG(\xb) \|_\infty \leq G_\infty$. Considering $\| \cG(\xb) \|_2 \leq \sqrt{d}\cdot \| \cG(\xb) \|_\infty$, we have $G \leq \sqrt{d}\cdot G_\infty$, and therefore our result can be reduced to the bound in \citet{mishchenko2023prodigy} when the distance factor $D_{\tau}$ is omitted.

\begin{corollary}\label{col:adadog_better}
    Suppose that the assumptions in Theorem~\ref{thm:adadog_stochastic} hold. Further assume that there exist $G >0$ such that $l(\xb)\leq G$ and $\| \sbb_{\tau} \|_2 \leq G\cdot T^{1/2 - \alpha}$ for some $\alpha \in [0,1/2)$. Then for any $\xb^*\in \RR^d$, with probability at least $1-\delta$, it holds that 
\begin{align*}
    f(\overline{\xb}_{\tau})- f(\xb_*) \leq \tilde{O}\Bigg( \frac{ D_{\tau}^2 G \cdot \sqrt{d} }{ T^{1/2 + \alpha}}\Bigg),
\end{align*}
where $D_{\tau}=\max_{t\leq \tau}\|\xb_{t} - \xb_*\|_{\infty}$. 
\end{corollary}
Corollary \ref{col:adadog_better} is a straightforward simplification of Theorem~\ref{thm:adadog_stochastic} under the additional condition that  $\| \sbb_{\tau} \|_2 \leq G\cdot T^{1/2 - \alpha}$. It suggests that when the key quantity $\| \sbb_{\tau} \|_2$ is smaller than the worst-case $O(\sqrt{T})$ bound, the convergence rate can be faster than $O(1/\sqrt{T})$.

\section{Proof of Theorem~\ref{thm:adadog_stochastic}}

\label{sec:proof-3-2}

Before we start the proof, we provide the necessary background and definitions that will be used in our proof.

\begin{definition}[Bregman Divergence, \citet{bregman1967relaxation}]
Let $\psi:\Omega \to\RR$ be a strictly convex and continuously differentiable function defined on a closed convex set $\Omega$. Then the Bregman divergence is defined as
\begin{align*}
B_{\psi}(\xb,\yb)=\psi(\xb)-\psi(\yb)-\langle \nabla\psi(\yb),\xb-\yb\rangle,\forall \xb,\yb\in\Omega.
\end{align*}
The Bregman distance is the difference between the value of $\psi$ at $\xb$ and the value of the first-order Taylor expansion of $\psi$ around $\yb$ evaluated at point $\xb$. 

From the definition, it is easy to verify the following properties of Bregman divergence:
\begin{itemize}
    \item Gradient at $\xb$: $\nabla_\xb B_{\psi}(\xb,\yb)=\nabla \psi(\xb)-\nabla\psi(\yb)$, $\forall \xb,\yb\in\Omega$.
    \item Three-points identity: $B_{\psi}(\xb,\yb)+B_{\psi}(\yb,\zb)-B_{\psi}(\xb,\zb)=\langle \nabla\psi(\zb)-\nabla\psi(\yb) ,\xb-\yb\rangle$, $\forall \xb,\yb,\zb\in\Omega$.
\end{itemize}
\end{definition}

\begin{proof}[Proof of Theorem~\ref{thm:adadog_stochastic}]\label{sec:proof_adagradpp}
For step $k$, let $\db_k=\xb_k-\xb_*$, and $\psi_k(\xb)=\frac12\langle \xb, \Hb_k \xb\rangle=\frac12\|\xb\|_{\psi_k}^2$. We define a Bregman divergence  $B_{\psi_k}(\xb,\yb)=\psi_k(\xb)-\psi_k(\yb) - \langle \nabla \psi_k(\yb), \xb-\yb\rangle= \psi_k(\xb-\yb)$. 
Let $\gb_k=(g_{k,1},\cdots,g_{k,d})^\top,\sbb_k=(s_{k,1},\cdots,s_{k,d})^\top$ and
$\db_k=(d_{k,1},\cdots,d_{k,d})^\top$. 
By the definition of $\xb_{k+1}$, we have
\begin{align*}
    \xb_{k+1}=\text{arg}\min_{\xb}\{ \eta_k \langle \gb_k,\xb\rangle +B_{\psi_k}(\xb,\xb_k)\}.
\end{align*}
We have for all $\xb$ that
\begin{align}\label{eq:proof_convex_sto_eq1}
  \langle \xb-\xb_{k+1}, \eta_k\gb_k+\nabla \psi_k(\xb_{k+1})-\nabla\psi_k(\xb_k)\rangle\geq 0,
\end{align}
where we use the first-order optimality condition and the property of the gradient of Bregman divergence at $\xb$. 
Setting $\xb=\xb^*$ and rearranging the terms, we can then obtain a bound of $\langle \xb_{k+1}-\xb^*,\gb_k\rangle$. Thus we have the inequality by denoting the dual norm
of $\|\cdot\|_{\psi_k}$ by $\|\cdot\|_{\psi_k^*}$ (defined by $\|\xb\|_{\psi_k^*}=\sqrt{\langle \xb, \Hb_k^{-1} \xb\rangle}$):
\begin{align}
    \eta_k\langle \xb_k-\xb^*, \gb_k\rangle&=\eta_k\langle \xb_{k+1}-\xb^*,\gb_k\rangle+\eta_k\langle \xb_k-\xb_{k+1},\gb_k\rangle \nonumber\\
    &\leq\langle \xb^*-\xb_{k+1},\nabla\psi_k(\xb_{k+1})-\nabla\psi_k(\xb_k)\rangle+\psi_k(\mathbf{x}_{k+1} - \mathbf{x}_k) + \psi_k^*(\eta_k \mathbf{g}_k) \nonumber\\
    &=\langle \xb^*-\xb_{k+1},\nabla\psi_k(\xb_{k+1})-\nabla\psi_k(\xb_k)\rangle+B_{\psi_k}(\xb_{k+1},\xb_k)+\frac{1}{2}\eta_k^2\|\gb_k\|_{\psi_k^*}^2 \nonumber\\
    &= B_{\psi_k}(\xb^*,\xb_k)-B_{\psi_k}(\xb^*,\xb_{k+1})+\frac{\eta_k^2}{2}\|\gb_k\|_{\psi_k^*}^2,\label{eq:2_31}
\end{align}
where the inequality follows from \eqref{eq:proof_convex_sto_eq1} and the Cauchy-Schwarz inequality for $\langle \xb_k-\xb_{k+1},\eta_k\gb_k\rangle$, the second equality follows from the definition of $\psi_k$ and $B_{\psi_k}$, and the last equality follows from Three-points identity of Bregman divergence. 
In addition, we define $\overline{\xb}_t:=\frac{1}{\sum_{k=0}^{t-1}\eta_k}\sum_{k=0}^{t-1}\eta_k\xb_k$ and $\Delta_t:=\nabla f(\xb_t)-\gb_t$, then we have
\begin{align}
    f(\overline{\xb}_t)-f(\xb^*)&\leq\frac{1}{\sum_{k=0}^{t-1}\eta_k}\sum_{k=0}^{t-1}\eta_k(f(\xb_k)-f(\xb^*))\nonumber\\
    &\leq \frac{1}{\sum_{k=0}^{t-1}\eta_k}\sum_{k=0}^{t-1}\eta_k\big(\langle \xb_k-\xb_*, \gb_k\rangle+\langle\xb_k-\xb_*,\nabla f(\xb_k)-\gb_k\rangle\big)\nonumber\\
    &\leq \frac{1}{\sum_{k=0}^{t-1}\eta_k}\Bigg\{\underbrace{\sum_{k=0}^{t-1}[B_{\psi_{k}}(\xb^*,\xb_{k})-B_{\psi_k}(\xb^*,\xb_{k+1})]}_{I_1}+\underbrace{\frac{1}{2}\sum_{k=0}^{t-1}\eta_k^2\|\gb_k\|_{\psi_k^*}^2}_{I_2}\nonumber\\&\qquad+\underbrace{\sum_{k=0}^{t-1}\eta_k\langle\xb_k-\xb^*,\Delta_k\rangle}_{noise}\Bigg\},\label{eq:2_32}
\end{align}
where the first inequality holds by the convexity of $f(\xb)$ and Jensen's inequality, the second inequality follows again from the convexity of $f(\xb)$, and the last inequality follows from \eqref{eq:2_31}. 
For $I_1$ on the right-hand side of \eqref{eq:2_32}, denote $D_t=\max_{i\leq t}\|\xb_i-\xb^*\|_{\infty}$, we have

\begin{align}
    \sum_{k=0}^{t-1}B_{\psi_{k}}(\xb^*,\xb_{k})-B_{\psi_k}(\xb^*,\xb_{k+1})&=\sum_{i=1}^d\sum_{k=0}^{t-1}(\delta+s_{k,i})(d_{k,i}^2-d_{k+1,i}^2)/2\nonumber\\
    &\leq \frac{1}{2}D_t^2\sum_{i=1}^d(\delta+s_{t-1,i}),\label{eq:2_34}
\end{align}
where the equality holds by rearranging the terms from the definition of Bregman divergence, and the inequality holds because $d_{k,i}^2\le D_t^2$.

For $I_2$ on the right-hand side of \eqref{eq:2_32}, we have
\begin{align}
    \sum_{k=0}^{t-1}\eta_k^2\|\gb_k\|_{\psi_k^*}^2&\leq \eta_t^2\sum_{i=1}^d\sum_{k=0}^{t-1}\frac{g_{k,i}^2}{s_{k,i}+\delta}\nonumber\\
    &\leq 2\eta_t^2\sum_{i=1}^d s_{t-1,i}\nonumber\\
    &= O(D_t^2\sum_{i=1}^ds_{t-1,i}),\label{eq:2_35}
\end{align}
where the first inequality holds due to the nondecreasing property of $\eta_t$, and the second inequality holds by using Lemma~\ref{lemma:lemma4_ivgi} and the definition of $s_{k,i}$ for every $i=1,\cdots,d$, and the equality follows from the fact that
\begin{align*}\eta_t &\le \max_{k \le t} \|\mathbf{x}_k - \mathbf{x}_0\|_2 / \sqrt{d} + \epsilon \\
&\le \max_{k \le t} (\|\mathbf{x}_k - \mathbf{x}^*\|_2 + \|\mathbf{x}_0 - \mathbf{x}^*\|_2) / \sqrt{d} + \epsilon \\
&\le 2\max_{k \le t} \|\mathbf{x}_k - \mathbf{x}^*\|_2 / \sqrt{d} + \epsilon \\
&\le 2D_t + \epsilon
\end{align*}
For the $noise$ term of \eqref{eq:2_32}, 
let
$$Y_k=\eta_k\overline{D}_k,\ X_k=\bigg\langle \Delta_k,\frac{\xb_k-\xb_*}{\overline{D}_k}\bigg\rangle,\ {\rm and} \ \widehat{X}_k=\bigg\langle \nabla f(\xb_k),\frac{\xb_k-\xb_*}{\overline{D}_k}\bigg\rangle,$$
then we get 
\begin{align*}
    \sum_{k=0}^{t-1}Y_kX_k=\sum_{k=0}^{t-1}\eta_k\langle \Delta_k,\xb_k-\xb_*\rangle.
\end{align*}
Therefore, by Lemma \ref{lemma:7}, we have
\begin{align}\label{eq:2_noise}
&\mathbb{P}\Bigg(\exists t\leq T:\Bigg|\sum_{k=0}^{t-1}\eta_k\langle\Delta_k,\xb_k-\xb_*\rangle\Bigg|\geq 8\eta_{t-1}\overline{D}_{t-1}\sqrt{\theta_{t\tilde{\delta}}\sum_{i=1}^ds_{t-1,i}^2+L^2\theta_{t,\tilde{\delta}}^2}\Bigg)\nonumber\\
&\leq \mathbb{P}\Bigg(\exists t\leq T:\Bigg|\sum_{k=0}^{t-1}Y_kX_k\Bigg|\geq 8Y_t\sqrt{\theta_{t,\tilde{\delta}}\sum_{k=0}^{t-1}(X_{k}-\widehat{X}_{k})^2+L^2\theta_{t,\tilde{\delta}}^2}\Bigg)\nonumber\\
&\leq \tilde{\delta}+\mathbb{P}(\overline{l}_{T}\geq L),
\end{align}
where $\overline{l}_T=\max_{t\leq T}l(\xb_{t})$ and $\theta_{t,\tilde{\delta}}=\log(\frac{60\log(6t)}{\tilde{\delta}})$.

By substituting \eqref{eq:2_35},\eqref{eq:2_34} and \eqref{eq:2_noise} into \eqref{eq:2_32}, we have that,
for all $\tilde{\delta}\in(0,1)$ and $L>0$, with probability at least $1-\tilde{\delta}-\mathbb{P}(\overline{l}_{T}>L)$, for all $t\leq T$, the optimality gap $f(\overline{\xb}_t)-f_*$ is 
\begin{align}
O \left( \frac{ D_t^2 \sum_{i=1}^d (\delta + s_{t,i})/\eta_t + 8\bar{D}_t \sqrt{\theta_{t,\delta} \sum_{i=1}^d s_{t,i}^2 + L^2\theta_{t,\delta}^2} }{ \sum_{k=0}^{t-1} \eta_k/\eta_t } \right).\label{eq:adagradpp_term0}
\end{align} 
For the numerator, we use the QM-AM inequality to obtain the bound of $\sum_{i=1}^ds_{t,i}\leq\sqrt{d}\|\sbb_t\|_2$, and due to the non-decreasing property of $\eta_t$, we get
\begin{align}
    D_t^2 \sum_{i=1}^d (\delta + s_{t,i})/\eta_t \le \frac{D_t^2 (\delta d + \|\mathbf{s}_t\|_1)}{\eta_0} \le \frac{D_t^2 \delta d}{\eta_0} + \frac{D_t^2 \sqrt{d}}{\eta_0} \|\mathbf{s}_t\|_2,\label{eq:adagradpp_term1}
\end{align}
For the denominator, by applying Lemma~\ref{lemma:lemma3_ivgi} for $\frac{\eta_t}{\sum_{k=0}^{t-1}\eta_k}$ and using the fact that $\eta_0<\eta_t$, we attain that, for $\tau\in\arg\max_{t\le T}\sum_{i=0}^{t-1}\frac{\eta_i}{\eta_t}$, 
\begin{align}
    \frac{\eta_\tau}{\sum_{k=0}^{\tau-1}\eta_k}\le\frac{e\log(\frac{\eta_T}{\eta_0})}{T-\log(\frac{\eta_T}{\eta_0})}\label{eq:adagradpp_term2},
\end{align}
where $\log(\frac{\eta_T}{\eta_0})$ can be absorbed into $T$ factor since $\eta_T$ is bounded by $\bar{D}_T/\sqrt{d}$. By substituting~\eqref{eq:adagradpp_term1} and~\eqref{eq:adagradpp_term2} into\eqref{eq:adagradpp_term0}, we complete the proof.
\end{proof}

\section{Proof of Theorem~\ref{thm:adamdog_stochastic}}
\label{sec:proof-4-1}

\begin{proof}[Proof of Theorem~\ref{thm:adamdog_stochastic}]
For step $k$, we define $\db_k=\xb_k-\xb_*$, and let $\psi_k(\xb)=\langle \xb, \Hb_k \xb\rangle/2$ and $B_{\psi}(\xb,\yb)=\psi(\xb-\yb)$. 
Let $\gb_k=(g_{k,1},\cdots,g_{k,d}),\sbb_k=(s_{k,1},\cdots,s_{k,d}), \vb_k=(v_{k,1},\cdots,v_{k,d})$ and
$\db_k=(d_{k,1},\cdots,d_{k,d})$. 
   From the definition of $\xb_{k+1}$, similar to \eqref{eq:proof_convex_sto_eq1}, we have
\begin{align*}
    \xb_{k+1}=\text{arg}\min_{\xb}\{ \eta_k \langle \mb_k,\xb\rangle +B_{\psi_k}(\xb,\xb_k)\},
\end{align*}
which gives 
\begin{align*}
  \langle \xb-\xb_{k+1}, \eta_k\mb_k+\nabla \psi_k(\xb_{k+1})-\nabla\psi_k(\xb_k)\rangle\geq 0
\end{align*}
for all $\xb$. Setting $\xb=\xb^*$ and rearranging the terms, we can then obtain a bound of $\langle \xb_{k+1}-\xb^*,\mb_t\rangle$. 
Thus similar to~\eqref{eq:2_31}, we have the inequality by denoting the dual norm
of $\|\cdot\|_{\psi_k}$ by $\|\cdot\|_{\psi_k^*}$
\begin{align*}
    \eta_k\langle \xb_k-\xb^*, \mb_k\rangle\leq B_{\psi_k}(\xb^*,\xb_k)-B_{\psi_k}(\xb^*,\xb_{k+1})+\frac{\eta_k^2}{2}\|\mb_k\|_{\psi_k^*}^2.
\end{align*}
Using the fact that $\mb_k=\beta_{1k}\mb_{k-1}+(1-\beta_{1k})\gb_k$, we have
\begin{align}
    \eta_k\langle \xb_k-\xb^*,\gb_k\rangle&\leq\frac{1}{1-\beta_{1k}}(B_{\psi_k}(\xb^*,\xb_k)-B_{\psi_k}(\xb^*,\xb_{k+1}))\nonumber\\
    &\qquad+\frac{\eta_k^2}{2(1-\beta_{1k})}\|\mb_k\|_{\psi_k^*}^2-\frac{\eta_k\beta_{1k}}{1-\beta_{1k}}\langle\xb_k-\xb^*,\mb_{k-1}\rangle\nonumber\\
    &\leq\frac{1}{1-\beta_{1k}}(B_{\psi_k}(\xb^*,\xb_k)-B_{\psi_k}(\xb^*,\xb_{k+1}))+\frac{\eta_k^2}{2(1-\beta_{1k})}\|\mb_k\|_{\psi_k^*}^2\nonumber\\
    &\qquad+\frac{\eta_k^2\beta_{1k}}{2(1-\beta_{1k})}\|\mb_{k-1}\|_{\psi_k^*}^2+\frac{\beta_{1k}}{1-\beta_{1k}}B_{\psi_k}(\xb_k,\xb^*).
    \label{eq:3_2}
\end{align}

By the convexity of $f(\xb)$, we have
\begin{align*}
    f(\overline{\xb}_t)-f(\xb^*)&\leq \frac{1}{\sum_{k=0}^{t-1}\eta_k}\sum_{k=0}^{t-1}\eta_k\langle\xb_k-\xb^*,\nabla f(\xb_k)\rangle \nonumber\\
    &= \frac{1}{\sum_{k=0}^{t-1}\eta_k}(\sum_{k=0}^{t-1}\eta_k\langle\xb_k-\xb^*,\gb_k\rangle+\eta_k\langle\xb_k-\xb^*,\Delta_k\rangle),
    \end{align*}
where $\Delta_t=\nabla f(\xb_t)-\gb_t$. Substituting \eqref{eq:3_2} into the above inequality leads to
\begin{align}
    f(\overline{\xb}_t)-f(\xb^*)
    &\leq\frac{1}{\sum_{k=0}^{t-1}\eta_k}\Bigg\{
    \underbrace{\sum_{k=0}^{t-1}\frac{\big(B_{\psi_k}(\xb^*,\xb_k)-B_{\psi_k}(\xb^*,\xb_{k+1})\big)}{(1-\beta_{1k})}}_{I_1}+\underbrace{\sum_{k=0}^{t-1}\frac{\beta_{1k}}{1-\beta_{1k}}B_{\psi_k}(\xb_k,\xb^*)}_{I_2}\nonumber\\
    &\quad+\underbrace{\sum_{k=0}^{t-1}(\frac{\eta_k^2}{2(1-\beta_{1k})}\|\mb_k\|_{\psi_k^*}^2+\frac{\eta_k^2\beta_{1k}}{2(1-\beta_{1k})}\|\mb_{k-1}\|_{\psi_k^*}^2)}_{I_3}+\underbrace{\sum_{k=0}^{t-1}\eta_k\langle\xb_k-\xb^*,\Delta_k\rangle}_{\text{noise}} \Bigg\}.\label{eq:3_3}
\end{align}

For $I_1$, define $D_t=\max_{i\le t}\|\xb_i-\xb^*\|_\infty$, and then we have
\begin{align}
    \sum_{k=0}^{t-1}\frac{B_{\psi_{k}}(\xb^*,\xb_{k})-B_{\psi_k}(\xb^*,\xb_{k+1})}{1-\beta_{1k}}
    &\leq\sum_{i=1}^d\sum_{k=0}^{t-1}\frac{(\delta+s_{k,i})(d_{k,i}^2-d_{k+1,i}^2)}{2(1-\beta_1)}\nonumber\\
    &\le\sum_{i=1}^d\frac{(\delta+s_{t-1,i})D_t^2}{2(1-\beta_1)},\label{eq:3_4}
\end{align}
where the first inequality holds for the reason that $\beta_{1k}\leq \beta_1$, the second inequality holds for the definition of $D_t$ and the fact that $D_t>d_{k,i}$ for all $k<t$.

For $I_2$, by the definition of $B_{\psi_k}$, we have
\begin{align*}
    B_{\psi_k}(\xb_k,\xb^*)\leq \frac{D_k^2}{2}\sum_{i=1}^d(\delta+s_{k,i}).
\end{align*}
And by the fact of $\beta_1k=\beta_1\lambda^k$, we have
\begin{align}
\sum_{k=0}^{t-1}\frac{\beta_{1k}}{1-\beta_{1k}}B_{\psi_k}(\xb_k,\xb^*)\leq\frac{\beta_1D_t^2}{2(1-\beta_1)(1-\lambda)}\sum_{i=1}^d(\delta+s_{t-1,i}). \label{eq:3_8}
\end{align}

For $I_3$ in the inequality \eqref{eq:3_3}, we give the proofs for the two cases in Algorithm~\ref{alg:adam++} separately. 

\noindent\textbf{Case 1: $\sbb_t=(\sum_{k=0}^t\gb_{k}^2)^{1/2}$.} \\
If we choose the first definition of $\sbb_t$, we have the fact that
\begin{align}
    \|\mb_t\|_{\psi_t^*}^2&=\sum_{i=1}^d\frac{(\sum_{j=0}^t(1-\beta_{1j})\Pi_{s=1}^{t-j}\beta_{1(t-s+1)}g_{j,i})^2}{\delta+\sqrt{\sum_{j=0}^tg_{j,i}^2}}\nonumber\\
    &\leq\sum_{i=1}^d\frac{(\sum_{j=0}^t\Pi_{s=1}^{t-j}\beta_{1(t-s+1)})(\sum_{j=0}^t\Pi_{s=1}^{t-j}\beta_{1(t-s+1)}g_{j,i}^2)}{\sqrt{\sum_{j=0}^tg_{j,i}^2}}\nonumber\\
    &\leq\sum_{i=1}^d\frac{(\sum_{j=0}^t\beta_1^{t-j})(\sum_{j=0}^t\beta_1^{t-j}g_{j,i}^2)}{\sqrt{\sum_{j=0}^tg_{j,i}^2}}\nonumber\\
    &\leq\frac{1}{1-\beta_1}\sum_{i=1}^d\frac{\sum_{j=0}^t\beta_1^{t-j}g_{j,i}^2}{\sqrt{\sum_{j=0}^tg_{j,i}^2}},\label{eq:3_5}
\end{align}
where the first inequality follows from Cauchy-Schwarz inequality; the second inequality is due to the
fact that $\beta_{1j}\leq\beta_1$ for all $j\leq t$; and the third inequality follows from the inequality 
$\sum_{j=1}^t\beta_1^{t-j}\leq1/(1-\beta_1)$. 

By summing the inequalities in \eqref{eq:3_5} from $k=0$ to $k=t-1$, we obtain
\begin{align*}
    \sum_{k=0}^{t-1}\|\mb_k\|_{\psi_k^*}^2&\leq\frac{1}{1-\beta_1}\sum_{i=1}^d\sum_{k=0}^{t-1}\frac{\sum_{j=0}^k\beta_1^{k-j}g_{j,i}^2}{\sqrt{\sum_{j=0}^kg_{j,i}^2}}\\
    &\leq \frac{1}{1-\beta_1}\sum_{i=1}^d\underbrace{\sum_{k=0}^{t-1}\sum_{j=0}^k\frac{\beta_1^{k-j}g_{j,i}^2}{\sqrt{\sum_{s=0}^jg_{s,i}^2}}}_{I_i},\nonumber
    \end{align*}
    where the second inequality holds by the fact that $\sqrt{\sum_{j=0}^kg_{j,i}^2}\ge\sqrt{\sum_{s=0}^jg_{s,i}^2}$ for $j\le k$.
\begin{align*} 
\sum_{k=0}^{t-1}\sum_{j=0}^k\frac{\beta_1^{k-j}g_{j,i}^2}{\sqrt{\sum_{s=0}^j g_{s,i}^2}}&=\sum_{j=0}^{t-1}
\sum_{k=j}^{t-1}
\frac{\beta_1^{k-j}g_{j,i}^2}{\sqrt{\sum_{s=0}^j g_{s,i}^2}}\\
&=
\sum_{j=0}^{t-1}
\sum_{k=0}^{t-1-j}
\frac{\beta_1^{k}g_{j,i}^2}{\sqrt{\sum_{s=0}^j g_{s,i}^2}}\\
&=
\sum_{j=0}^{t-1}
(\sum_{k=0}^{t-1-j}\beta_1^{k})
\frac{g_{j,i}^2}{\sqrt{\sum_{s=0}^j g_{s,i}^2}},
\end{align*}
where the first equality holds by re-arrangement and the second equality holds by replacing $k$ with $k-j$. Noting that $\frac{g_{j,i}^2}{\sqrt{\sum_{k=0}^jg_{k,i}^2}}\leq 2(\sqrt{\sum_{k=0}^jg_{k,i}^2}-\sqrt{\sum_{k=0}^{j-1}g_{k,i}^2})$, then we have
    \begin{align}
    \sum_{k=0}^{t-1}\|\mb_k\|_{\psi_k^*}^2&\leq\frac{1}{1-\beta_1}\sum_{i=1}^d\sum_{j=0}^{t-1}(\sum_{k=0}^{t-1-j}\beta_1^k)\frac{g_{j,i}^2}{\sqrt{\sum_{s=0}^jg_{s,i}^2}}\nonumber\\
    &\leq \frac{2}{(1-\beta_1)^2}\sum_{i=1}^d\sum_{j=0}^{t-1}\sqrt{\sum_{s=0}^jg_{s,i}^2}. \nonumber\\
    &\leq \frac{2}{(1-\beta_1)^2}\|\sbb_{t-1}\|_1,
    \label{eq:3_6}
\end{align}
where the last inequality follows the facts of $\sum_{i=1}^d\sqrt{\sum_{s=0}^jg_{s,i}^2}=\|\sbb_j\|_1$ and the non-decreasing property of $\|\sbb_t\|_1$.


\noindent\textbf{Case 2: $\sbb_t = \sqrt{(t+1)\cdot\max_{k\leq t}(\vb_{k})}$.} \\
If we choose the second form of $\sbb_t$, we have
\begin{align*}
\|\mb_k\|_{\psi_k^*}^2&=\sum_{i=1}^d\frac{m_{k,i}^2}{s_{k,i}}\\&\leq \sum_{i=1}^d\frac{m_{k,i}^2}{\sqrt{(k+1)v_{k,i}}}\\
&= \sum_{i=1}^d\frac{(\sum_{j=0}^k(1-\beta_{1j})\Pi_{s=1}^{k-j}\beta_{1(k-s+1)}g_{j,i})^2}{\sqrt{(k+1)((1-\beta_2)\sum_{j=0}^k\beta_2^{k-j}g_{j,i}^2)}},
\end{align*}
where the first equality and inequality follow by the definitions of $\mb_k$ and $s_{k,i}$, and the second equality follows by the update rule of $m_{k,i}$ and $v_{k,i}$ for Case 2. 

For the definition of $\sbb_t$, by the Cauchy-Schwarz inequality and the fact that $\beta_{1t}\leq \beta_1$, we have
\begin{align*}
   \|\mb_k\|_{\psi_k^*}^2&\leq \sum_{i=1}^d\frac{(\sum_{j=0}^k\Pi_{s=1}^{k-j}\beta_{1(k-s+1)})(\sum_{j=0}^k\Pi_{s=1}^{k-j}\beta_{1(k-s+1)}g_{j,i}^2)}{\sqrt{(k+1)((1-\beta_2)\sum_{j=0}^k\beta_{2}^{k-j}g_{j,i}^2}} \nonumber\\
   &\leq \sum_{i=1}^d\frac{(\sum_{j=0}^k\beta_1^{k-j})(\sum_{j=0}^k\beta_1^{k-j}g_{j,i}^2)}{\sqrt{(k+1)((1-\beta_2)\sum_{j=0}^k\beta_2^{k-j}g_{j,i}^2)}}.\nonumber\\
\end{align*}
Then by applying the inequality that $\sum_{j=0}^k\beta_1^{k-j}\leq 1/(1-\beta_1)$ and denoting $\gamma=\beta_1/\sqrt{\beta_2}<1$, we have
\begin{align*}
   \|\mb_k\|_{\psi_k^*}^2&\leq \frac{1}{(1-\beta_1)\sqrt{(k+1)(1-\beta_2)}}\sum_{i=1}^d\frac{\sum_{j=0}^k\beta_1^{k-j}g_{j,i}^2}{\sqrt{\sum_{j=0}^k\beta_2^{k-j}g_{j,i}^2}}\nonumber\\
   &\leq \frac{1}{(1-\beta_1)\sqrt{(k+1)(1-\beta_2)}}\sum_{i=1}^d\sum_{j=0}^k\frac{\beta_1^{k-j}g_{j,i}^2}{\sqrt{\beta_2^{k-j}g_{j,i}^2}}\nonumber\\
   &\leq \frac{1}{(1-\beta_1)\sqrt{(k+1)(1-\beta_2)}}\sum_{i=1}^d\sum_{j=0}^k\gamma^{k-j}|g_{j,i}|.
\end{align*}
Thus by summing up $\|\mb_t\|_{\psi_t^*}^2$, we have:
\begin{align}
    \sum_{k=0}^t\|\mb_k\|_{\psi_k^*}^2&\leq \sum_{k=0}^t\frac{1}{(1-\beta_1)\sqrt{(k+1)(1-\beta_2)}}\sum_{i=1}^d\sum_{j=0}^k\gamma^{k-j}|g_{j,i}|\nonumber\\
    &=\frac{1}{(1-\beta_1)\sqrt{1-\beta_2}}\sum_{i=1}^d\sum_{k=0}^t|g_{k,i}|\sum_{j=k}^t\frac{\gamma^{j-k}}{\sqrt{j+1}}\nonumber\\
    &\leq \frac{1}{(1-\beta_1)\sqrt{1-\beta_2}}\sum_{i=1}^d\sum_{k=0}^t|g_{k,i}|\sum_{j=k}^t\frac{\gamma^{j-k}}{\sqrt{k+1}}\nonumber\\
    &\leq \frac{1}{(1-\beta_1)\sqrt{1-\beta_2}}\sum_{k=0}^t\frac{\|\gb_k\|_1}{(1-\gamma)\sqrt{(k+1)}},\label{eq:3_7}
\end{align}
where the equality holds by re-arranging the terms. the second inequality holds by the fact that $\sqrt{j+1}\ge\sqrt{k+1}$ when $j\ge k$, and the last inequality holds by the definition of 1-norm and the fact that $\sum_{j=k}^t\gamma^{j-k}\le \sum_{j=0}^\infty\gamma^j=\frac{1}{1-\gamma}$ when $0<\gamma<1$. For the $\text{noise}$ term, we define $\overline{D}_t=\max{k\leq t}\|\db_{k}\|_2$, and let
$$Y_k=\eta_k\overline{D}_k,\ X_k=\bigg\langle \Delta_k,\frac{\xb_k-\xb_*}{\overline{D}_k}\bigg\rangle,\ {\rm and} \ \widehat{X}_k=-\bigg\langle \nabla f(\xb_k),\frac{\xb_k-\xb_*}{\overline{D}_k}\bigg\rangle.$$
Thus we get 
\begin{align*}
    \sum_{k=0}^{t-1}Y_kX_k=\sum_{k=0}^{t-1}\eta_k\langle \Delta_k,\xb_k-\xb_*\rangle.
\end{align*}
Therefore, we have
\begin{align*}
&\mathbb{P}\Bigg(\exists t\leq T:\Bigg|\sum_{k=0}^{t-1}\eta_k\langle\Delta_k,\xb_k-\xb_*\rangle\Bigg|\geq 8\eta_{t-1}\overline{D}_{t-1}\sqrt{\theta_{t,\tilde{\delta}}\sum_{i=1}^ds_{t,i}^2+L^2\theta_{t,\tilde{\delta}}^2}\Bigg)\nonumber\\
& \leq \mathbb{P}\Bigg(\exists t\leq T:\Bigg|\sum_{k=0}^{t-1}Y_kX_k\Bigg|\geq 8Y_t\sqrt{\theta_{t,\tilde{\delta}}\sum_{k=0}^{t-1}(X_k-\widehat{X}_k)^2+L^2\theta_{t,\tilde{\delta}}^2}\Bigg)\\
&\leq \tilde{\delta}+\mathbb{P}(\overline{l}_T\geq L),
\end{align*}
where the last inequality holds by using lemma \ref{lemma:7} and defining $\overline{l}_t=\max_{k\leq t}l(\xb_{k})$.
Thus we have that, for all $\tilde{\delta} \in (0,1)$ and $L>0$, with probability at least $1-\tilde{\delta}-\mathbb{P}(\overline{l}_T>L)$, for all $t\leq T$,
\begin{align}
f(\overline{\xb}_t)-f_*\leq (I_1+I_2+I_3)+\frac{8\overline{D}_t\eta_t}{\sum_{k=0}^{t-1}\eta_k}\sqrt{\theta_{t,\tilde{\delta}}\sum_{i=1}^ds_{t,i}^2+L^2\theta_{t,\tilde{\delta}}^2}. \label{eq:4_16}
\end{align}
Thus by substituting \eqref{eq:3_4},\eqref{eq:3_8}, \eqref{eq:3_6} and \eqref{eq:3_7} into \eqref{eq:4_16}, we obtain 
that for all $t<T$ 
\begin{align*}
f(\overline{\xb}_t)-f_*&\leq
\frac{\eta_t}{\sum_{k=0}^{t-1} \eta_k}\bigg(\frac{D_t^2}{(1-\beta_1)\eta_t}(d\delta+\|\sbb_{t-1}\|_1)+\frac{(1+\beta_{1})\eta_t}{(1-\beta_1)^3}\|\sbb_{t-1}\|_1 \\ \nonumber
&\qquad+\frac{\beta_1D_t^2(d\delta+\|\sbb_{t-1}\|_1)}{2(1-\beta_1)(1-\lambda)\eta_t} +8\overline{D}_t\sqrt{\theta_{t,\tilde{\delta}}\|\sbb_t\|_2^2+L^2\theta_{t,\tilde{\delta}}^2}\bigg)
\end{align*} for case 1, and
\begin{align*}
    f(\overline{\xb}_t)-f_*&\leq \frac{\eta_t}{\sum_{k=0}^{t-1} \eta_k}\bigg(\frac{D_t^2}{(1-\beta_1)\eta_t}(d\delta+\|\sbb_{t-1}\|_1)+\frac{(1+\beta_{1})\eta_t}{2(1-\beta_1)^2\sqrt{1-\beta_2}(1-\gamma)}\sum_{k=0}^{t-1}\frac{\|\gb_k\|_1}{\sqrt{k+1}}\nonumber \\  & 
    \qquad+\frac{\beta_1D_t^2(d\delta+\|\sbb_{t-1}\|_1)}{2(1-\beta_1)(1-\lambda)\eta_t} +8\overline{D}_t\sqrt{\theta_{t,\tilde{\delta}}\|\sbb_{t}\|_2^2+L^2\theta_{t,\tilde{\delta}}^2}\bigg)
\end{align*} for case 2, with probability at least $1-\tilde{\delta}-\mathbb{P}(\overline{l}_T>L)$. By~\eqref{eq:adagradpp_term1} and~\eqref{eq:adagradpp_term2}, for $\tau\in\arg\max_{t\le T}\sum_{i=0}^{t-1}\frac{\eta_i}{\eta_t}$, and absorbing $d\delta$ terms into Big-O notations, we have, 
\begin{align*}
    f(\overline{\xb}_{\tau})-f^*&\leq O\Bigg(\log\bigg(\frac{\eta_{T}}{\eta_0}\bigg)\bigg(\frac{D_{\tau}^2\sqrt{d}\|\sbb_{\tau}\|_2}{(1-\beta_1)\eta_{0}}+\frac{(1+\beta_{1})D_{\tau}\sqrt{d}}{(1-\beta_1)^3}\|\sbb_{\tau}\|_2 \\ \nonumber
 &\qquad+\frac{\beta_1D_{\tau}^2\sqrt{d}\|\sbb_{\tau}\|_2}{2(1-\beta_1)(1-\lambda)\eta_{0}} +8\overline{D}_{\tau}\sqrt{\theta_{\tau,\tilde{\delta}}\|\sbb_{\tau}\|_2^2+L^2\theta_{\tau,\tilde{\delta}}^2}\bigg)/T\Bigg)
 \end{align*}
 for Case 1 and 
\begin{align*}
     f(\overline{\xb}_{\tau})-f^*\leq &O\Bigg(\log\bigg(\frac{\eta_{T}}{\eta_0}\bigg)\bigg(\frac{D_{\tau}^2\sqrt{d}\|\sbb_{\tau}\|_2}{(1-\beta_1)\eta_{0}}+\frac{(1+\beta_{1})D_{\tau}\sqrt{d}}{2(1-\beta_1)^2\sqrt{1-\beta_2}(1-\gamma)}\sum_{k=0}^{\tau-1}\frac{\|\gb_k\|_2}{\sqrt{k+1}}\nonumber \\  & 
     +\frac{\beta_1D_{\tau}^2\sqrt{d}\|\sbb_{\tau}\|_2}{2(1-\beta_1)(1-\lambda)\eta_0} +8\overline{D}_{\tau}\sqrt{\theta_{\tau,\tilde{\delta}}\|\sbb_{\tau}\|_2^2+L^2\theta_{\tau,\tilde{\delta}}^2}\bigg)/T\Bigg)
 \end{align*} 
for Case 2, with probability at least $1-\tilde{\delta}-\mathbb{P}(\overline{l}_T>L)$. Note that in Case 2, we have the summation term $\sum_{k=0}^{\tau-1} \frac{\|\mathbf{g}_k\|_1}{\sqrt{k+1}}$. Using the Cauchy-Schwarz inequality, $\|\mathbf{g}_k\|_1 \le \sqrt{d}\|\mathbf{g}_k\|_2$. Furthermore, this weighted sum of gradients can be bounded by $\mathcal{O}(\|\mathbf{s}_\tau\|_2 \sqrt{\log \tau})$. By absorbing these logarithmic factors into the definitions of the constants $M_1, M_2, M_3$, we recover the simplified unified bound presented in Theorem~\ref{thm:adamdog_stochastic}.


\end{proof}

\section{Convergence Analysis for Nonconvex Optimization}\label{sec:nonconvex}
In this section, we provide convergence guarantees for AdaGrad++ and Adam++ in the nonconvex optimization setting.
\subsection{Preliminaries}

For the parameter $\xb\in\RR^d$, we denote $\xb_*=\text{arg}\min_{\xb}f(\xb)\geq-\infty$. And in this section, $\|\cdot\|$ denotes the $\ell_2$ norm by default. Furthermore, we assume the following assumptions hold for the proposed optimizers:
\begin{assumption}\label{asp:5_1} The gradient of $f(\xb)$ is bounded by a constant $G>0$, i.e., $\|\nabla f(\xb)\|\leq G $. 

This assumption can be potentially relaxed to Assumption~\ref{asp:gradientbound_mild}, which we leave it as a future work.
\end{assumption}
\begin{assumption}\label{asp:5_2} The function $f(\xb)$ is $L$-smooth, i.e.,
    \[f(\xb)\leq f(\yb)+\langle\nabla f(\yb),\xb-\yb\rangle+\frac{L}{2}\|\xb-\yb\|^2, \forall \xb,\yb\in\RR^d.\]
\end{assumption}
Note that if a function $f(\xb)$ is $L$-smooth, we have for any $\xb, \yb \in \RR^d$, $\|\nabla f(\xb) - \nabla f(\yb)\| \leq L \|\xb - \yb\|_2$.

\subsection{Convergence Results for Deterministic nonconvex optimization}
For nonconvex optimization setting of AdaGrad++, we have the following theorem:
\begin{theorem}[Convergence of AdaGrad++]\label{thm:nonconvex_deterministic}
Let $\xb_0,\cdots,\xb_T$ be the iterates of AdaGrad++, and $\gb_0,\cdots,\gb_T$ be the corresponding gradients. Denote $f_k=f(\xb_k),f_*=f(\xb_*)$. Under Assumptions~\ref{asp:5_1} and~\ref{asp:5_2}, for any $t\ge 1$, it holds that
\begin{align*}
    \min_{0\leq k< t}\|\gb_k\|_{1}\leq \frac{\frac{f_0-f_*}{\eta_0}+d\delta+\frac{\tilde{D}_t L}{2}\sum_{i=1}^d(2\log(\frac{G\sqrt{t}}{g_{0,i}})+1)}{\sqrt{t}},
    \end{align*}
where $\tilde{D}_t=\max_{\tau\le t}\|\xb_\tau-\xb_0\|/\sqrt{d}$.
\end{theorem}
\begin{proof}[Proof of Theorem~\ref{thm:nonconvex_deterministic}]
    By Assumption~\ref{asp:5_2}, for all $k\geq 0$ we have
\begin{align*}
    \frac{f_{k+1}-f_k}{\eta_k}&\leq-\langle\nabla f(\xb_k),\Hb_k^{-1}\gb_k\rangle+\frac{\eta_kL}{2}\|\Hb_k^{-1}\gb_k\|^2.
\end{align*}
Thus by rearranging, and by the definition of $\gb_k$, we have
\begin{align}\label{ineq:1}
    \langle\gb_k,\Hb_k^{-1}\gb_k\rangle\leq\frac{f_k-f_{k+1}}{\eta_k}+\frac{\eta_kL}{2}\|\Hb_k^{-1}\gb_k\|^2.
\end{align}
Take the sum of \eqref{ineq:1} from $k=0$ to $t-1$, and by the definition of $\Hb_k$, we have
\begin{align}\label{in:1}
\sum_{k=0}^{t-1}\sum_{i=1}^d\frac{g_{k,i}^2}{s_{k,i}+\delta} \leq\underbrace{\sum_{k=0}^{t-1}\frac{f_k-f_{k+1}}{\eta_k}}_{I_1}+\frac{L}{2}\underbrace{\sum_{k=0}^{t-1}\eta_k\|\Hb_k^{-1}\gb_k\|^2}_{I_2}.
\end{align}
For the $I_1$ on the right-hand side of \eqref{in:1}, by the facts that $\eta_k$ is nondecreasing and $f_*$ is the global minimal value, we have
\begin{align}
\sum_{k=0}^{t-1}\frac{f_k-f_{k+1}}{\eta_k}&=\frac{f_0}{\eta_0}-\sum_{k=1}^{t-1}f_k\cdot \bigg(\frac{1}{\eta_{k-1}}-\frac{1}{\eta_{k}} \bigg)-\frac{f_t}{\eta_{t-1}}\nonumber\\
&\leq \frac{f_0}{\eta_0}-\sum_{k=1}^{t-1}f_*\cdot \bigg(\frac{1}{\eta_{k-1}}-\frac{1}{\eta_{k}} \bigg)-\frac{f_*}{\eta_{t-1}}\nonumber\\
&= \frac{f_0-f_*}{\eta_0}.\label{in:2}
\end{align}
For the $I_2$ in \eqref{in:1}, we have
\begin{align}
\sum_{k=0}^{t-1}\eta_k\|\Hb_k^{-1}\gb_k\|^2&=\sum_{k=0}^{t-1}\eta_k\sum_{i=1}^d\frac{g_{k,i}^2}{(s_{k,i}+\delta)^2}\nonumber\\
&\leq \eta_{t-1}\sum_{i=1}^d\sum_{k=0}^{t-1}\frac{g_{k,i}^2}{s_{k,i}^2}\nonumber\\
&\leq\eta_{t-1}\sum_{i=1}^d\bigg[2\log\bigg(\frac{s_{t,i}}{g_{0,i}}\bigg)+1\bigg]\nonumber\\
&\leq\eta_{t-1}\sum_{i=1}^d\bigg[2\log\bigg(\frac{\sqrt{t}\cdot G}{g_{0,i}}\bigg)+1\bigg],\label{in:3}
\end{align}
where the first inequality holds due to the fact that $\eta_k$ is non-decreasing, the second inequality holds by Lemma~\ref{lemma:logdiff} and applying the fact that $g_{k,i}^2=s_{k,i}^2-s_{k-1,i}^2$, and the third inequality holds follows from the fact that $\sbb_{t,i}\le\sqrt{t}\cdot G$ by the definition of $\sbb_t$ and Assumption~\ref{asp:5_1}.
For the term on the left-hand side of \eqref{in:1}, we have
\begin{align}
\sum_{k=0}^{t-1}\sum_{i=1}^d\frac{g_{k,i}^2}{s_{k,i}+\delta}&\geq
\sum_{i=1}^d\sum_{k=0}^{t-1}\frac{g_{k,i}^2}{s_{t-1,i}+\delta}=\sum_{i=1}^d\frac{s_{t-1,i}^2}{\delta+s_{t-1,i}}=\sum_{i=1}^d (s_{t-1,i}-\delta)\nonumber\\&\geq -\delta d+\sum_{i=1}^d\sum_{k=0}^{t-1}\frac{|g_{k,i}|}{\sqrt{t}}\geq \sqrt{t}\min_{k<t}\|\gb_k\|_1-\delta d
\label{in:4}
\end{align}
where the first inequality holds by the fact that $s_{k,i}$ is nondecreasing, the second inequality holds by the QM--AM inequality, and the last inequality holds by the definition of $\ell_1$ norm. Note that $\eta_t=\max(\eta_{t-1},\|\xb_t-\xb_0\|_2/\sqrt{d})$, we thus complete the proof by substituting inequalities \eqref{in:2}, \eqref{in:3}, \eqref{in:4} into \eqref{in:1}.
\end{proof}
For nonconvex optimization,  we have a similar convergence guarantee for Adam++:


\begin{theorem}[Convergence of Adam++]\label{thm:nonconvex_deterministic_adam++}
 Let $\xb_0,\cdots,\xb_T$ be the iterates of Adam++, and $\gb_0,\cdots,\gb_T$ be the corresponding gradients. Denote $f_k=f(\xb_k),f_*=f(\xb_*)$. Under Assumptions~\ref{asp:5_1} and~\ref{asp:5_2}, with decaying $\beta_{1k}=\beta_1\lambda^{k}$ with $\lambda\in (0,1)$ and $\beta_1<\frac23$, for any $t\ge 1$, it holds that
\begin{align*}
    \min_{0\leq k< t}\|\gb_k\|_{1}\leq \frac{\frac{f_0 - f_*}{\eta_0} + C_\beta +cd\delta + \frac{L \tilde{D}_t}{2(1-\beta_1)} \sum_{i=1}^d \left( 2 \log\left(\frac{G\sqrt{t}}{g_{0,i}}\right) + 1 \right)}{(1-\frac32\beta_1)\sqrt{t}}
\end{align*}
for Case 1, and 
\begin{align*}
    \min_{0 \le k < t} \|\gb_k\|_2^2 \le \frac{\delta + G\sqrt{t}}{ct} \left[ \frac{f_0 - f_*}{\eta_0} + C_\beta + \frac{L \tilde{D}_t d(1 + \log t)}{2(1-\beta_1)(1-\beta_2)} \right]
\end{align*}
for Case 2, where $\tilde{D}_t=\max_{\tau\le t}\|\xb_\tau-\xb_0\|/\sqrt{d}$, $C_\beta=\frac{\beta_1 d G^2}{2 \delta (1-\lambda)}$ and $c=\min_k(1-\frac32 \beta_{1k})$.
\end{theorem}
\begin{proof}
By the $L$-smoothness of $f$, we have the descent lemma for the Adam++ update $\xb_{k+1} - \xb_k = -\eta_k \Hb_k^{-1} \mb_k$:
\begin{align}
    f_{k+1} \le f_k - \eta_k \langle \gb_k, \Hb_k^{-1} \mb_k \rangle + \frac{L \eta_k^2}{2} \|\Hb_k^{-1} \mb_k\|^2\label{lem:descent}
\end{align}

According to the update rule of $\mb_k$
\begin{align*}
    - \langle \gb_k, \Hb_k^{-1} \mb_k \rangle &= - (1-\beta_{1k}) \langle \gb_k, \Hb_k^{-1} \gb_k \rangle - \beta_{1k} \langle \gb_k, \Hb_k^{-1} \mb_{k-1} \rangle\\
    &\le - (1-\beta_{1k}) \langle \gb_k, \Hb_k^{-1} \gb_k \rangle +\frac{\beta_{1k}}{2} \|\gb_k\|_{\Hb_k^{-1}}^2 + \frac{\beta_{1k}}{2} \|\mb_{k-1}\|_{\Hb_k^{-1}}^2\\
    &=- \left(1 - \frac{3\beta_{1k}}{2}\right) \langle \gb_k, \Hb_k^{-1} \gb_k \rangle + \frac{\beta_{1k}}{2} \|\mb_{k-1}\|_{\Hb_k^{-1}}^2,
\end{align*}
where the inequality holds by the Young's inequality and the matrix norm is defined by $\|\xb\|_{A}=\sqrt{\xb^\top A\xb}$ for some positive-definite matrix $A$.

Define $c = \min_k (1 - \frac32\beta_{1k}) \ge 1 - \frac32\beta_1 > 0$. By~\eqref{lem:descent},
\begin{align*}
    c \langle \gb_k, \Hb_k^{-1} \gb_k \rangle \le \frac{f_k - f_{k+1}}{\eta_k} + \frac{\beta_{1k}}{2} \|\mb_{k-1}\|_{\Hb_k^{-1}}^2 + \frac{L \eta_k}{2} \|\Hb_k^{-1} \mb_k\|^2.
\end{align*}

Therefore,
\begin{align}
    \sum_{k=0}^{t-1}c \langle \gb_k, \Hb_k^{-1} \gb_k \rangle \le \sum_{k=0}^{t-1}\frac{f_k - f_{k+1}}{\eta_k} + \sum_{k=0}^{t-1}\frac{\beta_{1k}}{2} \|\mb_{k-1}\|_{\Hb_k^{-1}}^2 + \sum_{k=0}^{t-1}\frac{L \eta_k}{2} \|\Hb_k^{-1} \mb_k\|^2.\label{lemma:adam++_base}
\end{align}
According to~\eqref{in:2}
\begin{align}
    \sum_{k=0}^{t-1} \frac{f_k - f_{k+1}}{\eta_k} \le \frac{f_0 - f_*}{\eta_0}.\label{lemma:adam++_right1}
\end{align}
Since $\mb_{k-1}$ is a convex combination of past gradients, $\|\mb_{k-1}\|_\infty \le G$. Moreover, the preconditioner is bounded below by $\delta$, so $\|\mb_{k-1}\|_{\Hb_k^{-1}}^2 \le \frac{d G^2}{\delta}$. Then, because $\beta_{1k} = \beta_1 \lambda^{k}$ decays exponentially, the sum of this error is finite:
\begin{align}
    \sum_{k=0}^{t-1} \frac{\beta_{1k}}{2} \|\mb_{k-1}\|_{\Hb_k^{-1}}^2 \le \frac{d G^2}{2\delta} \sum_{k=0}^\infty \beta_1 \lambda^{k} = \frac{\beta_1 d G^2}{2 \delta (1-\lambda)} := C_\beta.\label{lemma:adam++_right2}
\end{align} 
For the term $\|\Hb_k^{-1} \mb_k\|^2 \le \sum_{i=1}^d \frac{m_{k,i}^2}{s_{k,i}^2}$, we apply Jensen's inequality. Since $m_k$ is an EMA of $g_k$, we can write $m_k = \sum_{j=0}^k w_{k,j} g_j$ where $w_{k,j}=(1-\beta_{1j})\prod_{s=j+1}^k\beta_{1s}$ and $\sum_{j=0}^k w_{k,j} = 1$.
By convexity of $x^2$, $m_{k,i}^2 \le \sum_{j=0}^k w_{k,j} g_{j,i}^2$. And by exchanging the order of summation, we have
\begin{align*}
    \sum_{k=0}^{t-1} \frac{m_{k,i}^2}{s_{k,i}^2} \le \sum_{j=0}^{t-1} g_{j,i}^2 \sum_{k=j}^{t-1} \frac{w_{k,j}}{s_{k,i}^2} \le \sum_{j=0}^{t-1} \frac{g_{j,i}^2}{s_{j,i}^2} \sum_{k=j}^{t-1} w_{k,j} \le \frac{1}{1-\beta_1} \sum_{j=0}^{t-1} \frac{g_{j,i}^2}{s_{j,i}^2},
\end{align*}
where the second inequality holds because $s_{k,i}^2$ is non-decreasing ($s_{k,i}^2 \ge s_{j,i}^2$ for $k \ge j$). 

For Case 1, by using the fact that $\eta_k \le \eta_{t-1} \le \tilde{D}_{t-1}$, it holds that
\begin{align}
    \sum_{k=0}^{t-1} \frac{L \eta_k}{2} \|\Hb_k^{-1} \mb_k\|^2 &\le \frac{L \tilde{D}_t}{2(1-\beta_1)} \sum_{i=1}^d \sum_{k=0}^{t-1}\frac{g_{k,i}^2}{s_{k,i}^2}\nonumber\\
    &\le \frac{L \tilde{D}_t}{2(1-\beta_1)} \sum_{i=1}^d \left( 2 \log\left(\frac{s_{t,i}}{g_{0,i}}\right) + 1 \right)\nonumber\\
    &\le \frac{L \tilde{D}_t}{2(1-\beta_1)} \sum_{i=1}^d \left( 2 \log\left(\frac{\sqrt{t} G}{g_{0,i}}\right) + 1 \right),\label{lemma:adam++_right3}
\end{align}
where the second inequality holds by Lemma~\ref{lemma:logdiff} and applying the fact that $g_{k,i}^2=s_{k,i}^2-s_{k-1,i}^2$, and the third inequality follows by the fact that $s_{k,i} = \sqrt{\sum_{j=0}^k g_{j,i}^2}$ and $g_{k,i}$ is bounded by $G$. Moreover, since $s_{k,i}$ is non-decreasing, we have $s_{k,i} \le s_{t-1,i}$ for all $k \le t-1$. Since $H_{k,i} = \delta + s_{k,i}$,
\begin{align*}
    \sum_{k=0}^{t-1} \langle g_k, H_k^{-1} g_k \rangle = \sum_{i=1}^d \sum_{k=0}^{t-1} \frac{g_{k,i}^2}{\delta + s_{k,i}} \ge \sum_{i=1}^d \frac{\sum_{k=0}^{t-1} g_{k,i}^2}{\delta + s_{t-1,i}}.
\end{align*}
Applying $s_{t-1,i}^2 = \sum_{k=0}^{t-1} g_{k,i}^2$, we have
\begin{align*}
    \sum_{i=1}^d \frac{s_{t-1,i}^2}{\delta + s_{t-1,i}} = \sum_{i=1}^d \left( s_{t-1,i} - \frac{\delta s_{t-1,i}}{\delta + s_{t-1,i}} \right) \ge \sum_{i=1}^d (s_{t-1,i} - \delta).
\end{align*}

By the QM-AM inequality, we know $s_{t-1,i} = \sqrt{\sum_{k=0}^{t-1} g_{k,i}^2} \ge \frac{1}{\sqrt{t}} \sum_{k=0}^{t-1} |g_{k,i}|$. Thus,
\begin{align}
    \sum_{i=1}^d (s_{t-1,i} - \delta) \ge \frac{1}{\sqrt{t}} \sum_{k=0}^{t-1} \|g_k\|_1 - d\delta \ge \sqrt{t} \min_{0 \le k < t} \|g_k\|_1 - d\delta.\label{lemma:adam++_left}
\end{align}

Substituting~\eqref{lemma:adam++_right1},~\eqref{lemma:adam++_right2},~\eqref{lemma:adam++_right3} and~\eqref{lemma:adam++_left} into~\eqref{lemma:adam++_base}, we have,
\begin{align*}
    c\sqrt{t} \min_{0 \le k < t} \|g_k\|_1  \le \frac{f_0 - f_*}{\eta_0} + C_\beta + c d\delta + \frac{L \tilde{D}_t}{2(1-\beta_1)} \sum_{i=1}^d \left( 2\log\left(\frac{G\sqrt{t}}{g_{0,i}}\right) + 1 \right)
\end{align*}
By dividing $c\sqrt{t}$ on both sides, we complete the proof for Case 1.

For Case 2, define $\hat{v}_{k,i} = \max_{j \le k}(v_{j,i})$, we have $s_{k,i} = \sqrt{(k+1)\hat{v}_{k,i}}$. According to the update rule, $(1-\beta_2)g_{k,i}^2\le v_{k,i}\le \hat{v}_{k,i}$. Therefore,
\[
\sum_{k=0}^{t-1}\frac{g_{k,i}^2}{s_{k,i}^2}\le \sum_{k=0}^{t-1}\frac{1}{(k+1)(1-\beta_2)}\le \frac{1+\log t}{1-\beta_2}.
\]
Since $\|\Hb_k^{-1} \mb_k\|^2 \le \sum_{i=1}^d \frac{m_{k,i}^2}{s_{k,i}^2}$, and $\sum_{k=0}^{t-1} \frac{m_{k,i}^2}{s_{k,i}^2} \le \frac{1}{1-\beta_1} \sum_{j=0}^{t-1} \frac{g_{j,i}^2}{s_{j,i}^2}$, by using the fact that $\eta_k \le \eta_{t-1} \le \tilde{D}_{t-1}$, we have
\begin{align}
    \sum_{k=0}^{t-1} \frac{L\eta_k}{2} \|\Hb_k^{-1} \mb_k\|^2 \le \frac{L \tilde{D}_t}{2(1-\beta_1)} \sum_{i=1}^d \sum_{k=0}^{t-1} \frac{g_{k,i}^2}{s_{k,i}^2} \le \frac{L \tilde{D}_t d (1 + \log t)}{2(1-\beta_1)(1-\beta_2)}.~\label{lemma:adam++2_right3}
\end{align}

Since $\|\gb_j\|_\infty \le G$, and $v_{j,i}$ is the exponential moving average of $g_{j,i}^2$, we have $v_{j,i} \le G^2$. Therefore, $s_{k,i} \le G\sqrt{k+1} \le G\sqrt{t}$ for all $k < t$. Hence,
\begin{align}
    \sum_{k=0}^{t-1} \langle \gb_k, \Hb_k^{-1} \gb_k \rangle = \sum_{i=1}^d \sum_{k=0}^{t-1} \frac{g_{k,i}^2}{\delta + s_{k,i}} \ge \sum_{i=1}^d \frac{\sum_{k=0}^{t-1} g_{k,i}^2}{\delta + G\sqrt{t}} \ge \frac{t}{\delta + G\sqrt{t}} \min_{0 \le k < t} \|\gb_k\|_2^2.~\label{lemma:adam++2_left}
\end{align}

Substituting~\eqref{lemma:adam++_right1},~\eqref{lemma:adam++_right2},~\eqref{lemma:adam++2_right3} and~\eqref{lemma:adam++2_left} into~\eqref{lemma:adam++_base}, we have
\begin{align*}
    \frac{ct}{\delta + G\sqrt{t}} \min_{0 \le k < t} \|\gb_k\|_2^2 \le \frac{f_0 - f_*}{\eta_0} + C_\beta + \frac{L \tilde{D}_t d(1 + \log t)}{2(1-\beta_1)(1-\beta_2)}.
\end{align*}
By dividing $ \frac{ct}{\delta + G\sqrt{t}}$ on both sides, we complete the proof for Case 2.

\end{proof}

\subsection{Convergence analysis for stochastic nonconvex optimization}
For stochastic optimization, we have more assumptions:
\begin{assumption}\label{asp:variance} Let $\mathcal{F}_t$ be the filtration up to time $t$. The stochastic gradients are unbiased: $\mathbb{E}[\mathbf{g}_t \mid \mathcal{F}_{t-1}] = \nabla f(\mathbf{x}_t)$. And $\mathbb{E}_t\|\gb_t - \nabla f(\xb_t)\|^2 \le \sigma^2$, and $\|\gb_t\|_\infty \le G$, where $\gb_t$ is the stochastic gradient.

This assumption bounds the variance and the scale of the stochastic gradient.
\end{assumption}

\begin{theorem}[Convergence of stochastic AdaGrad++]\label{thm:nonconvex_stochastic_adagrad++}
Let $\xb_0,\cdots,\xb_T$ be the iterates of AdaGrad++. Denote $f_k=f(\xb_k),f_*=f(\xb_*)$. Under Assumptions~\ref{asp:5_1},~\ref{asp:5_2} and~\ref{asp:variance}, for any $t\ge 1$, it holds that
\begin{align*}
    \frac{1}{T+1} \sum_{t=0}^T \mathbb{E}[\|\nabla f_t\|_2^2] \le \frac{\delta + G\sqrt{T}}{\epsilon (T+1)} \left( f_0 - f^* + \frac{d D G^2}{\delta} + \frac{LD^2d}{2}\log\left( 1 + \frac{(T+1)G^2}{\delta^2}\right) \right),
    \end{align*}
where $D=\max(\max_{t\le T}\|\xb_t-\xb_0\|/\sqrt{d},\epsilon)$.
\end{theorem}
\begin{proof}
    Let $\bxi_t = \gb_t - \nabla f_t$. By the $L$-smoothness of $f$, we have the descent lemma:
\begin{align*}
    f_{t+1} \le f_t + \langle \nabla f_t, \xb_{t+1} - \xb_t \rangle + \frac{L}{2} \|\xb_{t+1} - \xb_t\|^2.
\end{align*}

According to the update rule of AdaGrad++, and analyzing it coordinate by coordinate ($i = 1 \dots d$), we have
\begin{align}
    f(\mathbf{x}_{t+1}) &\le f(\mathbf{x}_t) - \sum_{i=1}^d \frac{\eta_t  g_{t,i} \nabla_if_t }{H_{t,i}} + \frac{L}{2} \sum_{i=1}^d \frac{\eta_t^2 g_{t,i}^2}{H_{t,i}^2}\nonumber\\
    &=f(\mathbf{x}_t) - \sum_{i=1}^d \frac{\eta_t  g_{t,i}\nabla_i f_t }{H_{t-1,i}} + \sum_{i=1}^d\eta_t g_{t,i}\nabla_i f_t  \left( \frac{1}{H_{t-1,i}} - \frac{1}{H_{t,i}} \right) + \frac{L}{2} \sum_{i=1}^d \frac{\eta_t^2 g_{t,i}^2}{H_{t,i}^2},\label{equ:stochastic_adagrad_1}
\end{align}
where we define $H_{-1,i}=\delta$. Because $\mathbf{x}_t$ only depends on gradients up to $t-1$, $\eta_t$ is entirely determined by $\mathcal{F}_{t-1}$ and is therefore conditionally deterministic. Taking the expectation conditioned on $\mathcal{F}_{t-1}$, we have
\begin{align*}
    \mathbb{E} \left[ - \frac{\eta_t g_{t,i}\nabla_i f_t }{H_{t-1,i}} \middle| \mathcal{F}_{t-1} \right] = - \frac{\eta_t (\nabla_i f_t)^2}{H_{t-1,i}}
\end{align*}

Since $D=\max(\max_{\tau\le t}\|\xb_t-\xb_0\|/\sqrt{d},\epsilon)$, $\eta_t \le D$. Due to Assumptions~\ref{asp:5_1} and~\ref{asp:variance}, both $|g_{t,i}|$ and $|\nabla_i f|$ are bounded by $G$. Since $\frac{1}{H_{t-1,i}} - \frac{1}{H_{t,i}}$ is strictly non-negative due to the fact that $H_{t,i}$ is monotonically increasing, we have
\begin{align*}
    \left| \eta_t g_{t,i}\nabla_i f_t  \left( \frac{1}{H_{t-1,i}} - \frac{1}{H_{t,i}} \right) \right| \le D G^2 \left( \frac{1}{H_{t-1,i}} - \frac{1}{H_{t,i}} \right).
\end{align*}
Therefore, by summing~\eqref{equ:stochastic_adagrad_1} from $t=0$ to $T$, we have
\begin{align}
    \mathbb{E}[f_{T+1}] \le f_0 - \mathbb{E} \left[ \sum_{t=0}^T \sum_{i=1}^d \frac{\eta_t (\nabla_i f_t)^2}{H_{t-1,i}} \right] + \mathbb{E} \left[ \sum_{i=1}^d \sum_{t=0}^T D G^2 \left( \frac{1}{H_{t-1,i}} - \frac{1}{H_{t,i}} \right) \right] + \mathbb{E} \left[ \sum_{i=1}^d \sum_{t=0}^T \frac{L D^2}{2} \frac{g_{t,i}^2}{H_{t,i}^2} \right]\label{equ:stochastic_adagrad_base}
\end{align}
We now bound the two right-most terms in the equation above. Since $\sum_{t=0}^T \left( \frac{1}{H_{t-1,i}} - \frac{1}{H_{t,i}} \right) \le \frac{1}{H_{-1,i}} = \frac{1}{\delta}$, we have
\begin{align}
    \mathbb{E} \left[ \sum_{i=1}^d \sum_{t=0}^T D G^2 \left( \frac{1}{H_{t-1,i}} - \frac{1}{H_{t,i}} \right) \right]\le \frac{d D G^2}{\delta}.\label{equ:stochastic_adagrad1}
\end{align}

By definition, the preconditioner in AdaGrad++ is diagonal with entries $H_{t,i} = \delta + s_{t,i}$, where the accumulated gradient is $s_{t,i} = \sqrt{\sum_{k=0}^t g_{k,i}^2}$. Therefore, 
\begin{align*}
    \sum_{i=1}^d \sum_{t=0}^T \frac{g_{t,i}^2}{H_{t,i}^2} = \sum_{i=1}^d  \sum_{t=0}^T \frac{g_{t,i}^2}{(\delta + s_{t,i})^2} \le \sum_{i=1}^d \sum_{t=0}^T \frac{g_{t,i}^2}{\delta^2 + s_{t,i}^2},
\end{align*}
where the inequality is due to $\delta > 0$ and $s_{t,i} \ge 0$. According to Lemma~\ref{lemma:logdiff}, we have 
\begin{align*}
    \sum_{t=0}^{T} \frac{g_{t,i}^2}{\delta^2 + s_{t,i}^2}\le \log\left( 1 + \frac{\sum_{t=0}^{T} g_{t,i}^2}{\delta^2} \right)\le\log\left( 1 + \frac{(T+1)G^2}{\delta^2} \right),
\end{align*}
where the last inequality holds since the stochastic gradients are bounded ($\|\gb_t\|_\infty \le G$). Summing this over all $d$ dimensions yields the final bound
\begin{align}
    \sum_{i=1}^d \sum_{t=0}^T \frac{g_{t,i}^2}{H_{t,i}^2}\le d\log\left( 1 + \frac{(T+1)G^2}{\delta^2} \right).\label{equ:stochastic_adagrad2}
\end{align}
Substituting~\eqref{equ:stochastic_adagrad1} and~\eqref{equ:stochastic_adagrad2} into~\eqref{equ:stochastic_adagrad_base}, and using $\mathbb{E}[f_{T+1}] \ge f^*$, we have
\begin{align*}
    \mathbb{E} \left[ \sum_{t=0}^T \sum_{i=1}^d \frac{\eta_t (\nabla_i f_t)^2}{H_{t-1,i}} \right] \le f_0 - f^* + \frac{d D G^2}{\delta} + \frac{LD^2d}{2}\log\left( 1 + \frac{(T+1)G^2}{\delta^2}\right).
\end{align*}

Because $\eta_t$ is monotonically increasing, $\eta_t \ge \eta_0 = \epsilon$. And because $s_{t,i} \le G\sqrt{t}$, we have $H_{t-1,i} \le \delta + G \sqrt{T}$.
Thus, $\frac{\eta_t}{H_{t-1,i}} \ge \frac{\epsilon}{\delta + G \sqrt{T}}$. Therefore,
\begin{align*}
    \frac{\epsilon}{\delta + G \sqrt{T}} \mathbb{E} \left[ \sum_{t=0}^T \|\nabla f_t\|_2^2 \right] \le f_0 - f^* + \frac{d D G^2}{\delta} + \frac{LD^2d}{2}\log\left( 1 + \frac{(T+1)G^2}{\delta^2}\right).
\end{align*}
Finally, dividing both sides by $(T+1)$ yields the minimum expected gradient norm over the trajectory:
\begin{align*} \frac{1}{T+1} \sum_{t=0}^T \mathbb{E}[\|\nabla f_t\|_2^2] \le \frac{\delta + G\sqrt{T}}{\epsilon (T+1)} \left( f_0 - f^* + \frac{d D G^2}{\delta} + \frac{LD^2d}{2}\log\left( 1 + \frac{(T+1)G^2}{\delta^2}\right) \right),
\end{align*}
which completes the proof.
\end{proof}
\begin{theorem}[Convergence of stochastic Adam++]\label{thm:nonconvex_stochastic_adam++}
Let $\xb_0,\cdots,\xb_T$ be the iterates of Adam++ with $\beta,\lambda\in (0,1)$. Denote $f_k=f(\xb_k),f_*=f(\xb_*)$. Under Assumptions~\ref{asp:5_1},~\ref{asp:5_2} and~\ref{asp:variance}, for any $t\ge 1$, it holds that
\begin{align*}
 \frac{1}{T+1} \sum_{t=0}^T \mathbb{E}[\|\nabla f_t\|_2^2] \le \frac{\delta + G\sqrt{T}}{\epsilon(1-\beta_1)(T+1)} \Big[ f_0 - f^*+ \frac{d D G^2}{\delta} + \frac{D d G^2 \beta_1}{\delta(1-\lambda)} + \frac{L D^2 d G^2\beta_1^2}{\delta^2 (1 - \lambda^2)}+\mathcal{O}(LD^2 d\log T) \Big]
\end{align*}
where $D=\max(\max_{\tau\le T}\|\xb_t-\xb_0\|/\sqrt{d},\epsilon)$.
\end{theorem}
\begin{proof}
By the $L$-smoothness of $f$, we have the descent lemma:
\begin{align*}
    f_{t+1} \le f_t - \eta_t \langle \nabla f_t, \Hb_t^{-1} \mb_t \rangle + \frac{L \eta_t^2}{2} \|\Hb_t^{-1} \mb_t\|^2.
\end{align*}

According to the update rule of Adam++, 
\begin{align*}
    - \eta_t \langle \nabla f(\mathbf{x}_t), \mathbf{H}_t^{-1} \mathbf{m}_t \rangle = - \eta_t (1-\beta_{1t}) \langle \nabla f(\mathbf{x}_t), \mathbf{H}_t^{-1} \mathbf{g}_t \rangle - \eta_t \beta_{1t} \langle \nabla f(\mathbf{x}_t), \mathbf{H}_t^{-1} \mathbf{m}_{t-1} \rangle.
\end{align*}
By introducing $\mathbf{H}_{t-1}^{-1}$ (which is strictly $\mathcal{F}_{t-1}$ measurable, and define $\Hb_{-1}=\delta\Ib$), we have
\begin{align*}
    - \eta_t (1-\beta_{1t}) \langle \nabla f_t, \mathbf{H}_t^{-1} \mathbf{g}_t \rangle = - \eta_t (1-\beta_{1t}) \langle \nabla f_t, \mathbf{H}_{t-1}^{-1} \mathbf{g}_t \rangle + \eta_t (1-\beta_{1t}) \langle \nabla f_t, (\mathbf{H}_{t-1}^{-1} - \mathbf{H}_t^{-1}) \mathbf{g}_t \rangle.
\end{align*}
Taking the expectation conditioned on $\mathcal{F}_{t-1}$, and recognizing that $\eta_t$ and $\beta_{1t}$ are $\mathcal{F}_{t-1}$ measurable, we have
\begin{align*}
    \mathbb{E} \left[ - \eta_t (1-\beta_{1t}) \langle \nabla f_t, \mathbf{H}_{t-1}^{-1} \mathbf{g}_t \rangle \mid \mathcal{F}_{t-1} \right] = - \eta_t (1-\beta_{1t}) \sum_{i=1}^d \frac{(\nabla_i f_t)^2}{H_{t-1,i}}.
\end{align*}

Using bounding assumptions ($|\nabla_i f| \le G$, $|g_{t,i}| \le G$, $\eta_t \le D$) and the fact that $H_{t-1,i}^{-1} - H_{t,i}^{-1} \ge 0$, we have
\begin{align*}
    \mathbb{E} \left[ \sum_{t=0}^T \eta_t (1-\beta_{1t}) \langle \nabla f_t, (\mathbf{H}_{t-1}^{-1} - \mathbf{H}_t^{-1}) \mathbf{g}_t \rangle \right]\le D G^2 \mathbb{E} \left[ \sum_{i=1}^d \sum_{t=0}^T \left( \frac{1}{H_{t-1,i}} - \frac{1}{H_{t,i}} \right) \right] \le D G^2 \sum_{i=1}^d \frac{1}{H_{-1,i}} = \frac{d D G^2}{\delta}
\end{align*}

Since $\beta_{1t} = \beta_1 \lambda^{t}$ decays exponentially, and $H_{t,i}^{-1} \le \frac{1}{\delta}$, we have
\begin{align*}
    \mathbb{E} \left[ \sum_{t=0}^T \eta_t \beta_{1t} |\langle \nabla f_t, \mathbf{H}_t^{-1} \mathbf{m}_{t-1} \rangle| \right] \le \frac{D d G^2}{\delta} \sum_{t=0}^T \beta_1 \lambda^{t} \le \frac{D d G^2 \beta_1}{\delta(1-\lambda)}
\end{align*}

Using Young's inequality, $m_{t,i}^2 = ((1-\beta_{1t})g_{t,i} + \beta_{1t}m_{t-1,i})^2 \le 2g_{t,i}^2 + 2 G^2 \beta_1^2 \lambda^{2t}$. Thus:
\begin{align*}
    \EE[\sum_{t=0}^T\frac{L \eta_t^2}{2} \|\Hb_t^{-1} \mb_t\|^2]=\frac{L}{2} \mathbb{E} \left[ \sum_{t=0}^T \eta_t^2 \sum_{i=1}^d \frac{m_{t,i}^2}{H_{t,i}^2} \right] \le L D^2 \mathbb{E} \left[ \sum_{i=1}^d \sum_{t=0}^T \frac{g_{t,i}^2}{H_{t,i}^2} \right] + \frac{L D^2 d G^2}{\delta^2} \sum_{t=0}^T \beta_1^2 \lambda^{2t}
\end{align*}
The second sum is a geometric series bounded by $\frac{L D^2 d G^2\beta_1^2}{\delta^2 (1 - \lambda^2)}$. For Case 1, according to~\eqref{equ:stochastic_adagrad2}, 
\begin{align*}
    \sum_{i=1}^d \sum_{t=0}^T \frac{g_{t,i}^2}{H_{t,i}^2}\le d\log\left( 1 + \frac{(T+1)G^2}{\delta^2} \right).
\end{align*}
For Case 2, define $\hat{v}_{t,i} = \max_{k \le t}(v_{k,i})$, we have
\begin{align*}
    H_{t,i}^2 = \left( \delta + \sqrt{(t+1)\hat{v}_{t,i}} \right)^2 = \delta^2 + 2\delta\sqrt{(t+1)\hat{v}_{t,i}} + (t+1)\hat{v}_{t,i}\ge (t+1)\hat{v}_{t,i}.
\end{align*}
Moreover, according to the update rule, $(1-\beta_2)g_{t,i}^2\le v_{t,i}\le \hat{v}_{t,i}$. Therefore,
\[
\sum_{i=1}^d \sum_{t=0}^T\frac{g_{t,i}^2}{H_{t,i}^2}\le\sum_{i=1}^d \sum_{t=0}^T\frac{1}{(t+1)(1-\beta_2)}\le \frac{d(1+\log(T+1))}{1-\beta_2}.
\]
Finally, we have
\begin{align*}
    \mathbb{E} \left[ \sum_{t=0}^T \sum_{i=1}^d \frac{\eta_t (1-\beta_{1t})}{H_{t-1,i}} (\nabla_i f_t)^2 \right] \le f_0 - f^* + \frac{d D G^2}{\delta} + \frac{D d G^2 \beta_1}{\delta(1-\lambda)} + \frac{L D^2 d G^2\beta_1^2}{\delta^2 (1 - \lambda^2)}+\mathcal{O}(LD^2 d\log T).
\end{align*}
Because $\eta_t$ is monotonically increasing, $\eta_t \ge \eta_0 = \epsilon$. And because $s_{t-1,i} \le G\sqrt{t}$, we have $H_{t-1,i} \le \delta + G \sqrt{T}$.
Thus, $\frac{\eta_t}{H_{t-1,i}} \ge \frac{\epsilon}{\delta + G \sqrt{T}}$. Moreover, since $(1-\beta_{1t}) \ge (1-\beta_1)$,
\begin{align*}
    \frac{\eta_t (1-\beta_{1t})}{H_{t-1,i}} \ge \frac{\epsilon(1-\beta_1)}{\delta + G\sqrt{T}}.
\end{align*}
Therefore,
\begin{align*}
    \frac{\epsilon(1-\beta_1)}{\delta + G\sqrt{T}} \sum_{t=0}^T \mathbb{E}[\|\nabla f_t\|_2^2] \le f_0 - f^* + \frac{d D G^2}{\delta} + \frac{D d G^2 \beta_1}{\delta(1-\lambda)} + \frac{L D^2 d G^2\beta_1^2}{\delta^2 (1 - \lambda^2)}+\mathcal{O}(LD^2 d\log T).
\end{align*}

Finally, dividing both sides by $(T+1)$ yields the minimum expected gradient norm over the trajectory:
\begin{align*}
 \frac{1}{T+1} \sum_{t=0}^T \mathbb{E}[\|\nabla f_t\|_2^2] \le \frac{\delta + G\sqrt{T}}{\epsilon(1-\beta_1)(T+1)} \Big[ f_0 - f^*+ \frac{d D G^2}{\delta} + \frac{D d G^2 \beta_1}{\delta(1-\lambda)} + \frac{L D^2 d G^2\beta_1^2}{\delta^2 (1 - \lambda^2)}+\mathcal{O}(LD^2 d\log T) \Big]
\end{align*}
\end{proof}

\section{Auxiliary lemmas}
\label{sec:aux-lemma}
In this section, we present and summarize three auxiliary lemmas provided by \citet{ivgi2023dog} that provide tools for our proof of the main theorems.  

\begin{lemma}\label{lemma:lemma3_ivgi}[Lemma 3 in \citet{ivgi2023dog}]
    Suppose $0< a_0 \leq a_1\leq\cdots\leq a_T$. Then the following inequality holds:
    \begin{align}
        \max_{t\leq T}\sum_{\tau < t}\frac{a_\tau}{a_t}\geq \frac{1}{e} \bigg(\frac{T}{\log_+(a_T/a_0)}-1 \bigg)\nonumber
    \end{align}
\end{lemma}

\begin{lemma}\label{lemma:lemma4_ivgi}[Lemma 4 in \citet{ivgi2023dog}]
    Suppose $0< a_0 \leq a_1\leq\cdots\leq a_T$. Then the following inequality holds:
    \begin{align}
        \sum_{k=1}^t\frac{a_k-a_{k-1}}{\sqrt{a_k}}\leq 2(\sqrt{a_t}-\sqrt{a_0}).\nonumber
    \end{align}
\end{lemma}



\begin{lemma}\label{lemma:7}[Lemma~7 in \citet{ivgi2023dog}]
    Consider a filtration $\cF = \{\cF_{t}\}_{t\geq 0}$ in a probability space. Let $S$ be the set of nonnegative and nondecreasing sequences. Suppose that $C_t\in\mathcal{F}_{t-1}$ and that $\{X_t\}_{t\geq 0}$ is a martingale difference sequence adapted to $\{\mathcal{F}_t\}_{t\geq 0}$ such that $|X_t|\leq C_t$ with probability 1 for all $t\geq 0$. Then, for all $\tilde{\delta}\in(0,1)$, $c>0$, $T>0$, and $\overline{X}_t\in\mathcal{F}_{t-1}$ such that $|\overline{X}_t|\leq C_t$ with probability 1, it holds that
    \begin{align*}
        &\mathbb{P}\Bigg(\exists t\leq T,\exists\{{y_i}\}_{i=1}^{\infty}\in S \text{ such that }\bigg|\sum_{i=1}^t y_iX_i\bigg|\geq 8y_t\sqrt{\theta_{t,\tilde{\delta}}\sum_{i=1}^t(X_i-\overline{X}_i)^2+c^2\theta_{t,\tilde{\delta}}^2}\Bigg)\\
        &\qquad\qquad  \leq \tilde{\delta}+\mathbb{P}(\exists t\leq T:C_t\geq c),
    \end{align*}
    where $\theta_{t,\tilde{\delta}}=\log(\frac{60\log(6t)}{\tilde{\delta}})$.
\end{lemma}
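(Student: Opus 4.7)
The plan is to prove this time-uniform self-normalized martingale inequality in three conceptual steps: (i) truncation on the boundedness event, (ii) reduction of the supremum over arbitrary nondecreasing sequences $\{y_i\}$ to the running maximum of the un-weighted martingale partial sums, and (iii) a Freedman-type supermartingale argument with a peeling/stitching union bound to obtain simultaneous control in $t$.

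First I would condition on the event $E = \{\max_{t\leq T} C_t < c\}$. On the complement, the probability is absorbed into $\mathbb{P}(\exists t \leq T : C_t \geq c)$, so we may assume $|X_t| \leq c$ and $|\bar{X}_t| \leq c$ for every $t \leq T$. Next, to eliminate the supremum over nondecreasing sequences, I would apply Abel summation: writing $S_j := \sum_{i=1}^j X_i$, one has
\[
\sum_{i=1}^t y_i X_i = y_t S_t - \sum_{i=1}^{t-1}(y_{i+1} - y_i) S_i,
\]
and since $y_{i+1} - y_i \geq 0$ with $\sum_{i<t}(y_{i+1}-y_i) = y_t - y_1 \leq y_t$, this gives $|\sum_{i=1}^t y_i X_i| \leq 2 y_t \max_{s \leq t}|S_s|$ uniformly over admissible $\{y_i\}$. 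The lemma therefore reduces to the sequence-free statement
\[
\mathbb{P}\Bigl(\exists t \leq T : \max_{s \leq t}|S_s| \geq 4\sqrt{\theta_{t,\delta}\, V_t + c^2 \theta_{t,\delta}^2}\Bigr) \leq \delta,
\]
where $V_t := \sum_{i\leq t}(X_i - \bar{X}_i)^2$.

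For this reduced statement I would build a Freedman-type exponential supermartingale $Z_t(\lambda) = \exp\!\bigl(\lambda S_t - \psi(\lambda c)\, W_t\bigr)$, with $\psi(u) = (e^u - 1 - u)/u^2$ and $W_t = \sum_{i\leq t}\mathbb{E}[(X_i-\bar X_i)^2 \mid \mathcal{F}_{i-1}]$, using the Bennett MGF bound $\mathbb{E}[e^{\lambda X_i}\mid \mathcal{F}_{i-1}] \leq \exp(\psi(\lambda c)\, \mathbb{E}[X_i^2\mid \mathcal{F}_{i-1}])$ together with the identity $\mathbb{E}[X_i^2 \mid \mathcal{F}_{i-1}] = \mathbb{E}[(X_i-\bar X_i)^2\mid \mathcal{F}_{i-1}] - \bar X_i^2$, valid because $X_i$ is a martingale difference and $\bar X_i$ is predictable. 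A second moment-generating argument (or a standard de la Peña-type canonical-assumption inequality) then passes from the predictable $W_t$ to the empirical $V_t$ at the cost of an extra constant and an additive $c^2\theta$ term. Ville's maximal inequality applied to $Z_t(\lambda)$ yields, for each fixed $\lambda$, a time-uniform deviation bound; to tune $\lambda$ adaptively to the unobserved $V_t$, I would run a geometric stitching argument over a dyadic grid of $\lambda$ values, which is exactly what produces the $\log(60 \log(6t)/\delta)$ expression for $\theta_{t,\delta}$ (the inner double logarithm comes from the scales of $V_t$, the outer from the confidence level).

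The main obstacle will be the stitching step: combining the Bennett regime $\sqrt{V_t\theta}$ with the large-deviation linear regime $c\theta$, and doing so without losing an extra $\log T$ factor, requires the careful line-crossing-to-uniform-concentration machinery (as in Howard et al.\ 2021). I would follow that template to certify that the envelope $\sqrt{\theta_{t,\delta} V_t + c^2\theta_{t,\delta}^2}$ with the stated doubly-logarithmic $\theta_{t,\delta}$ suffices. Combining this time-uniform bound on $\max_{s\leq t}|S_s|$ with the Abel-summation reduction and the truncation from Step (i) then yields the inequality as stated.
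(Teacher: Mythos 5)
The paper gives no proof of this lemma at all --- it is imported verbatim from Ivgi et al.\ (2023), and the citation is the entire justification. Your outline (truncation on $\{\max_{t\le T}C_t<c\}$, Abel summation to reduce the supremum over nondecreasing weights to $2y_t\max_{s\le t}|S_s|$, and a stitched time-uniform empirical-Bernstein bound in the style of Howard et al.\ to control the running maximum against $\sqrt{\theta_{t,\delta}V_t+c^2\theta_{t,\delta}^2}$) is essentially the proof given in that source, so your approach is correct and matches the intended one.
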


Moreover, different from~\citet{ivgi2023dog}, we need another lemma for analysis.
\begin{lemma}\label{lemma:logdiff}
    Suppose $0< a_0 \leq a_1\leq\cdots\leq a_T$. Then the following inequality holds:
    \begin{align}
        \sum_{k=1}^T\frac{a_k-a_{k-1}}{a_k}\leq \log\Big(\frac{a_T}{a_0}\Big).\nonumber
    \end{align}
\end{lemma}
\begin{proof}
    For any $k \in \{1, \dots, T\}$, consider the interval $[a_{k-1}, a_k]$. Since the function $f(x) = \frac{1}{x}$ is monotonically decreasing for $x > 0$, $\forall x \in [a_{k-1}, a_k]$, $\frac{1}{a_k} \leq \frac{1}{x}$. Integrating both sides over the interval $[a_{k-1}, a_k]$, we get:$\int_{a_{k-1}}^{a_k} \frac{1}{a_k} \, dx \leq \int_{a_{k-1}}^{a_k} \frac{1}{x} \, dx$. Therefore, $\frac{a_k - a_{k-1}}{a_k} \leq \ln(a_k) - \ln(a_{k-1})$. The proof concludes by summing from $k=1$ to T on both sides.
\end{proof}

\section{Discussion on Learning Rate Search on Parameter-free Algorithms}\label{sec:lr_search}
In our computer vision experiments, we implement learning rate search for the base learning rate within a small range, instead of setting learning rate to 1.0 as suggested. Actually, most of the parameter-free algorithms have to tune some hyper-parameters. For example, DoG~\citep{ivgi2023dog} may set different $r_\epsilon$s and D-Adapt Adam~\citep{defazio2023learning} can change different $\gamma_k$s. 

However, as we shown in Appendix~\ref{sec:ablat}, different hyper-parameters affect little on performance (but can still have some affect). For example, the test accuracy for Resnet18 model trained on CIFAR-100 with Adam++ (Case 1) only varies between 66.49 to 70.42 when the learning rate varies between 1.0 to 2.0. And the final learning rates are different for different algorithms and tasks. In comparison, for different scenarios, non-parameter-free algorithms, such as AdamW, requires different learning rates. For example, the learning rate for AdamW may vary from $\sim 10^{-2}$ in computer vision tasks to $\sim 10^{-4}$ in LLM tasks, since AdamW does not compute the optimal step size automatically.

The reason why we choose [0.1, 3.0] as the learning rate range is that we observed an obvious peak phenomenon during searching, and all the peaks in our experiments are located in [0.9, 2.0]. Therefore, we extend this range to [0.1, 3.0] for Adam++ and baseline parameter-free algorithms for fair comparison.

\section{Parameter Settings}\label{app:detail}

Although the suggested learning rate is 1.0 for these parameter-free optimizers, throughout the training, we still search for the base learning rate from 0.1 to 3.0, and the initial learning rate of Adam++ is set to \(1\times 10^{-6}(1 + \|x_0\|_2^2)\) for image classification tasks, as suggested by \citet{ivgi2023dog}. For GPT-2 small and GPT-2 medium tasks, the initial learning rates of AdamW++ are \(6\times 10^{-4}(1 + \|x_0\|_2^2)\) and \(3\times 10^{-4}(1 + \|x_0\|_2^2)\), respectively, where \(6\times 10^{-4}\) and \(3\times 10^{-4}\) correspond to the default learning rates for AdamW training as specified in~\citet{liu2023sophia}.
The initial learning rates of Prodigy and D-Adapt Adam are set as the default $1\times 10^{-6}$ as the algorithms did not suggest any modification of this parameter.

In addition, we list the parameters, architectures and hardware that we used for the experiments. All other parameters not listed are set as default. The information is collected in Tables~\ref{tab:2}--\ref{tab:3}.
\begin{table}[ht]
\caption{ Computer Vision experiment.}
\centering
\begin{tabular}{ c c }
\toprule
Hyper-parameter  & Value\tabularnewline
\Xhline{3\arrayrulewidth}
Architecture  & ResNet 18, VGG16, DenseNet-121\tabularnewline
\hline 
Epochs  & 200\tabularnewline
\hline 
GPUs  & 1$\times $A6000\tabularnewline
\hline 
Batch size & 256\tabularnewline
\hline 
LR schedule & Constant/Cosine Decay\tabularnewline
\hline 
weight decay & 5e-4 \tabularnewline
\hline 
$(\beta_1, \beta_2)$ & (0.9, 0.999)\tabularnewline
\bottomrule
\end{tabular}\label{tab:2}
\end{table}

\begin{table}[htbp]
\centering
\caption{Large language model experiment}
\begin{tabular}{ c c }
\toprule
Hyper-parameter  & Value\tabularnewline
\Xhline{3\arrayrulewidth}
Architecture  & GPT-2 Small/GPT-2 Medium\tabularnewline
\hline 
Steps  & 50K\tabularnewline
\hline 
GPUs  & 8$\times $A100\tabularnewline
\hline 
Batch size  & 480\tabularnewline
\hline 
Context Length  & 1024\tabularnewline
\hline 
LR schedule & Cosine Decay with Warmup\tabularnewline
\hline 
Seeds & 5000+offset\tabularnewline
\hline 
weight decay & 0.1\tabularnewline
\hline 
$(\beta_1, \beta_2)$ & (0.9, 0.95)\tabularnewline
\hline 
Adam LR & 6e-4/3e-4\tabularnewline
\bottomrule
\end{tabular}\label{tab:3}
\end{table}

\section{Additional Experiments}
In this section, we present some additional experiment results.
\label{sec:additional-exp}

\subsection{Results on More Network Models and Datasets}
In this section, we include more experiments for additional network architectures, scheduler and datasets. Here we present (1) ResNet-18 model trained with CIFAR-10 dataset with Multi-step learning rate scheduler, which is similar to constant learning rate scheduler except for a $0.1\times$ learning rate decay at 100-th and 150-th steps (shown in Figures~\ref{fig:cifar10_multistep} and~\ref{fig:cifar100_multistep}); (2) ResNet-18 and DenseNet-121 models trained with CIFAR-100 dataset with constant and cosine scheduler (presented in Figures~\ref{fig:cifar100_constant} and~\ref{fig:cifar100_cosine}); (3) ResNet-18 model trained with SVHN dataset~\citep{netzer2011reading} with constant and cosine scheduler (displayed in Figures~\ref{fig:svhn_constant} and~\ref{fig:svhn_cosine}); and (4) ResNet-50 model~\citep{he2016deep} trained with Tiny-ImageNet dataset~\citet{tavanaei2020embedded} with constant and cosine scheduler (demonstrated in Figures~\ref{fig:tiny_constant} and~\ref{fig:tiny_cosine}).
All other hyperparameters remain consistent with those detailed in Section~\ref{sec:image} and Appendix~\ref{app:detail}.

\CC{In Figures~\ref{fig:cifar10_multistep} and~\ref{fig:cifar100_multistep}, we note that even with a more complicated scheduler, both Case 1 and Case 2 of Adam++ demonstrate a strong and consistent performance. And from Figures~\ref{fig:cifar100_constant},~\ref{fig:cifar100_cosine},~\ref{fig:svhn_constant},~\ref{fig:svhn_cosine},~\ref{fig:tiny_constant} and~\ref{fig:tiny_cosine}, it can be observed that Adam++ still outperforms other parameter-free algorithms in different kinds of computer vision tasks, including CIFAR-100, SVHN and Tiny-ImageNet.}

\CC{This consistency in performance highlights Adam++'s robustness and its ability to perform reliably without the need for frequent adjustments or tuning.}

\subsection{Language Modeling Tasks}\label{sec:language_model_task_figs}
In this section, we present the results for language modeling on GPT-2 small and medium models on OpenWebText datasets in Figures~\ref{fig:gpt-small} and~\ref{fig:gpt-medium}, as discussed in Section~\ref{sec:llm}.
\begin{figure}[htb!]
\centering   
\includegraphics[width=0.45\textwidth]{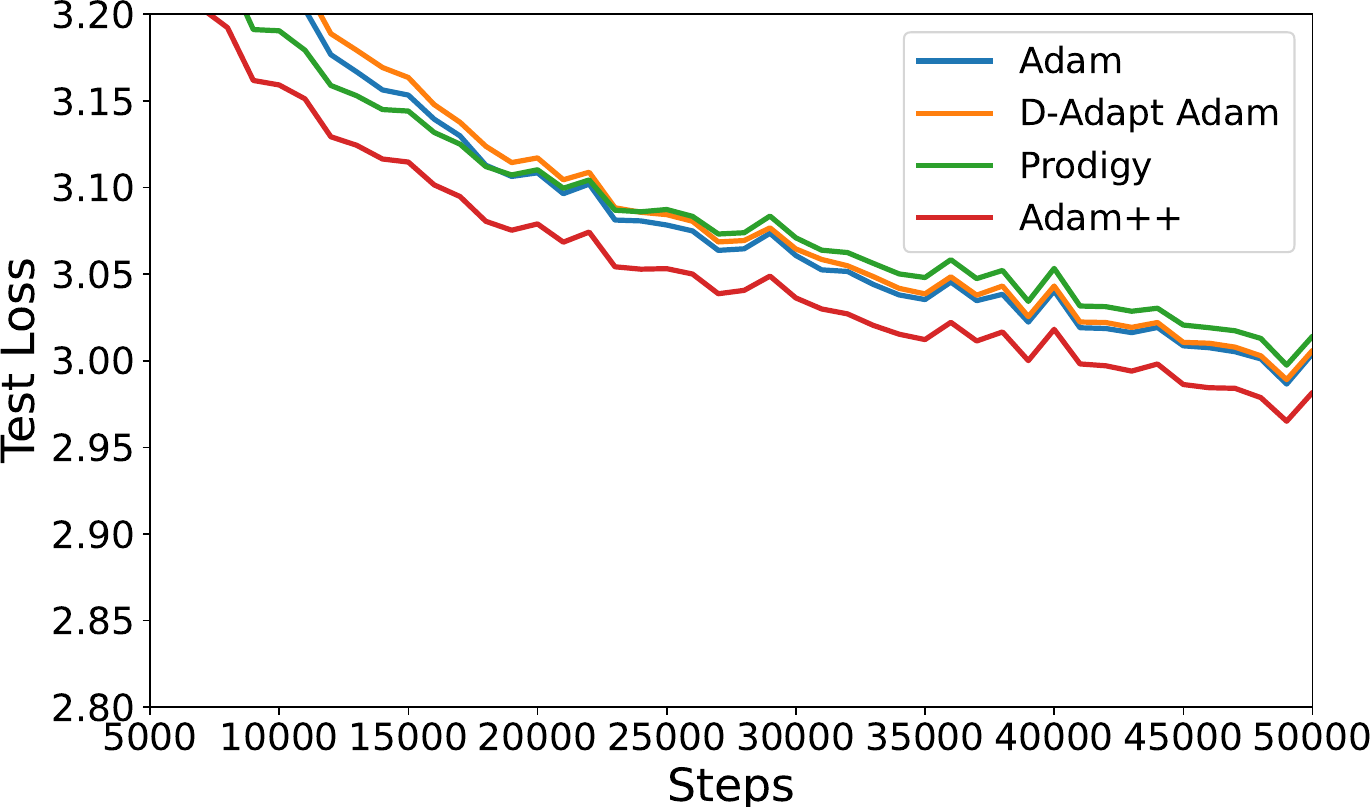}
\includegraphics[width=0.45\textwidth]{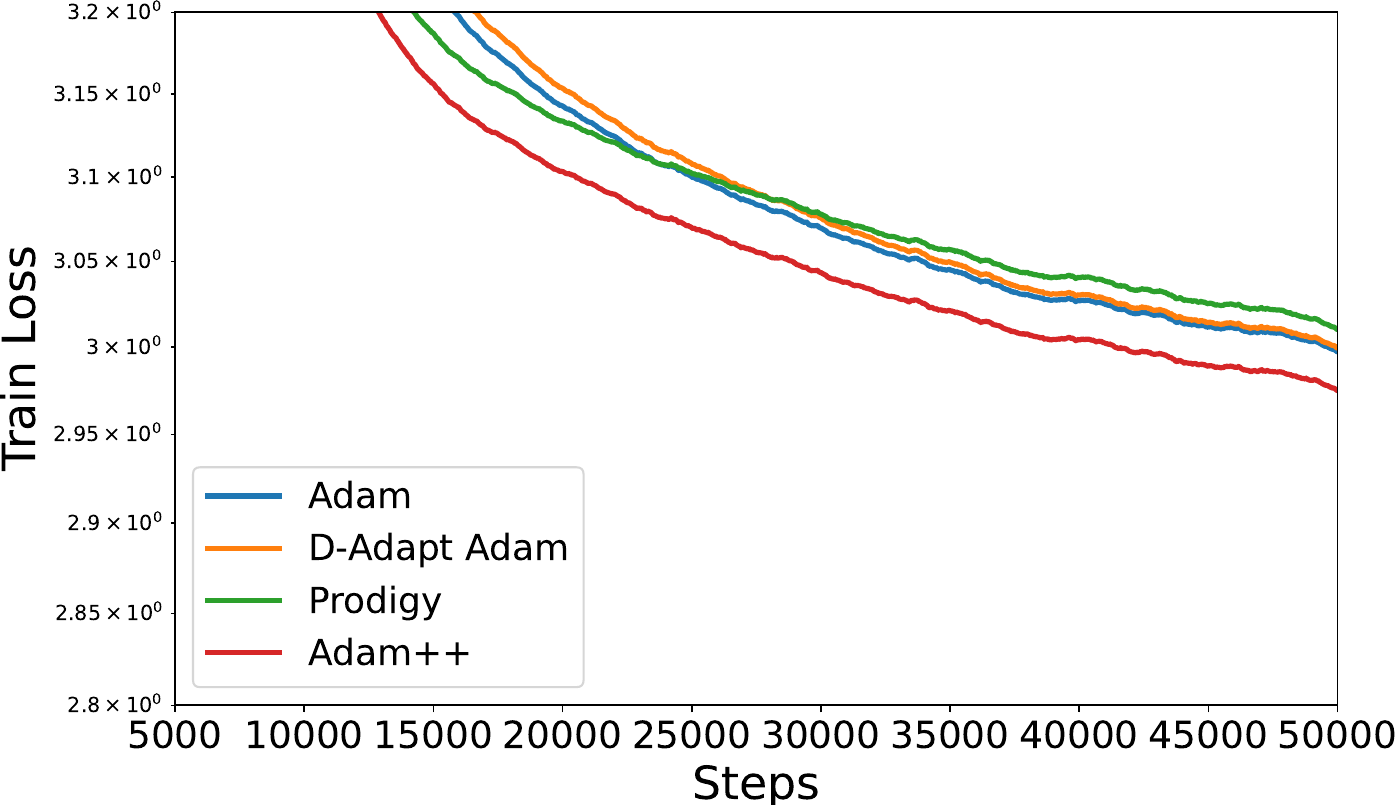}
\caption{Comparison of training GPT-2 Small (125M) on OpenWebText. Left: Test loss. Performance at 50k steps—AdamW: 3.00, D-Adapt AdamW: 3.01, Prodigy: 3.01, Adam++: 2.98. Right: Train loss. Performance at 50k steps—AdamW: 2.97, D-Adapt AdamW: 2.97, Prodigy: 2.98, AdamW++: 2.95. AdamW++ refers to AdamW++ (Case 2).}
\label{fig:gpt-small}
\end{figure}

\begin{figure}[htb!]
\centering   
\includegraphics[width=0.45\textwidth]{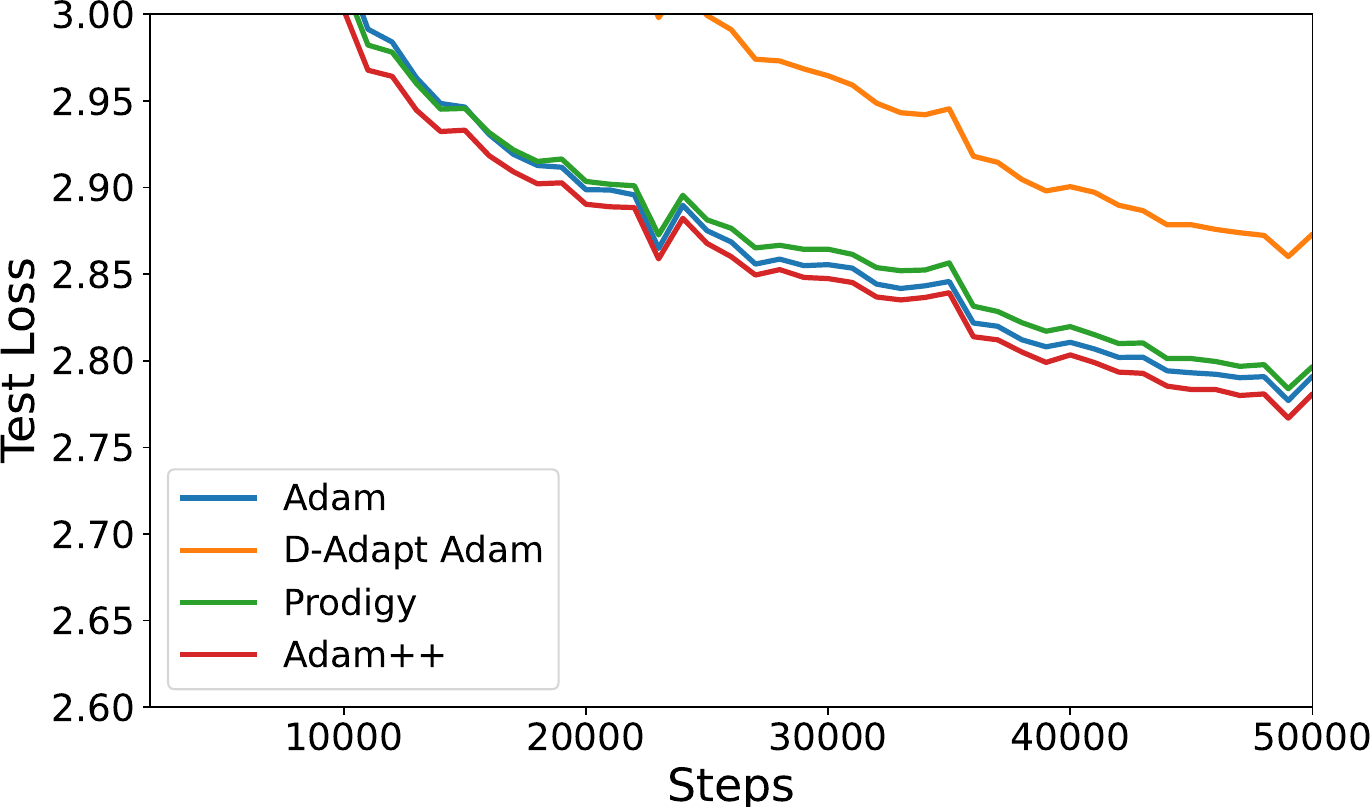}
\includegraphics[width=0.45\textwidth]{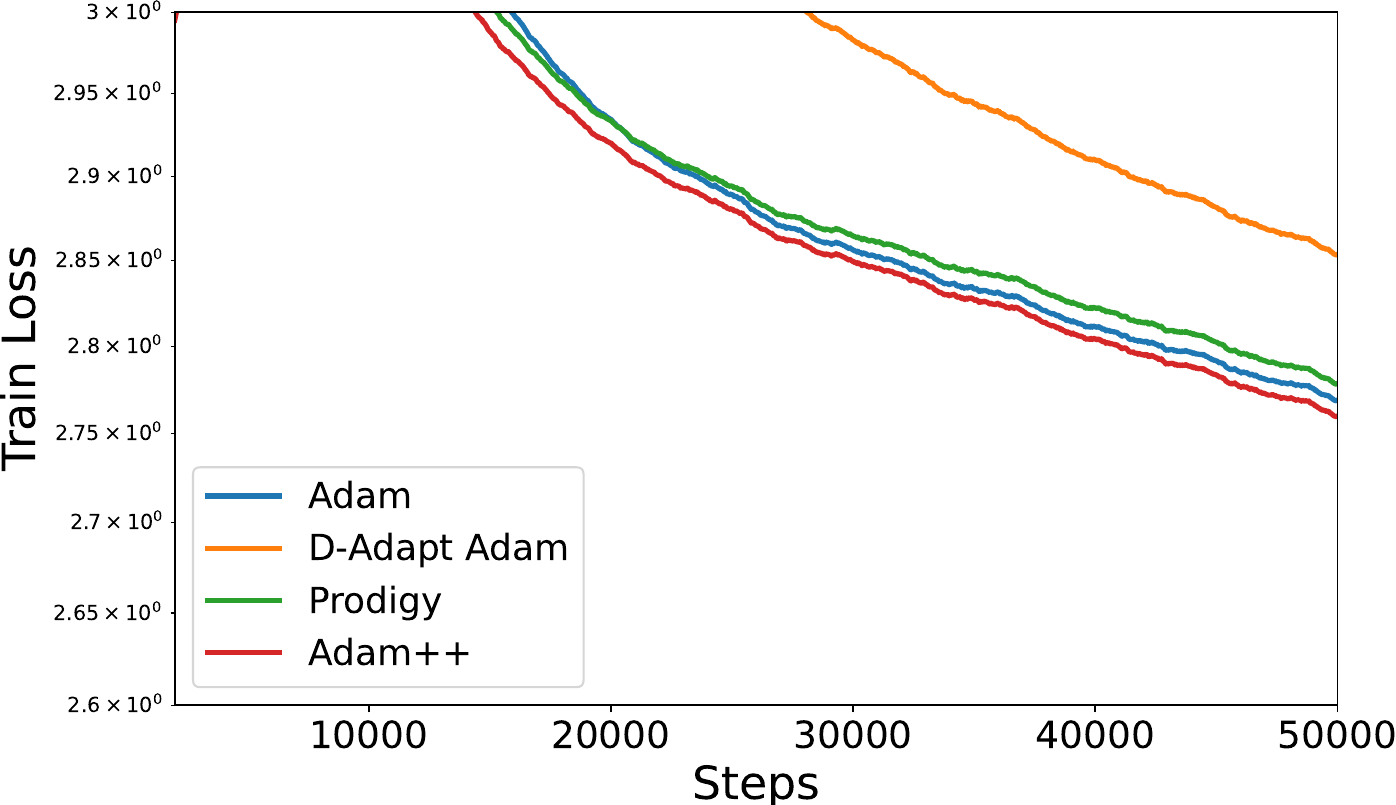}
\caption{Comparison of training GPT-2 Medium (355M) on OpenWebText. Left: Test loss. Performance at 50k steps—AdamW: 2.80, D-Adapt AdamW: 2.87, Prodigy: 2.80, AdamW++: 2.78. Right: Train loss. Performance at 50k steps—AdamW: 2.75, D-Adapt AdamW: 2.82, Prodigy: 2.75, AdamW++: 2.73. AdamW++ refers to AdamW++ (Case 2).}
\label{fig:gpt-medium}
\vspace{-6mm}
\end{figure}

\subsection{Ablation Study}\label{sec:ablat}
\subsubsection{Initial and base learning rate}
We conduct an ablation study to assess the impact of different choices for the base learning rate and the initial learning rate on training loss and test accuracy using ResNet-50.

\textbf{Initial learning rate $\eta_0$}
Our theory suggests that the choice of the initial $\eta_0$ will not influence the final loss performance, as long as $\eta_0$ is not too large. 
We tested this hypothesis by
running each of the problems using values of $\eta_0$ ranging from $10^{-6}$ to $1$. Figure~\ref{fig:ablate_initial} validates this conclusion in practice.
\paragraph{Base learning rate}
For this experiment alone, we consider Adam++ with different values
of the base learning rate of $\eta_t=c\cdot\frac{\max_{i\leq t}\|\xb_i-\xb_0\|_2}{\sqrt{d}}$. According to our theory, our algorithms are expected to be unstable when $c>1$ and slow to converge when $c<1$. Figure~\ref{fig:ablate_base} illustrates the performance around $c=1$. In comparison, Figure~\ref{fig:ablate_base_adamw} displays the results for AdamW optimizer. It can be observed that both Adam and Adam++ have a large stable range near the optimal base learning rate.

\subsubsection{Optimization and implicit regularization effects analysis}
To investigate the optimization and implicit regularization effects of the proposed optimizers, we conducted experiments on a convex classification problem considered in \citep{zhang2024implicit}. The setup involves $n=50$ training data points $\{(\xb_i, y_i)\}_{i=1}^n$, where $\mathbf{x}_i \in \RR^d$ ($d=50$) and $y_i \in \{+1, -1\}$. The objective was to minimize the empirical loss $\mathcal{R}(\wb)$, which uses the logistic loss function $\ell(z)=\log(1+e^{-z})$:$\mathcal{R}(\wb)=\frac{1}{n}\sum_{i=1}^n\ell(\langle\wb,y_i\cdot\xb_i\rangle)$. We used a constant learning rate ($\eta$) of 1.0 for all optimizers, except for Adam, which used a tuned rate of $3 \times 10^{-3}$. The resulting normalized $\ell_\infty$- and $\ell_2$-margin curves are displayed in Figure \ref{fig:margin_analysis}.  The results show that both versions of Adam++ achieve large $\ell_\infty$- and $\ell_2$-margins, surpassed only by Adam. This indicates that Adam++ offers a better balance between optimization and implicit regularization effects.

\subsubsection{Further investigation on Case 2 of Adam++} 
For Case 2 of Adam++, we propose a simplified version of the Adam step size calculation: $\sbb_t = \sqrt{(t+1) \cdot \vb_{t}}$ instead of the original $\sbb_t = \sqrt{(t+1) \cdot \max_{k\le t}(\vb_{k})}$. We tested this change on the CIFAR-10 image classification task using the ResNet-18 model and a constant learning rate schedule. The results are shown in Figure~\ref{fig:cifar10_constant_compare}. The simplified version achieves a lower test loss but slightly lower test accuracy. This trade-off is consistent with numerous empirical observations \citep{gugger2018adamw}. Notably, this result suggests that the simplified version is a practical and viable alternative that enables faster training.
\begin{figure}[t!]
\centering   
\includegraphics[width=0.45\textwidth]{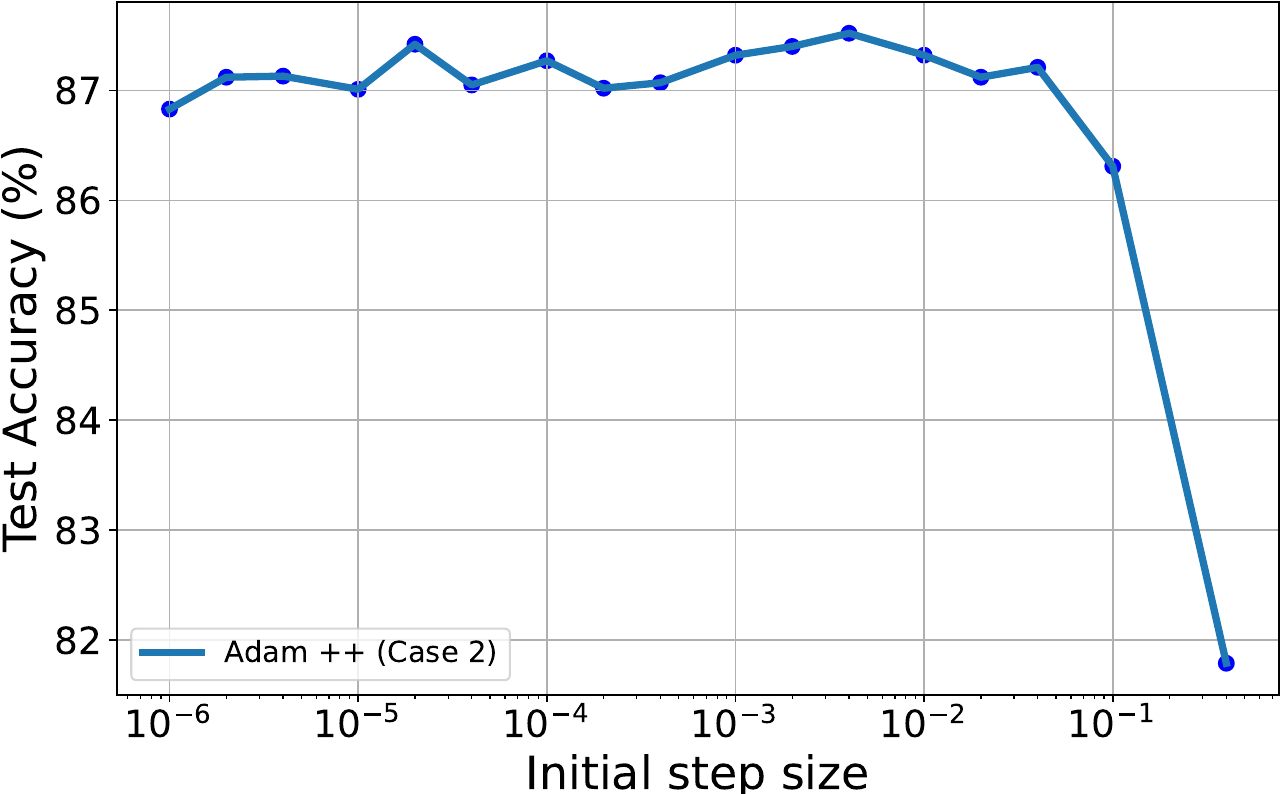}
\includegraphics[width=0.45\textwidth]{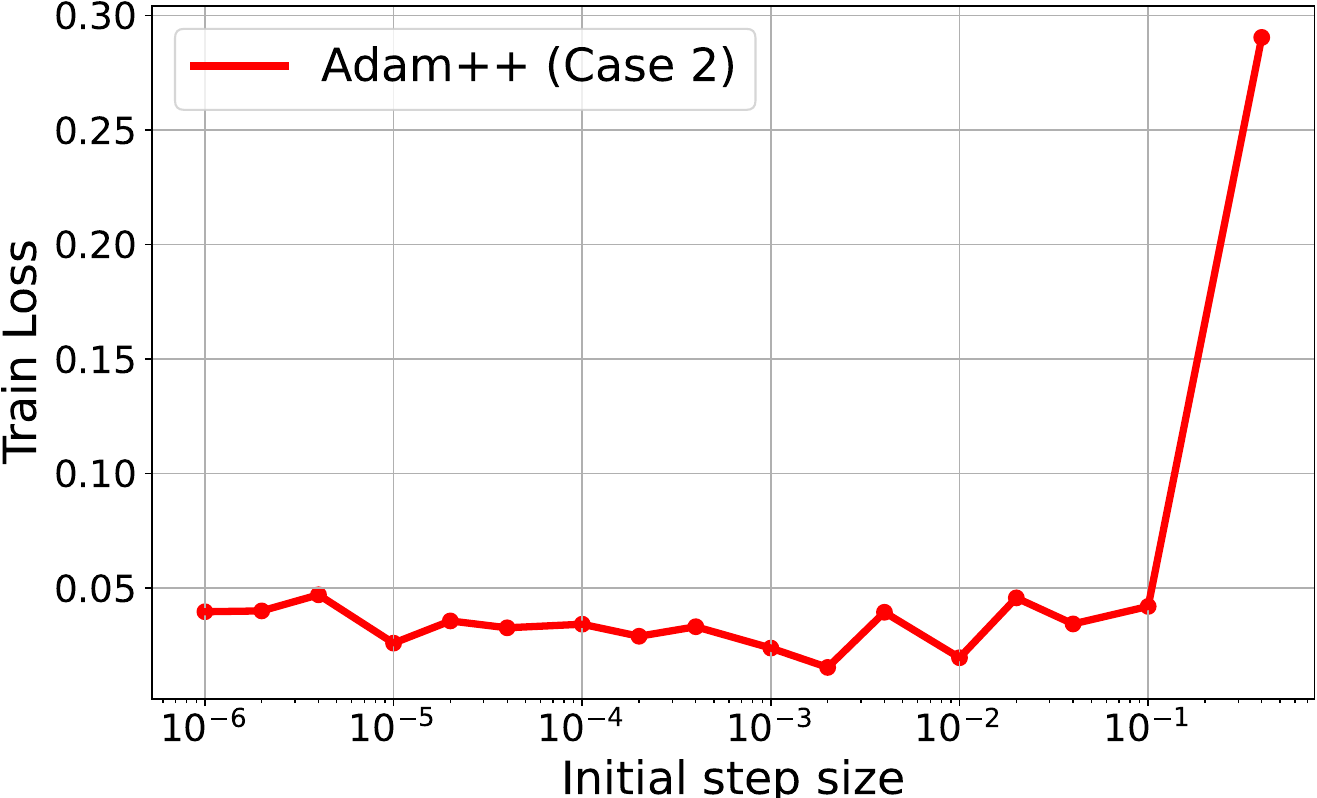}
\caption{Effect of different choices of $\eta_0$ on test accuracy and training losses for Adam++ (Case 2). When $\eta_0$ is less than $10^{-1}$, its influence on final performance is marginal.}
\label{fig:ablate_initial}
\end{figure}

\begin{figure}[t!]
\centering   
\includegraphics[width=0.45\textwidth]{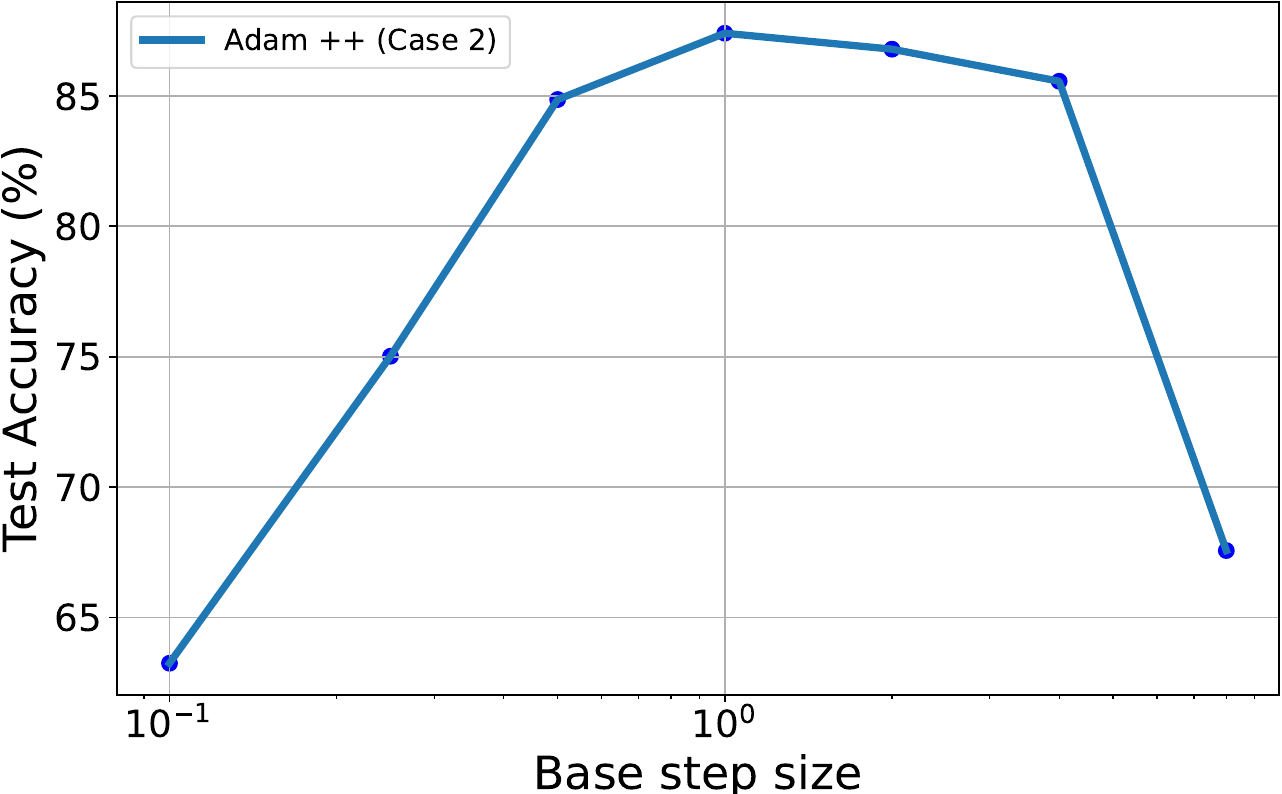}
\includegraphics[width=0.45\textwidth]{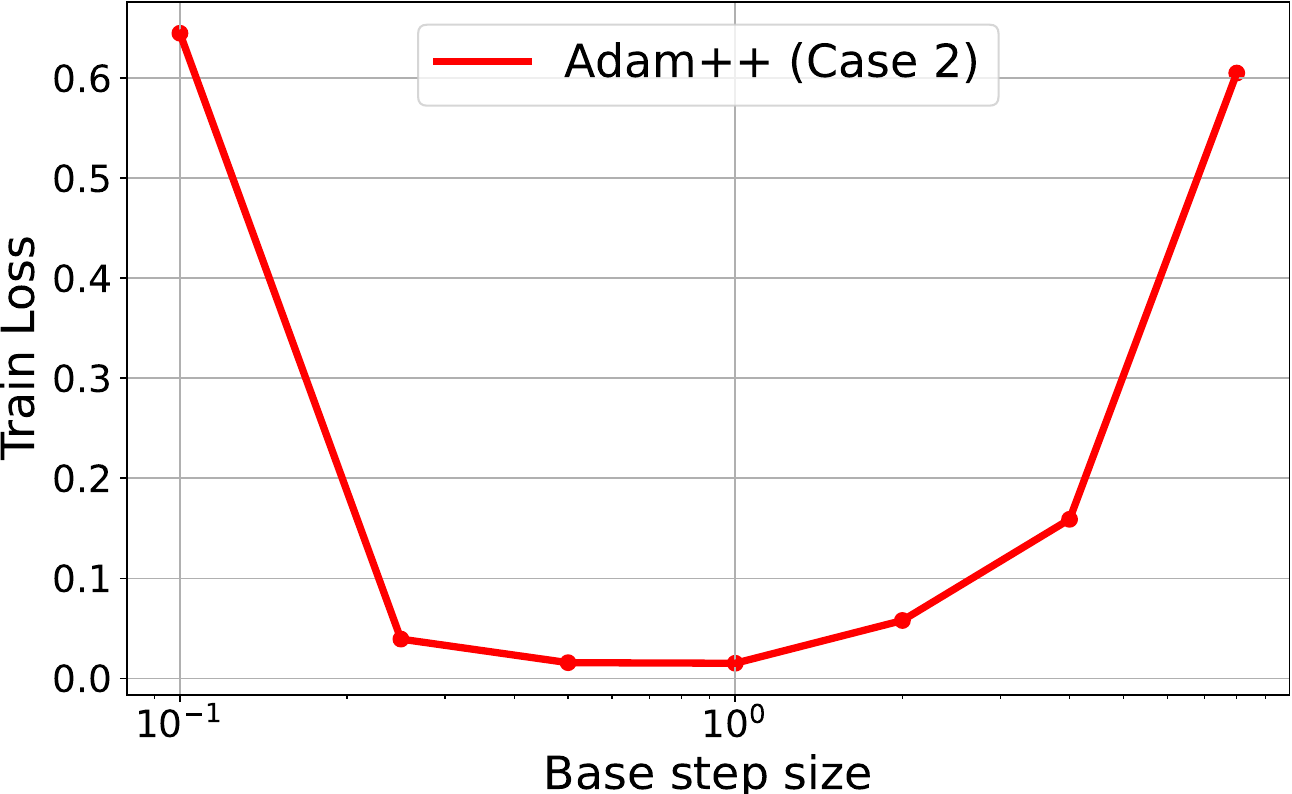}
\caption{Effect of different choices of $c$ on test accuracy and training losses for Adam++ (Case 2). When $c$ is between $0.5$ and $4$, its influence on final performance is limited.}
\label{fig:ablate_base}
\end{figure}

\begin{figure}[t!]
\centering   
\includegraphics[width=0.45\textwidth]{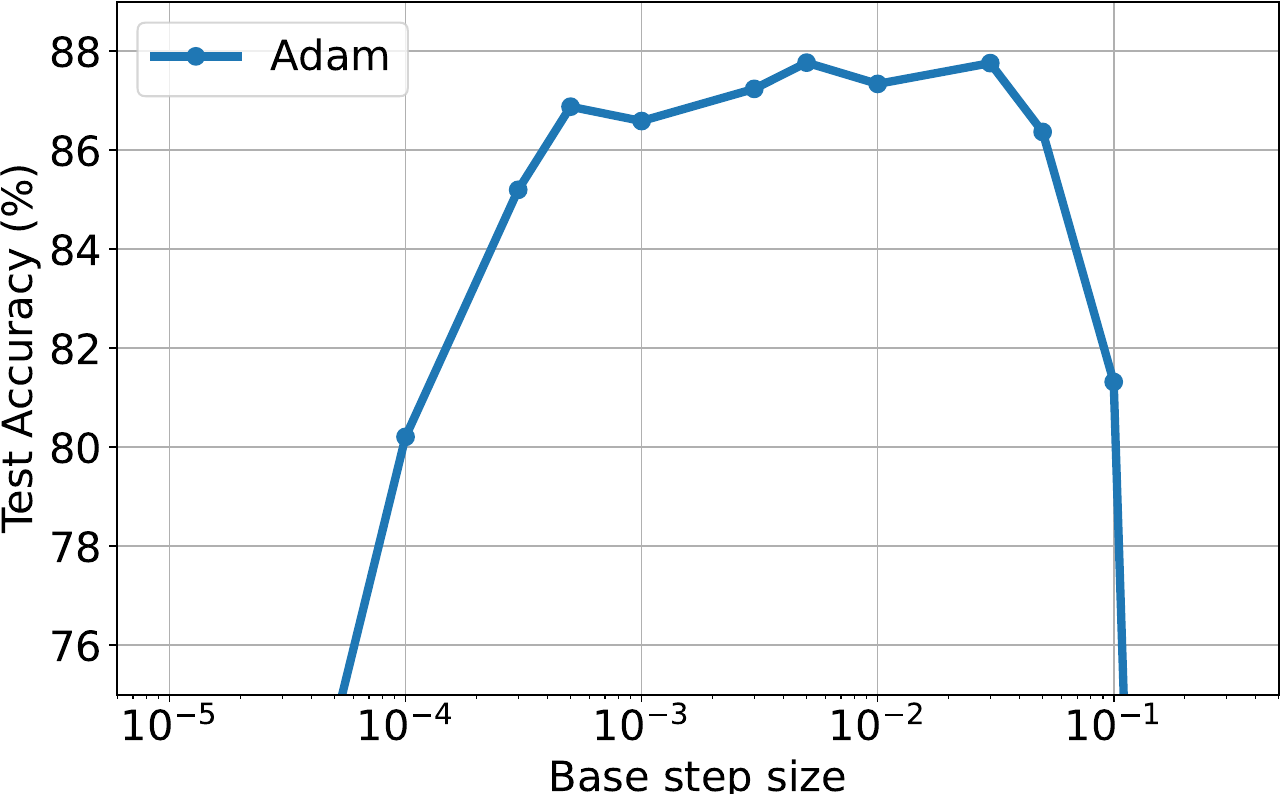}
\includegraphics[width=0.45\textwidth]{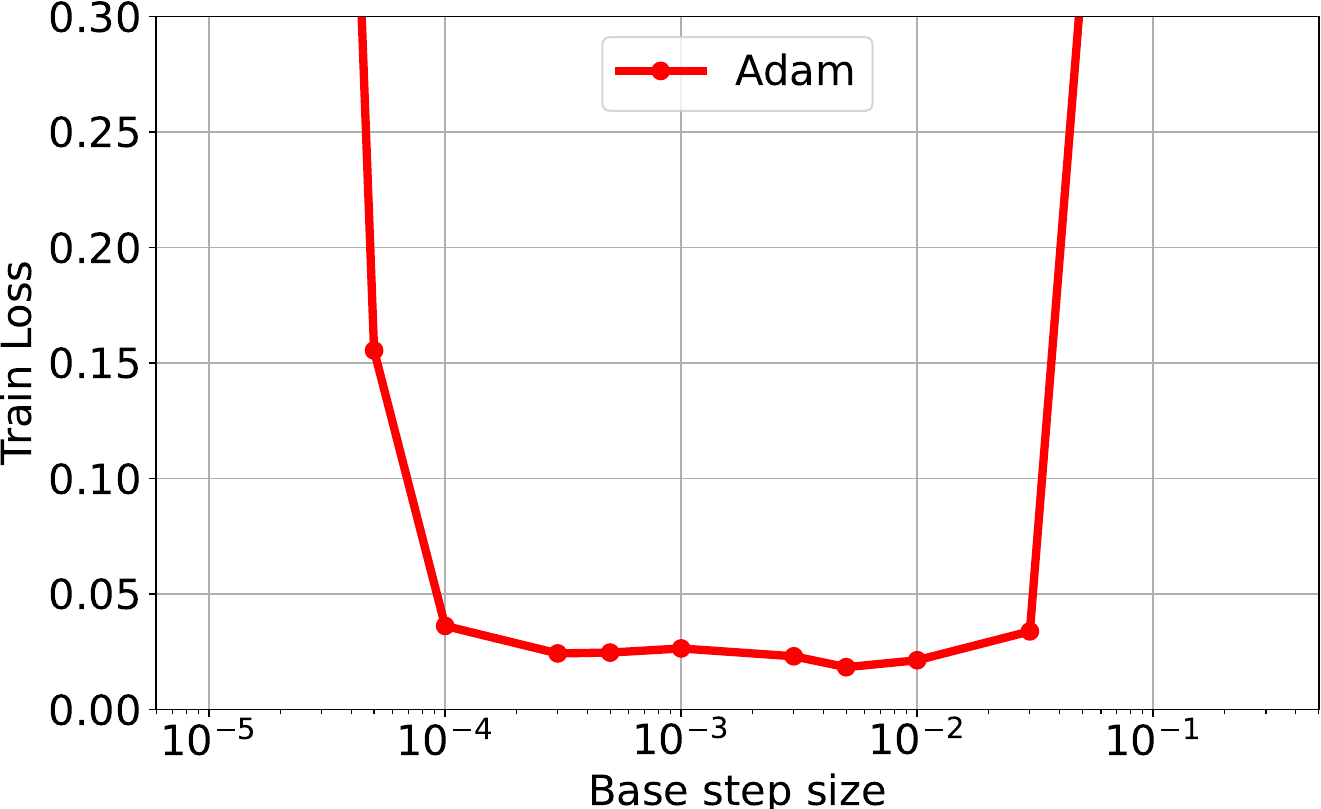}
\caption{Effect of different choices of base step size (learning rate) on test accuracy and training losses for AdamW optimizer, where the base step size is searched between $1\times 10^{-5}$ to $3\times 10^{-1}$.}
\label{fig:ablate_base_adamw}
\end{figure}

\begin{figure}[htbp]
\centering   
\subfigure[Normalized $\ell_\infty$-margin]{\includegraphics[width=0.47\textwidth]{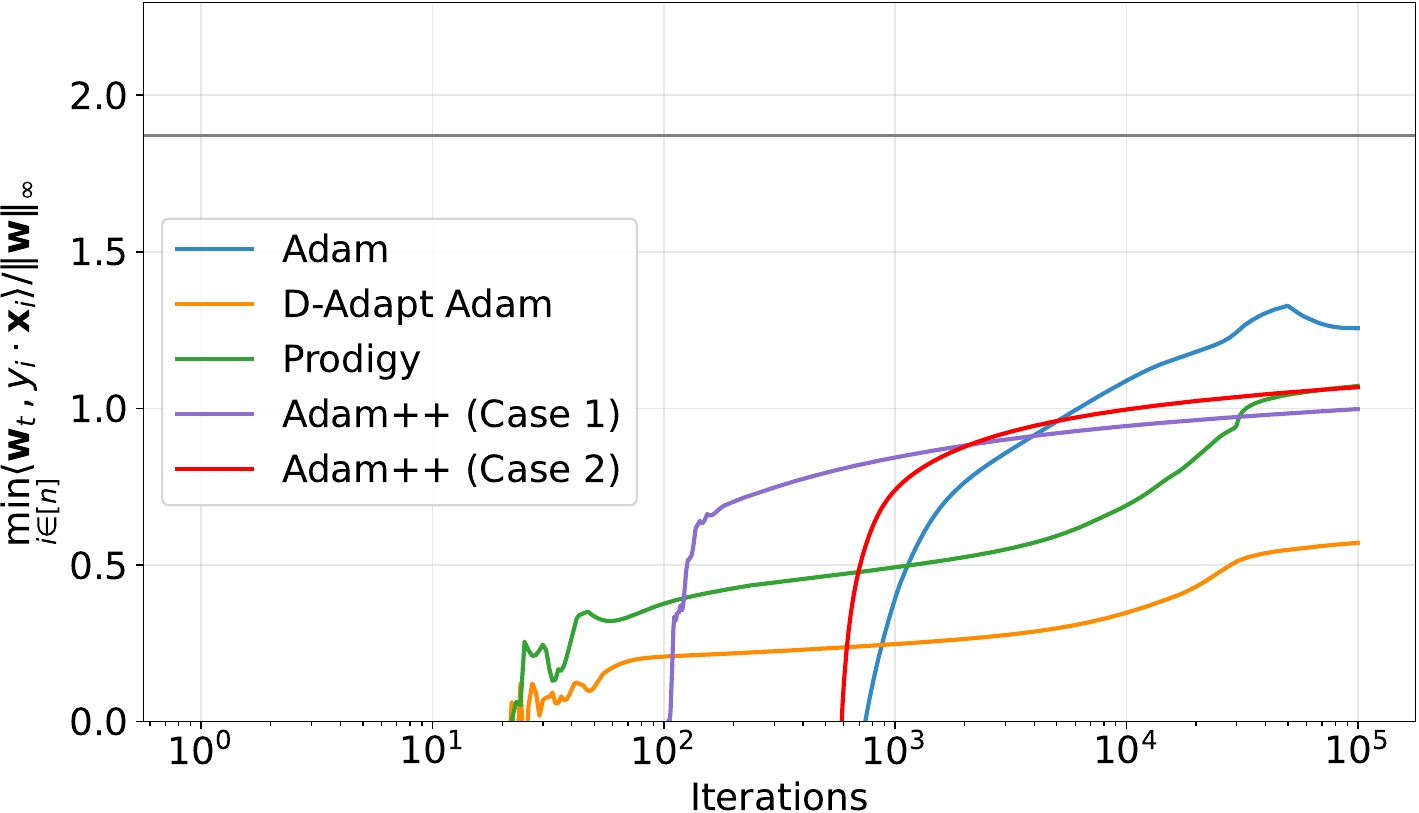}}
\subfigure[Normalized $\ell_2$-margin]{\includegraphics[width=0.47\textwidth]{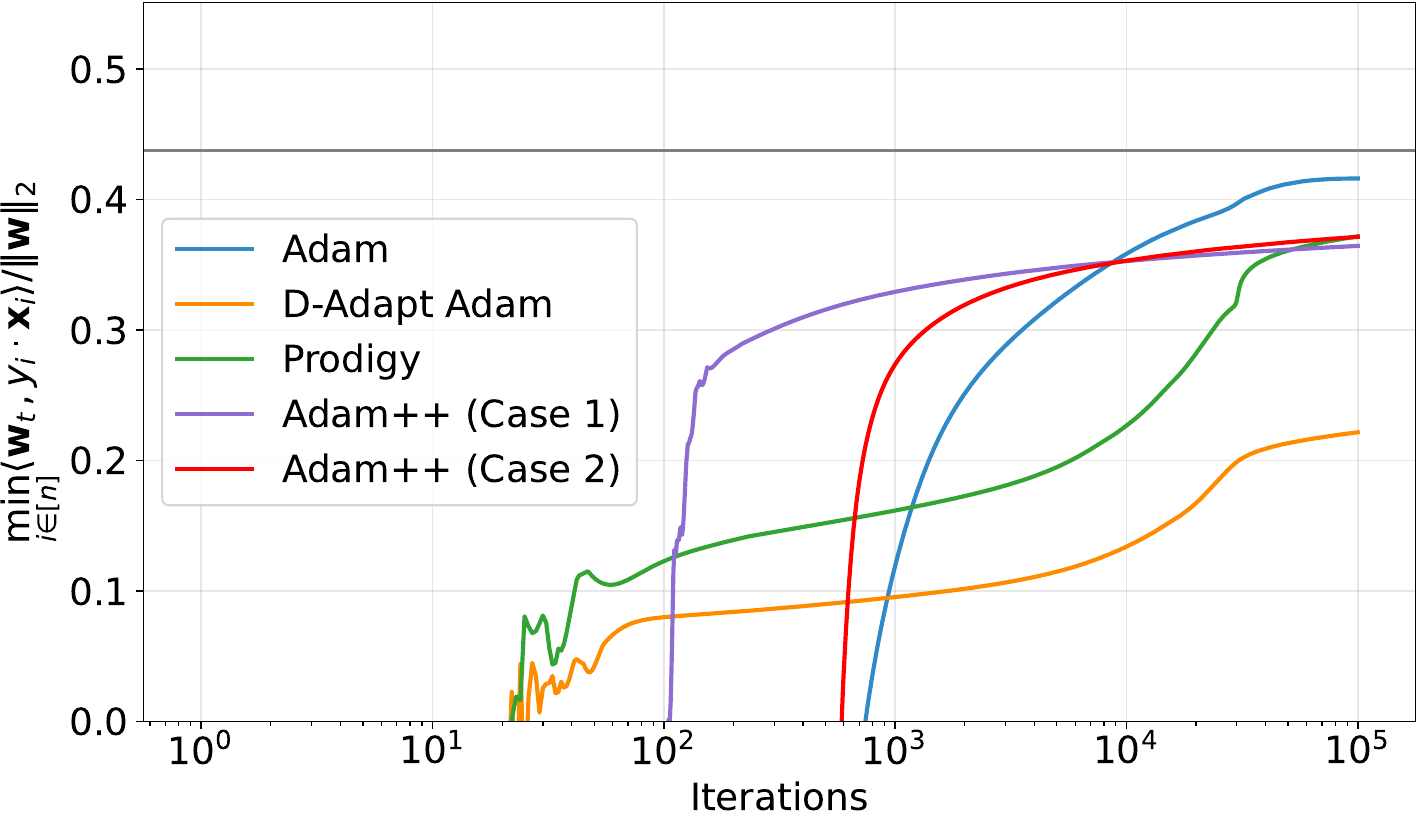}}
\caption{Normalized $\ell_\infty$-margins and $\ell_2$-margins achieved by for different optimizers, where the gray lines are the theoretically maximum margin values.}
\label{fig:margin_analysis}
\vspace{-5mm}
\end{figure}

\begin{figure}[htbp]
\centering   
\subfigure[ResNet-18, test accuracy]{\includegraphics[width=0.32\textwidth]{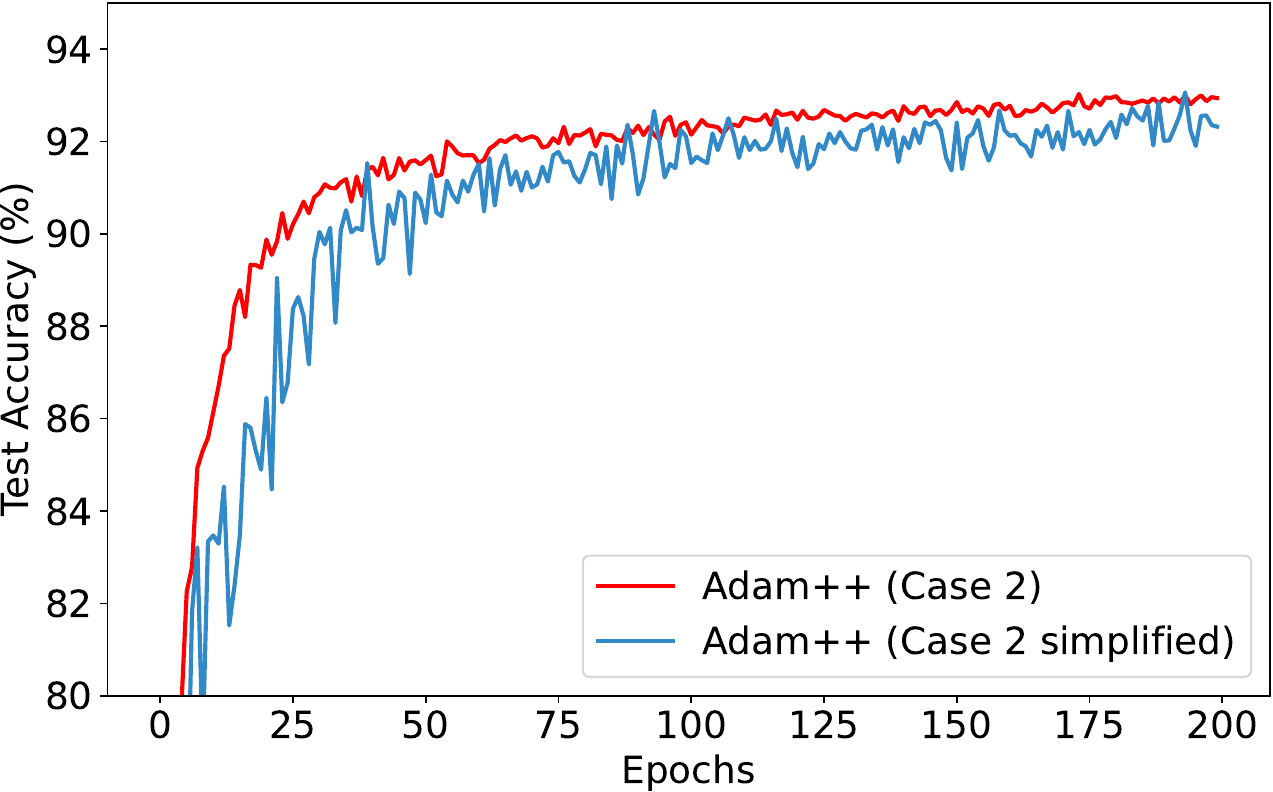}}
\subfigure[ResNet-18, training loss]{\includegraphics[width=0.32\textwidth]{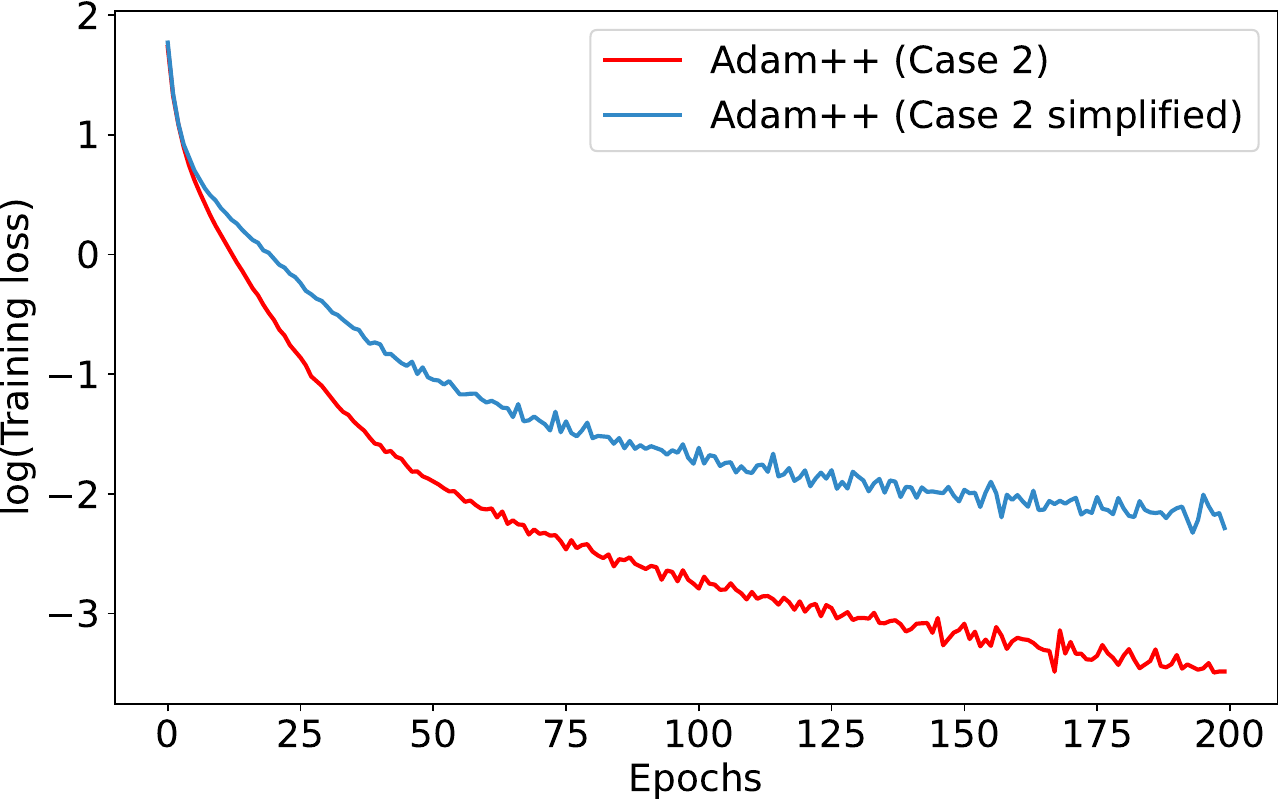}}
\subfigure[ResNet-18, test loss]{\includegraphics[width=0.32\textwidth]{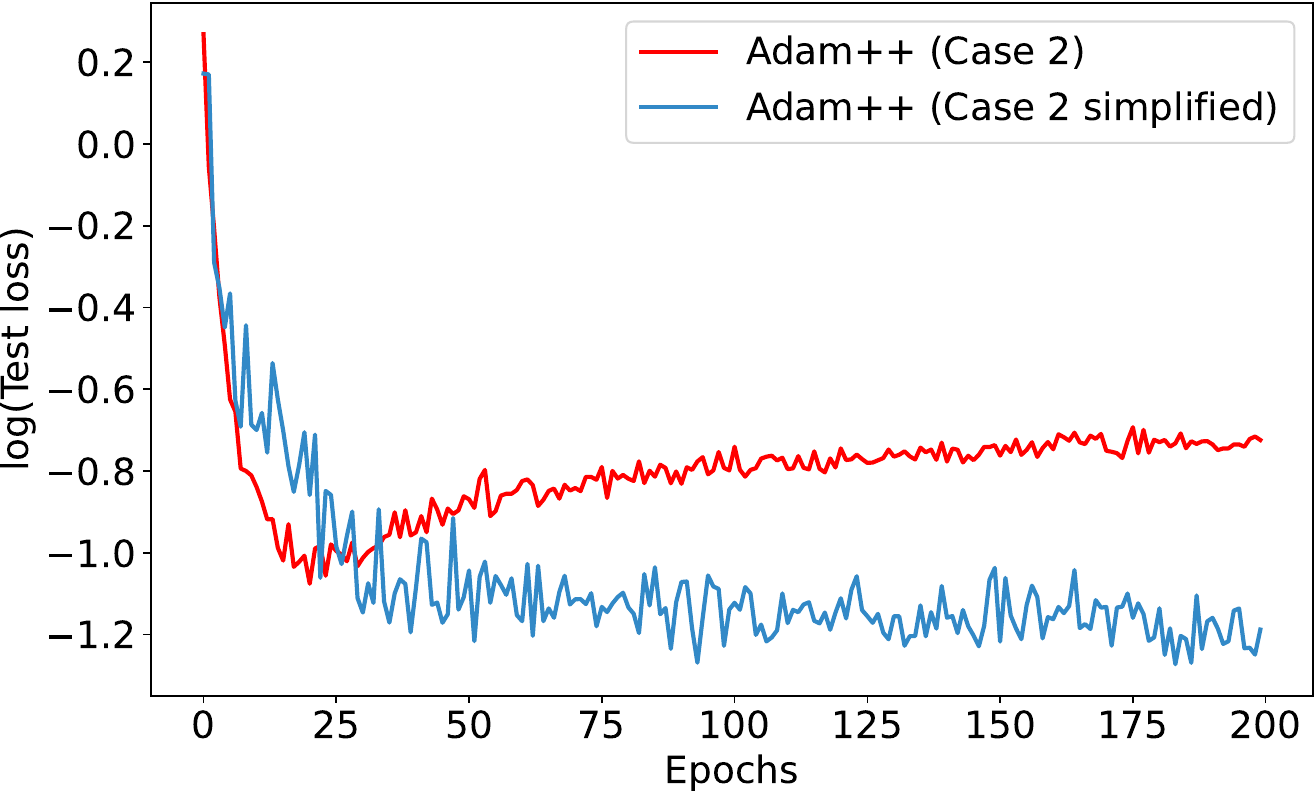}}
\caption{The results of training ResNet-18 on CIFAR-10 with Adam++ (Case 2) and its simplified version with a constant learning rate schedule.}
\label{fig:cifar10_constant_compare}
\vspace{-5mm}
\end{figure}



\subsubsection{Investigation on convex ridge regression experiment}
The convergence bound relies on the upper-bound of $||x_t-x^\star||$, which can be upper-bounded by $3||x_0-x^\star ||$. To further demonstrate that the proposed algorithms help improve AdaGrad and Adam under certain initializations. To directly address this, we designed a strictly convex ridge regression experiment where the exact analytical optimal solution ($x^* $) is known. We evaluated Adam++ and Adagrad++ over 5 random seeds, intentionally initializing the models at controlled Euclidean distances from $x^*$ (0.1, 1.0, and 10.0). The results is shown in Figure~\ref{fig:ridge_regression}.

The curves for $\eta_t$ demonstrate that the step sizes automatically and safely scale in proportion to the initialization distance. Both optimizers achieve step-size plateau after 200 iterations. This proves that the $\eta_t$ elegantly adapts to the scale of the parameter space without manual tuning. Moreover, the curves for optimality gaps confirm that both algorithms consistently converge. And the curves for the distance $|x_t - x^* |$ alongside a threshold line representing $3|x_0 - x^*|$ demonstrate that in every scenario, the distance to the optimal solution remains below this 3x threshold except for only several steps, validating the empirical assumption that grounds our theoretical convergence proofs. Finally, the curves for $\|\sbb_t\|_2/t^{1/2}$ illustrate that $\|s_\tau\|_2\ll\sqrt{T}$ actually hold in practice.

\begin{figure}[htbp]
\centering   
\includegraphics[width=\textwidth]{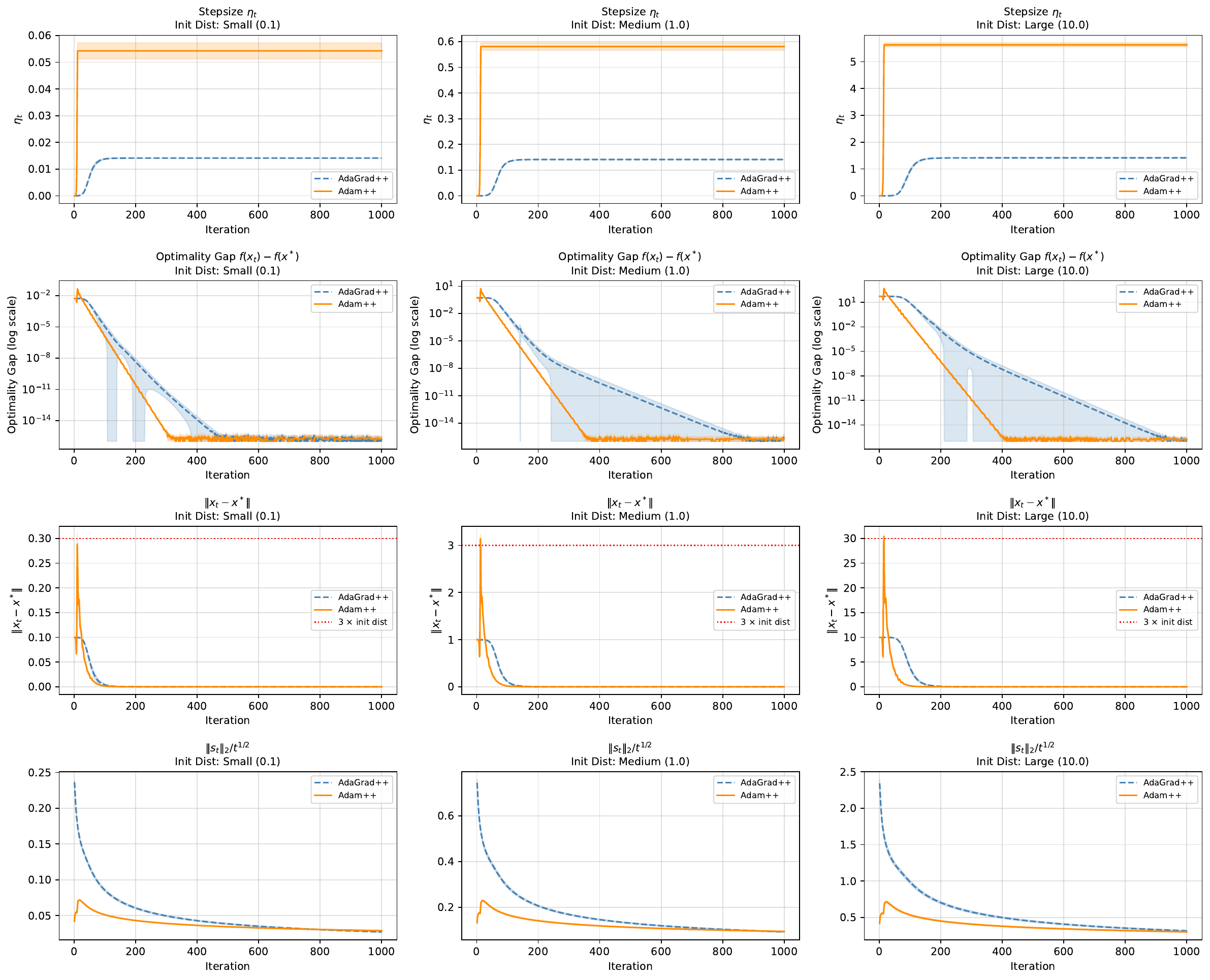}
\caption{The curves for $\eta_t$, optimality gap ($f(\xb_t)-f(\xb^*)$), $\|\xb_t-\xb^*\|$ and $\|\sbb_t\|_2/t^{1/2}$ of ridge regression experiments with different initial distances to the optimal.}
\label{fig:ridge_regression}

\end{figure}

\subsection{Computational Overhead}
In this section, we analyze computational overhead by plotting test loss and test accuracy against wall-clock time for different models with different scheduler on different datasets, as shown in Figure~\ref{fig:wall_clock}.
The results in Figure \ref{fig:wall_clock} are shown on the tasks of training ResNet-18 on CIFAR-10 with constant scheduler and training DenseNet-121 on CIFAR-10 with cosine scheduler. Adam++ Case 1 incurs less computational overhead compared to Prodigy and D-Adapt Adam, while delivering comparable or superior or comparable performance.
\begin{figure}[t!]
\centering
\subfigure[ResNet-18, CIFAR-10, constant scheduler, test accuracy]{\includegraphics[width=0.47\textwidth]{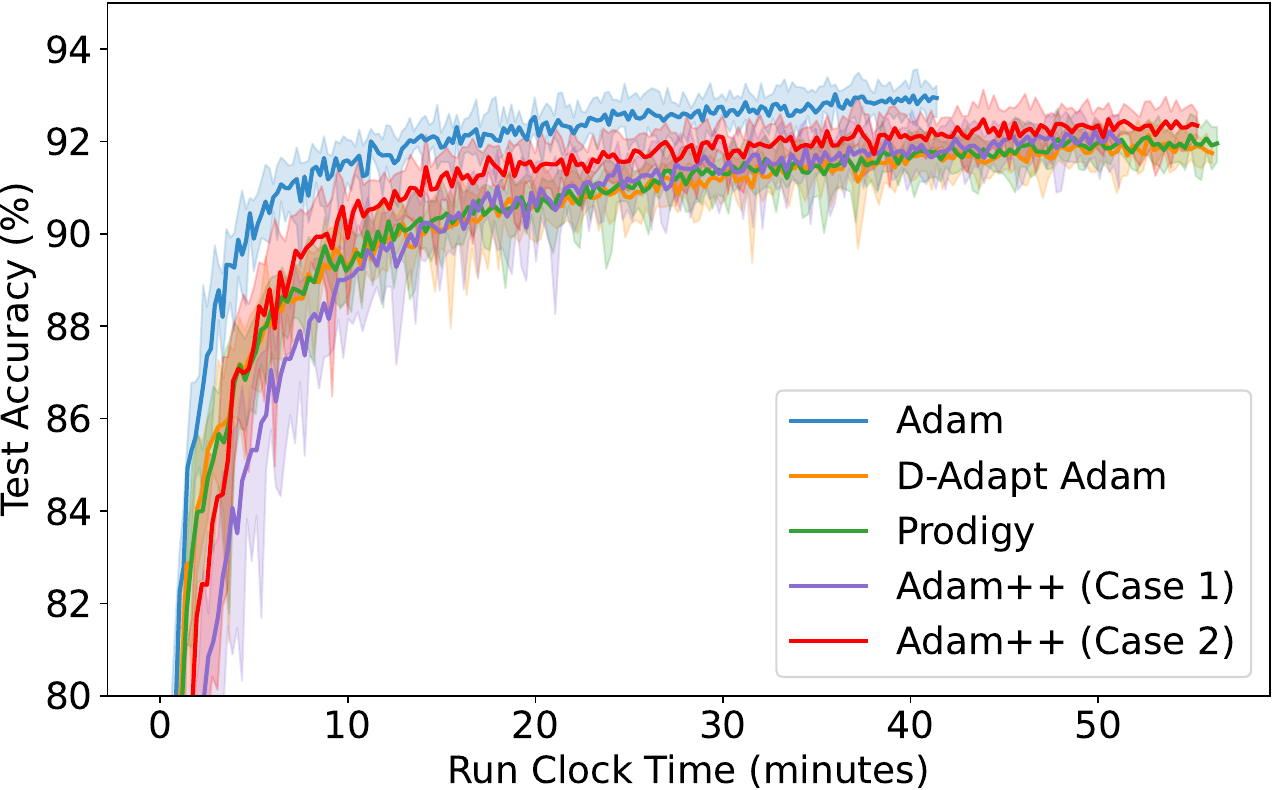}}
\subfigure[ResNet-18, CIFAR-10, constant scheduler, test loss]{\includegraphics[width=0.49\textwidth]{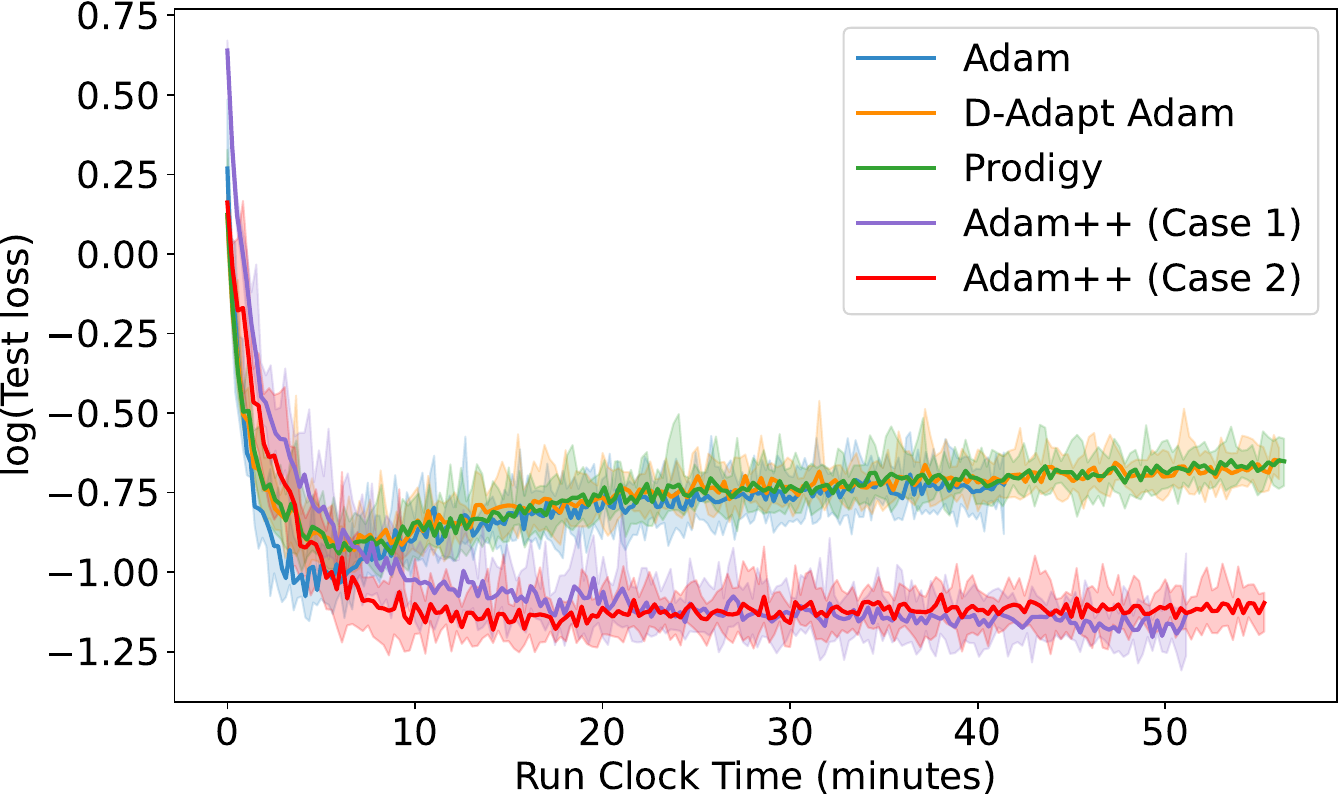}}
\subfigure[DenseNet-121, CIFAR-100, cosine scheduler, test accuracy]{\includegraphics[width=0.47\textwidth]{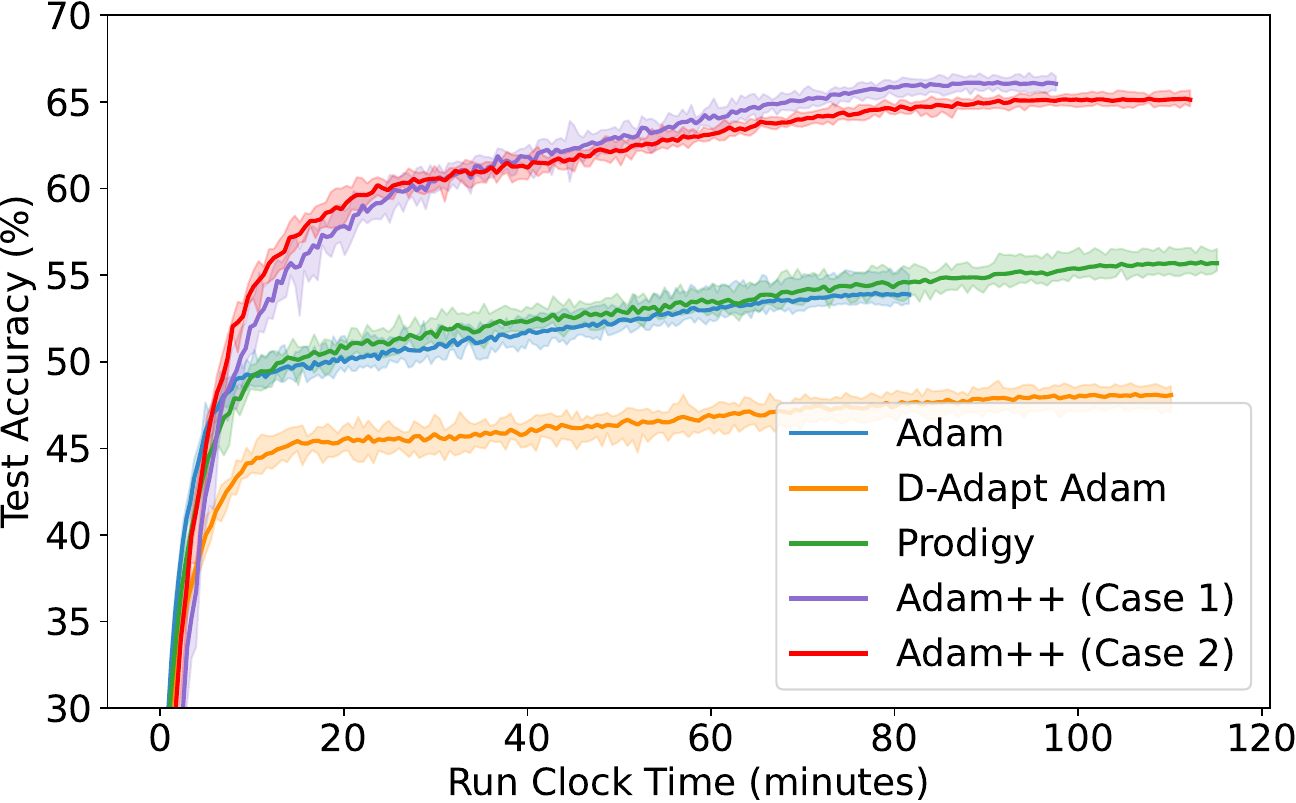}}
\subfigure[DenseNet-121, CIFAR-100, cosine scheduler, test loss]{\includegraphics[width=0.49\textwidth]{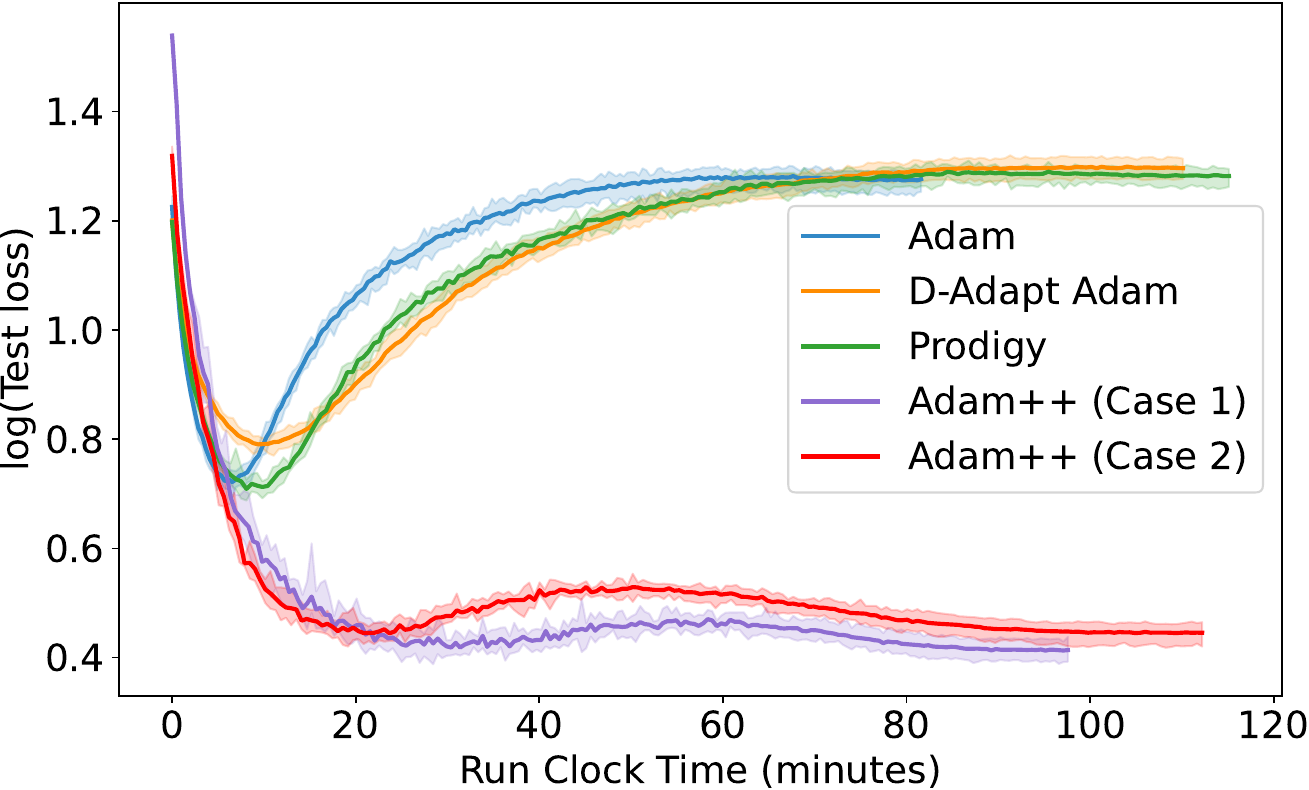}}
\caption{\CC{Test accuracy and test loss with respect to wall-clock time for different algorithms in training ResNet-18 and DenseNet-121 on CIFAR-10 and CIFAR-100 datasets with different schedulers. }}
\label{fig:wall_clock}
\end{figure}

\begin{figure}[htbp]
\centering   
\subfigure[ResNet-18, test accuracy]{\includegraphics[width=0.32\textwidth]{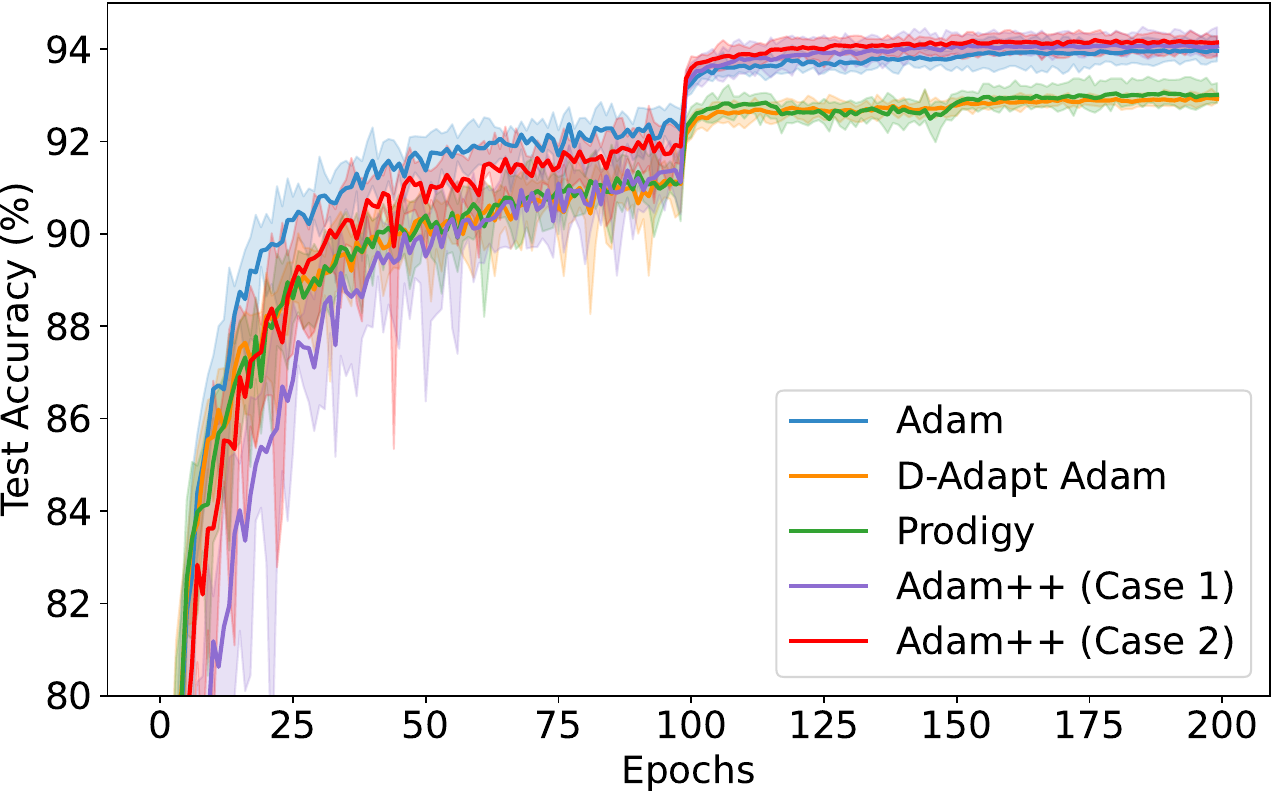}}
\subfigure[ResNet-18, training loss]{\includegraphics[width=0.32\textwidth]{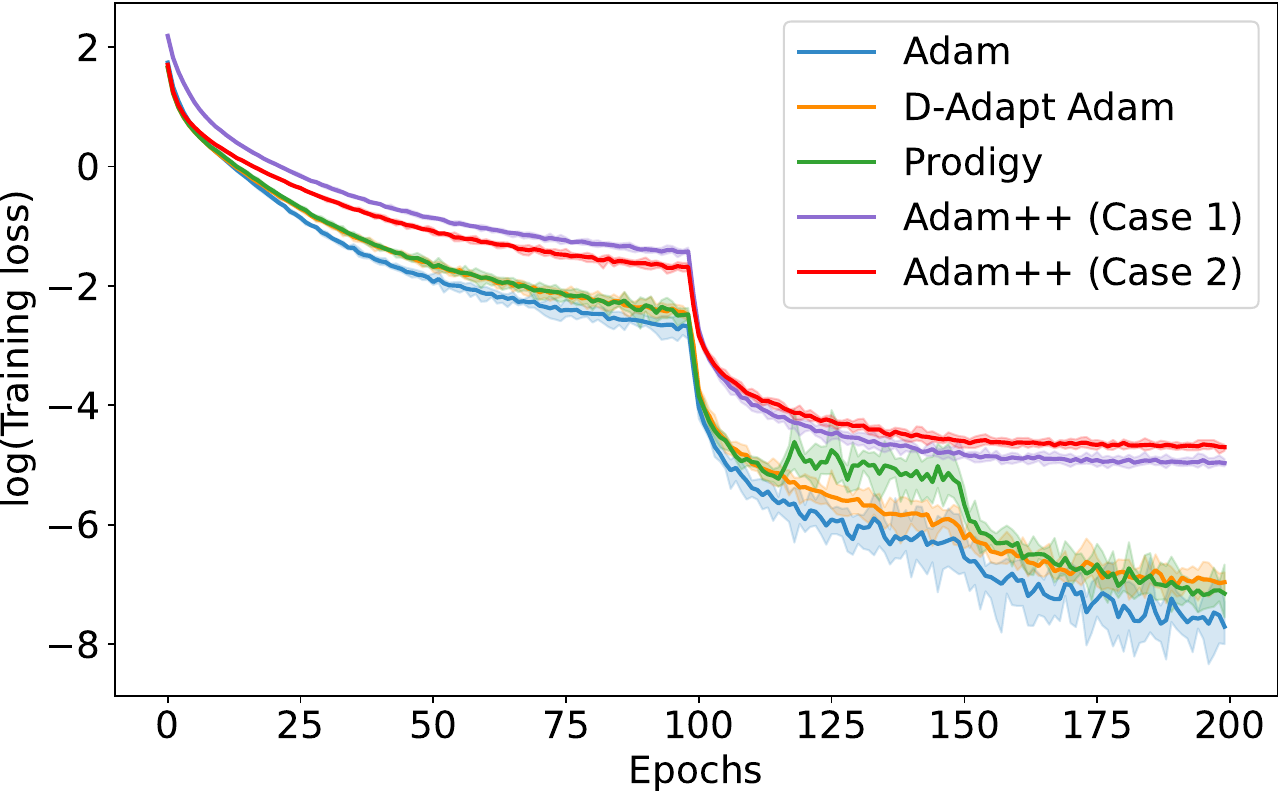}}
\subfigure[ResNet-18, test loss]{\includegraphics[width=0.32\textwidth]{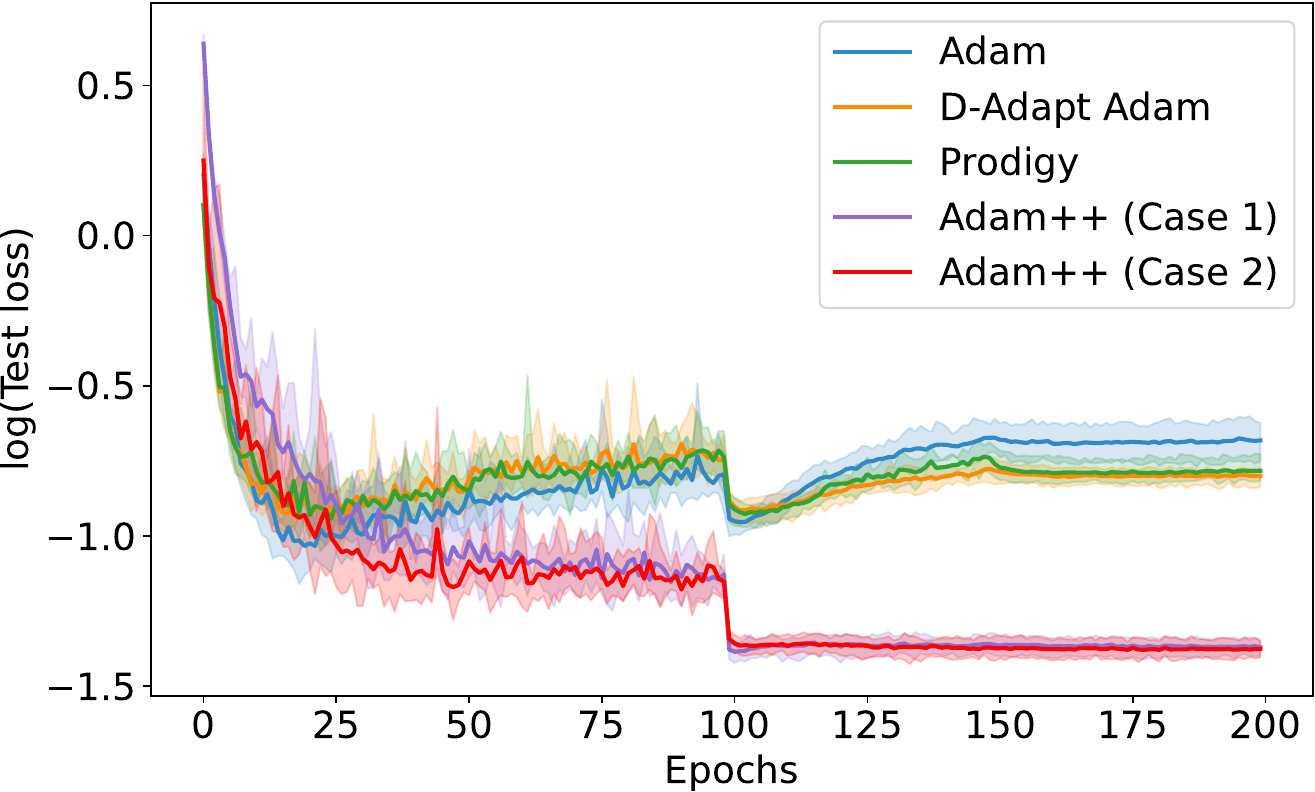}}
\caption{\CC{The results of training ResNet-18 on CIFAR-10 with a Multi-step learning rate schedule.}}
\label{fig:cifar10_multistep}
\vspace{-5mm}
\end{figure}

\begin{figure}[ht!]
\centering   
\subfigure[ResNet-18, test accuracy]{\includegraphics[width=0.32\textwidth]{cifar10_resnet_multistep_acc.pdf}}
\subfigure[ResNet-18, training loss]{\includegraphics[width=0.32\textwidth]{cifar10_resnet_multistep_train.pdf}}
\subfigure[ResNet-18, test loss]{\includegraphics[width=0.32\textwidth]{cifar10_resnet_multistep_loss.pdf}}
\caption{\CC{The results of training ResNet-18 on CIFAR-100 with a Multi-step learning rate schedule.}}
\label{fig:cifar100_multistep}
\vspace{-5mm}
\end{figure}

\begin{figure}[ht!]
\centering   
\subfigure[ResNet-18, test accuracy]{\includegraphics[width=0.32\textwidth]{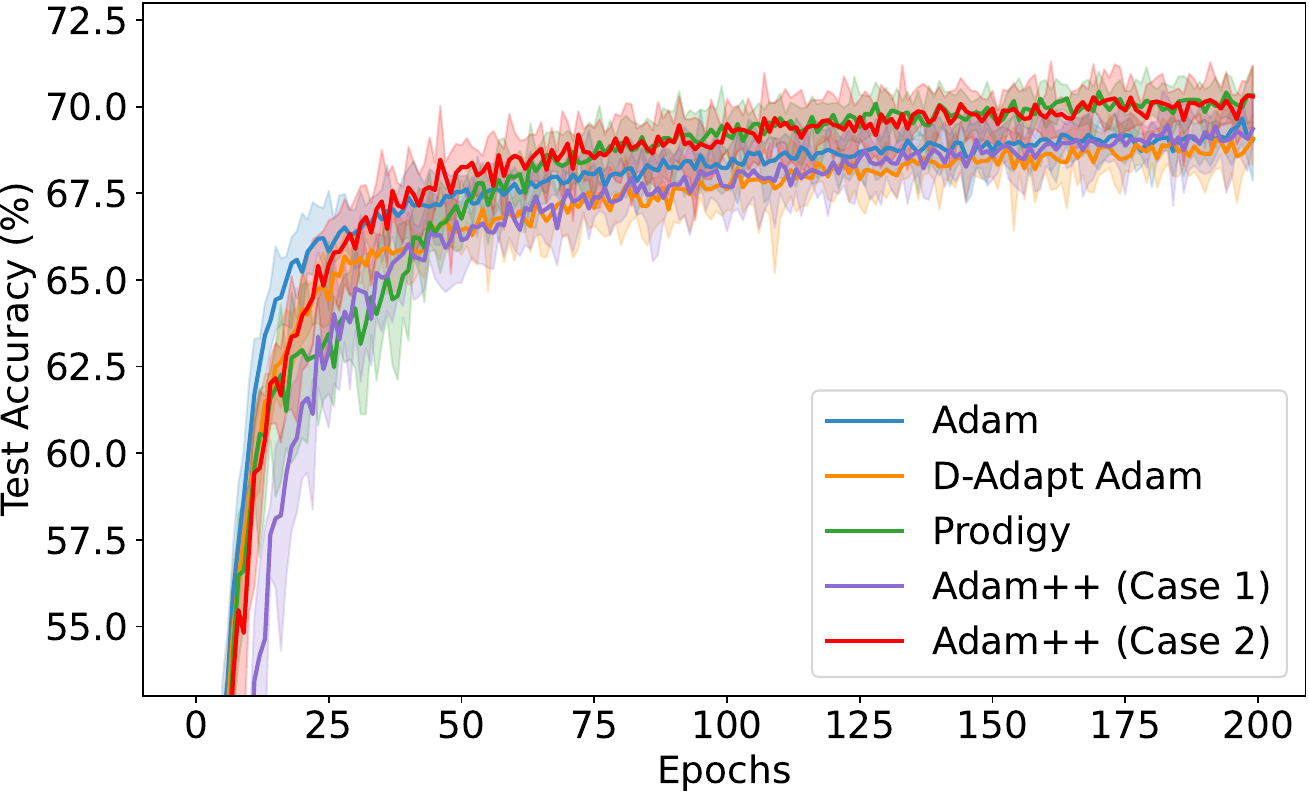}}
\subfigure[ResNet-18, training loss]{\includegraphics[width=0.32\textwidth]{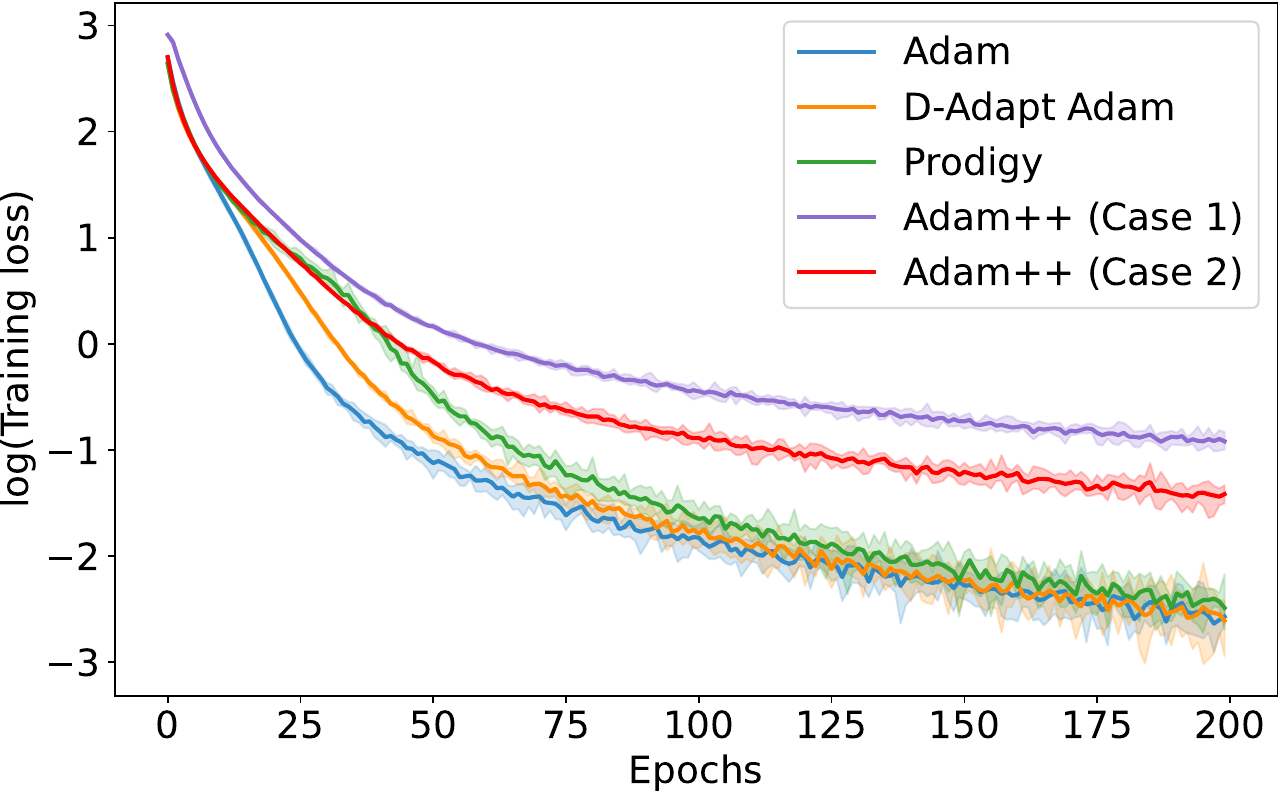}}
\subfigure[ResNet-18, test loss]{\includegraphics[width=0.32\textwidth]{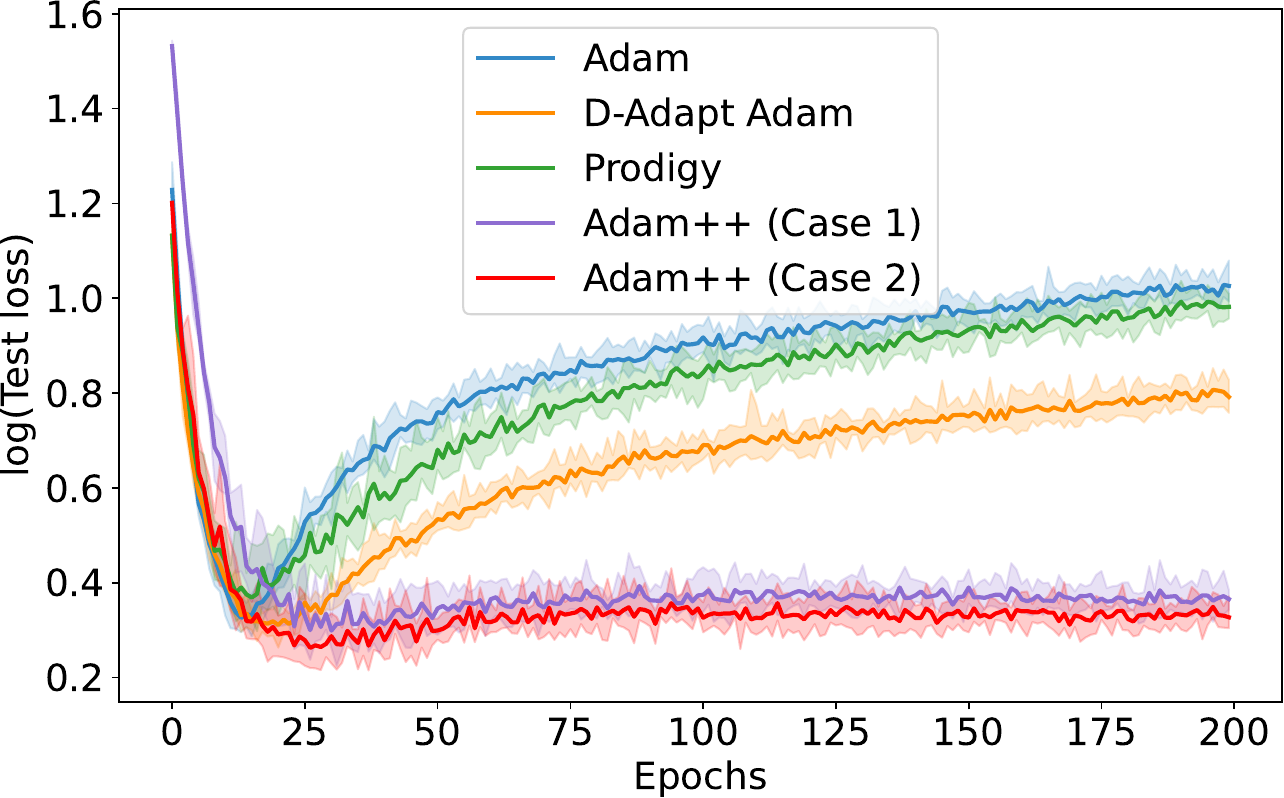}}
\subfigure[DenseNet-121, test accuracy]{\includegraphics[width=0.32\textwidth]{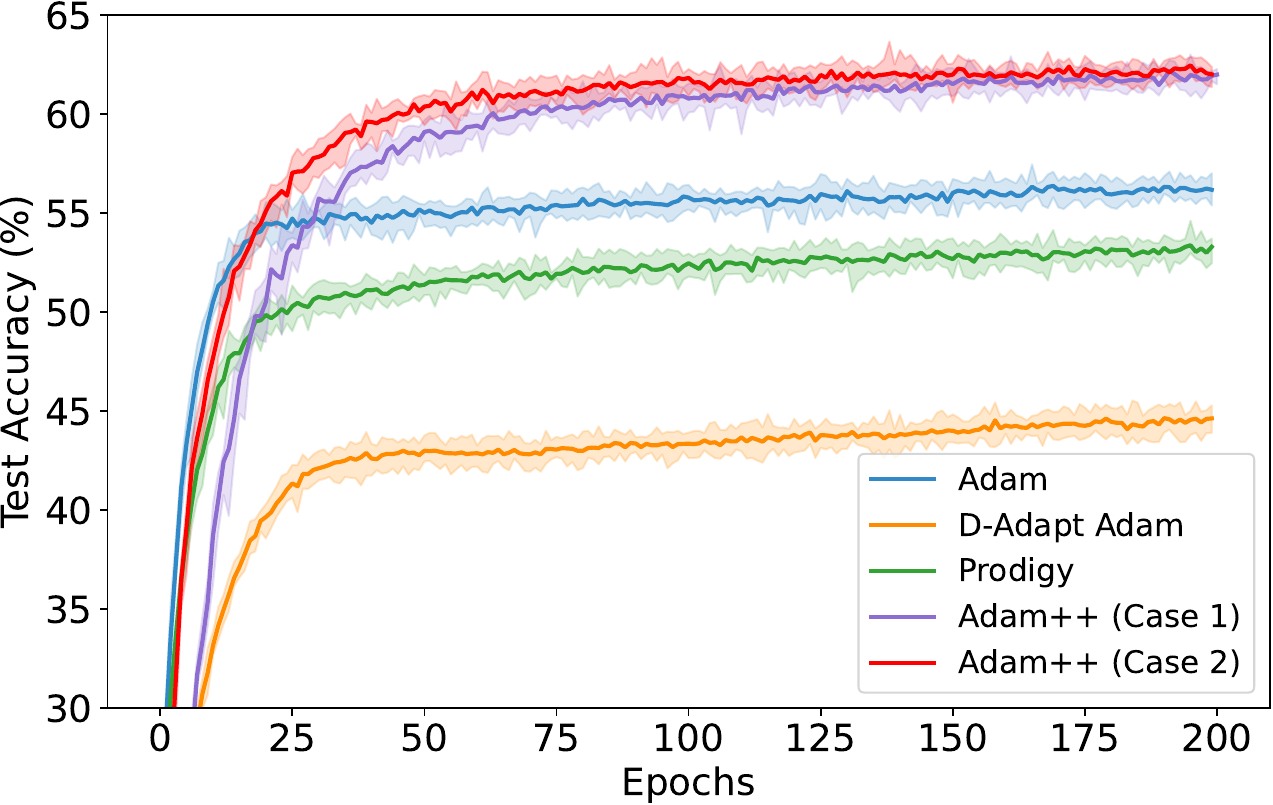}}
\subfigure[DenseNet-121, training loss]{\includegraphics[width=0.32\textwidth]{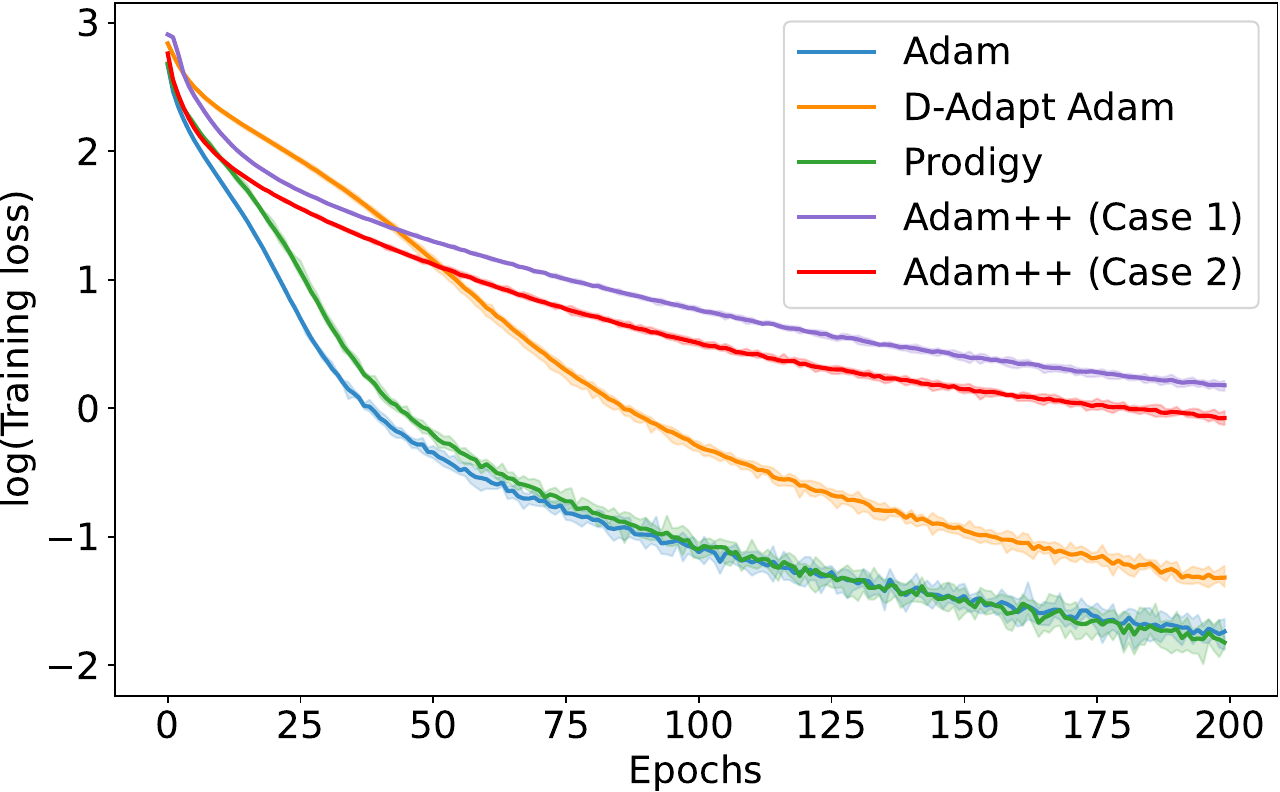}}
\subfigure[DenseNet-121, test loss]{\includegraphics[width=0.32\textwidth]{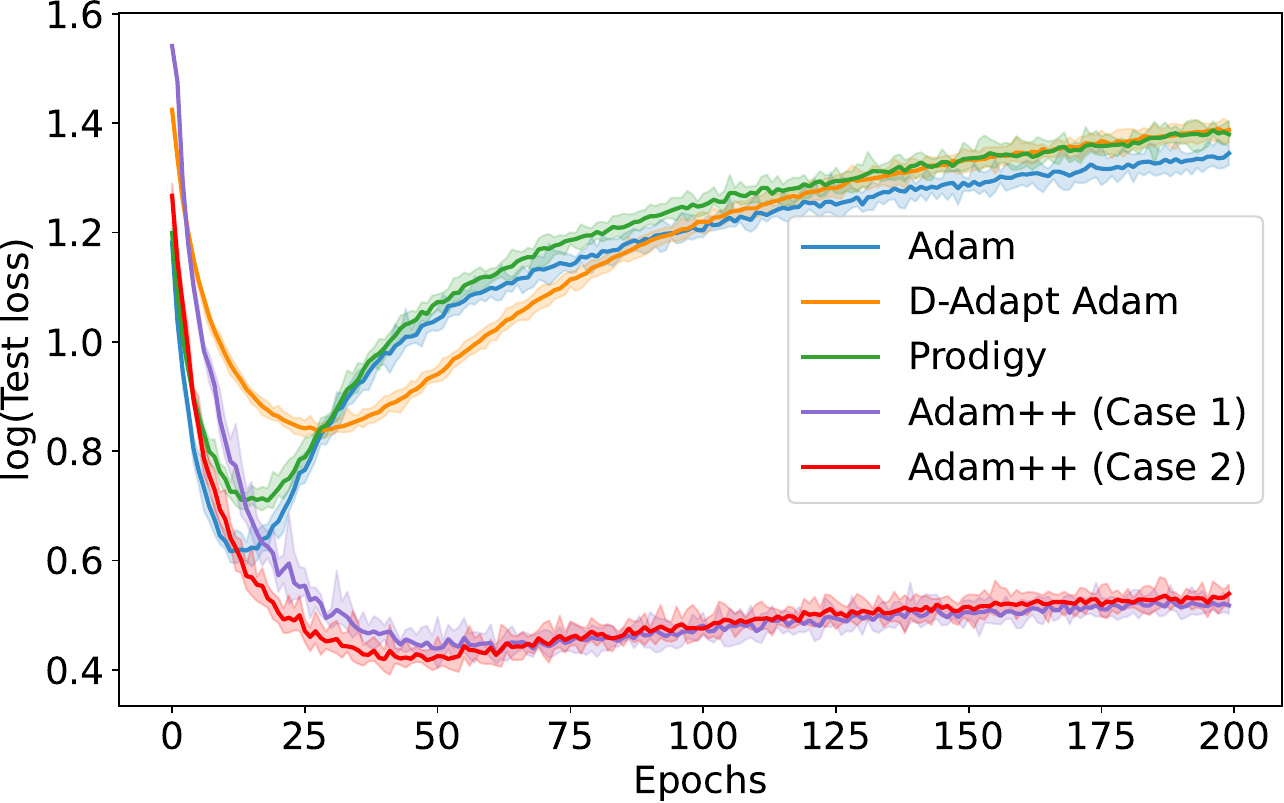}}
\caption{\CC{The results of training ResNet-18 and DenseNet-121 on CIFAR-100 with a constant learning rate schedule.}}
\label{fig:cifar100_constant}
\vspace{-5mm}
\end{figure}

\begin{figure}[ht!]
\centering   
\subfigure[ResNet-18, test accuracy]{\includegraphics[width=0.32\textwidth]{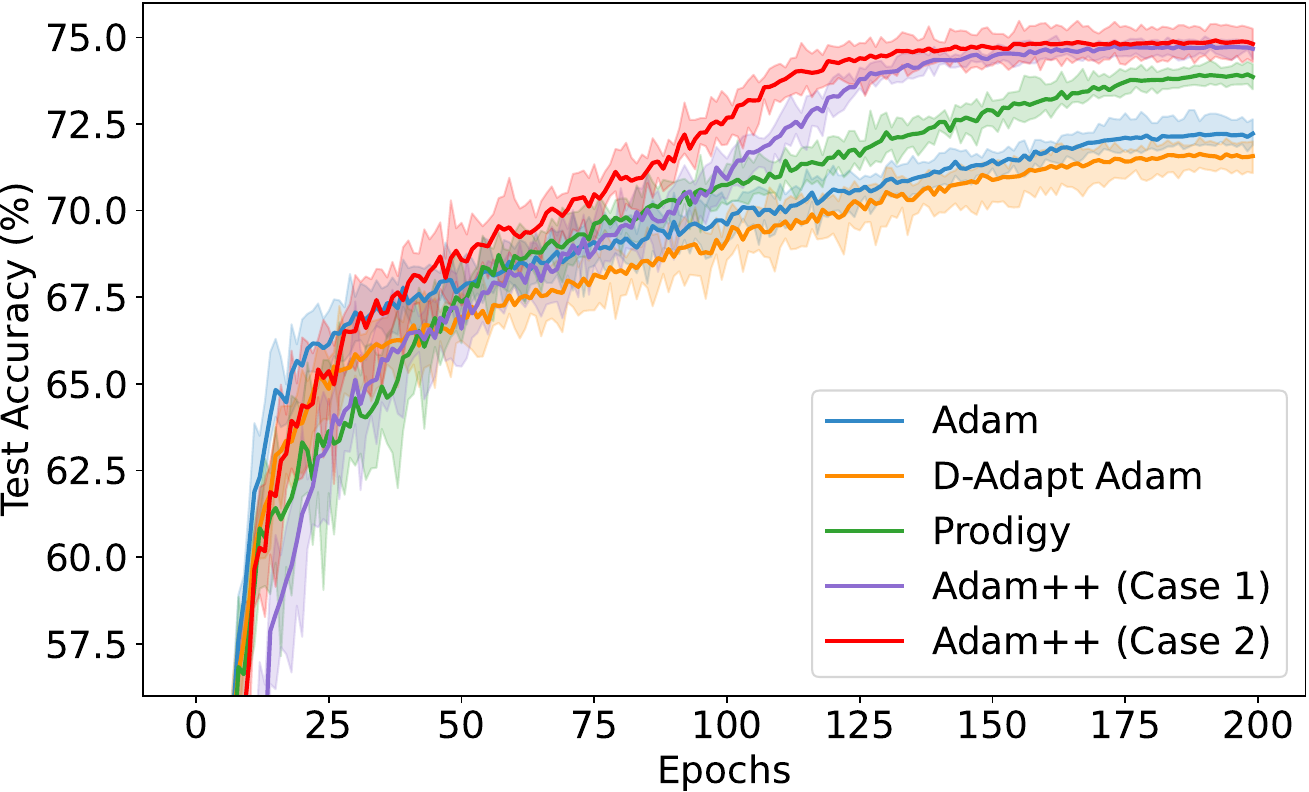}}
\subfigure[ResNet-18, training loss]{\includegraphics[width=0.32\textwidth]{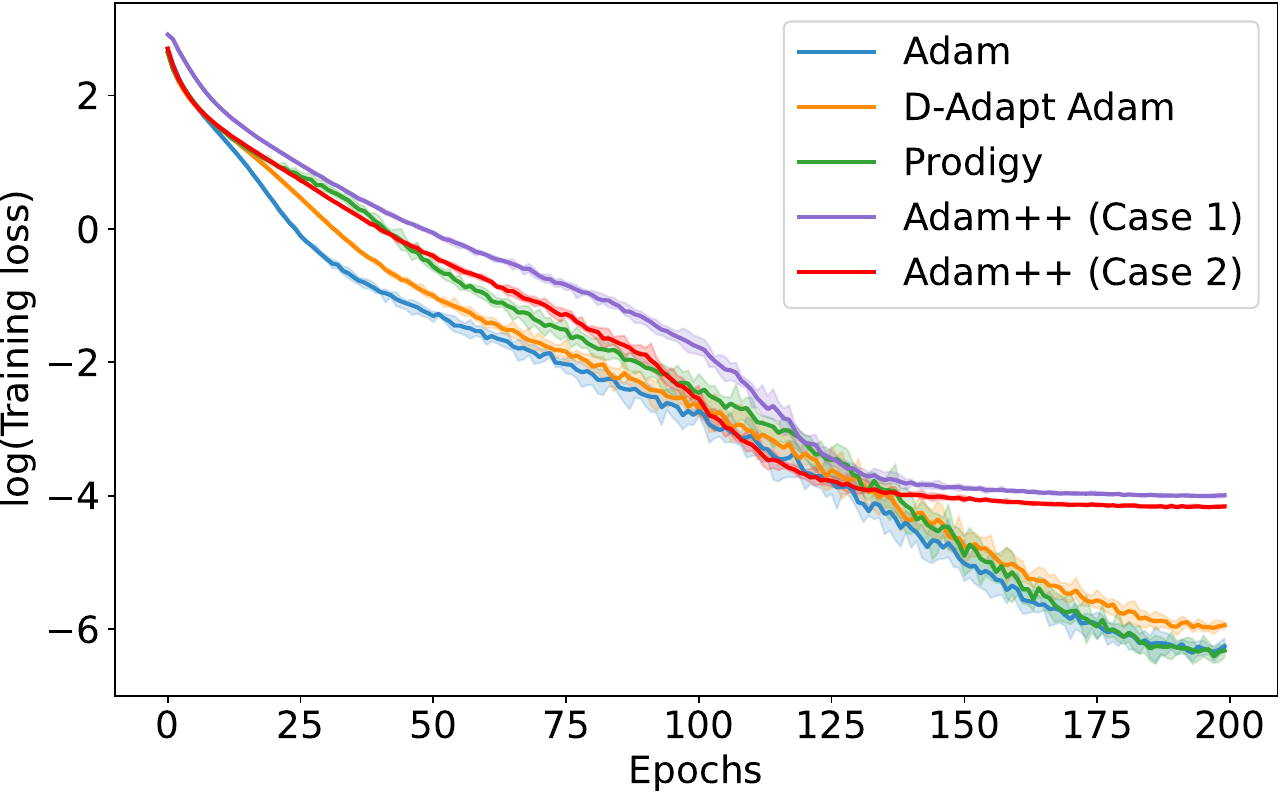}}
\subfigure[ResNet-18, test loss]{\includegraphics[width=0.32\textwidth]{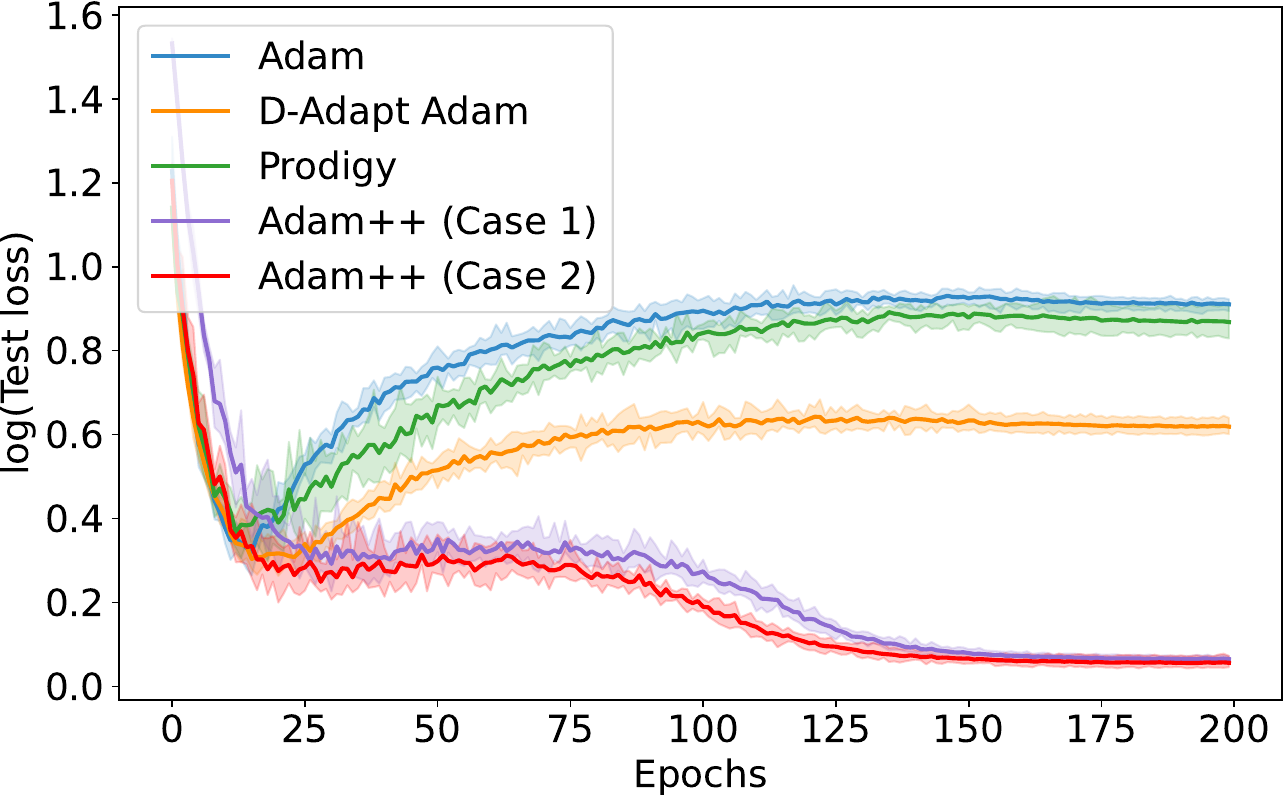}}
\subfigure[DenseNet-121, test accuracy]{\includegraphics[width=0.32\textwidth]{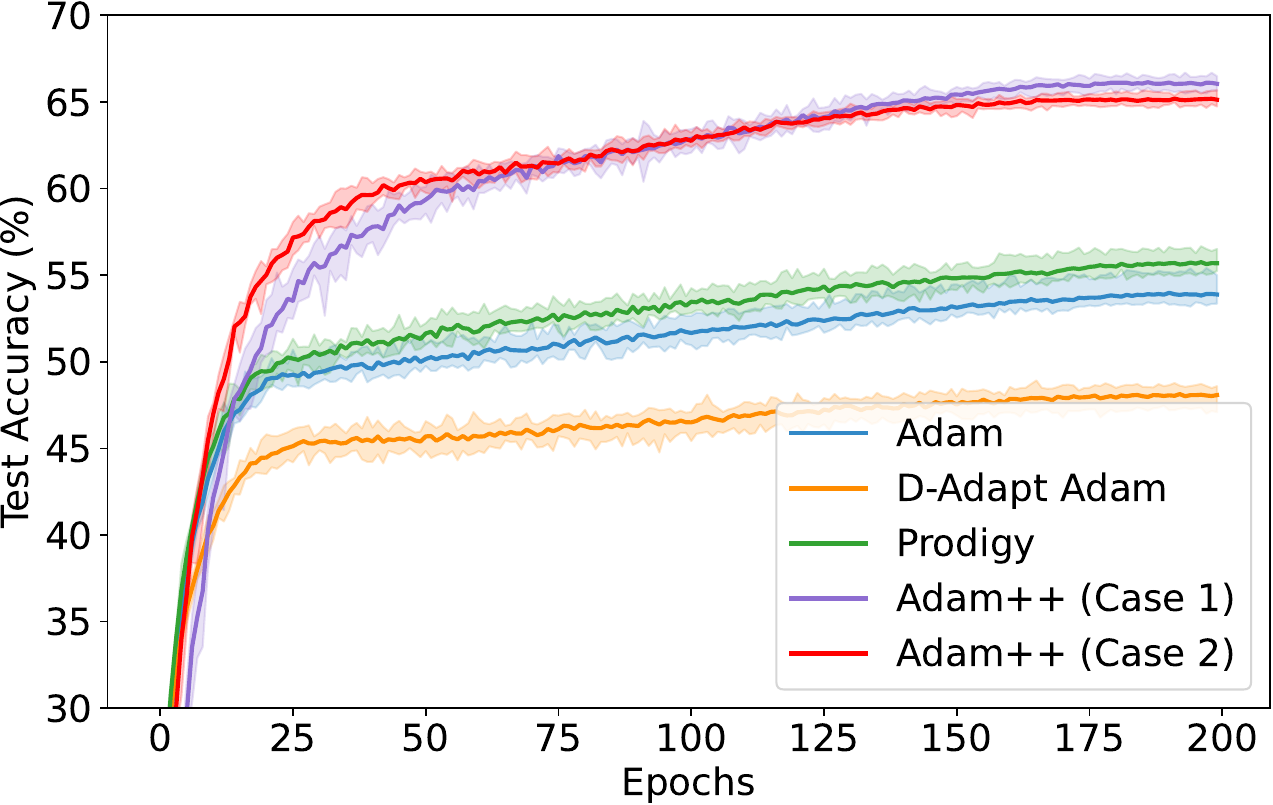}}
\subfigure[DenseNet-121, training loss]{\includegraphics[width=0.32\textwidth]{cifar100_densenet_constant_train.pdf}}
\subfigure[DenseNet-121, test loss]{\includegraphics[width=0.32\textwidth]{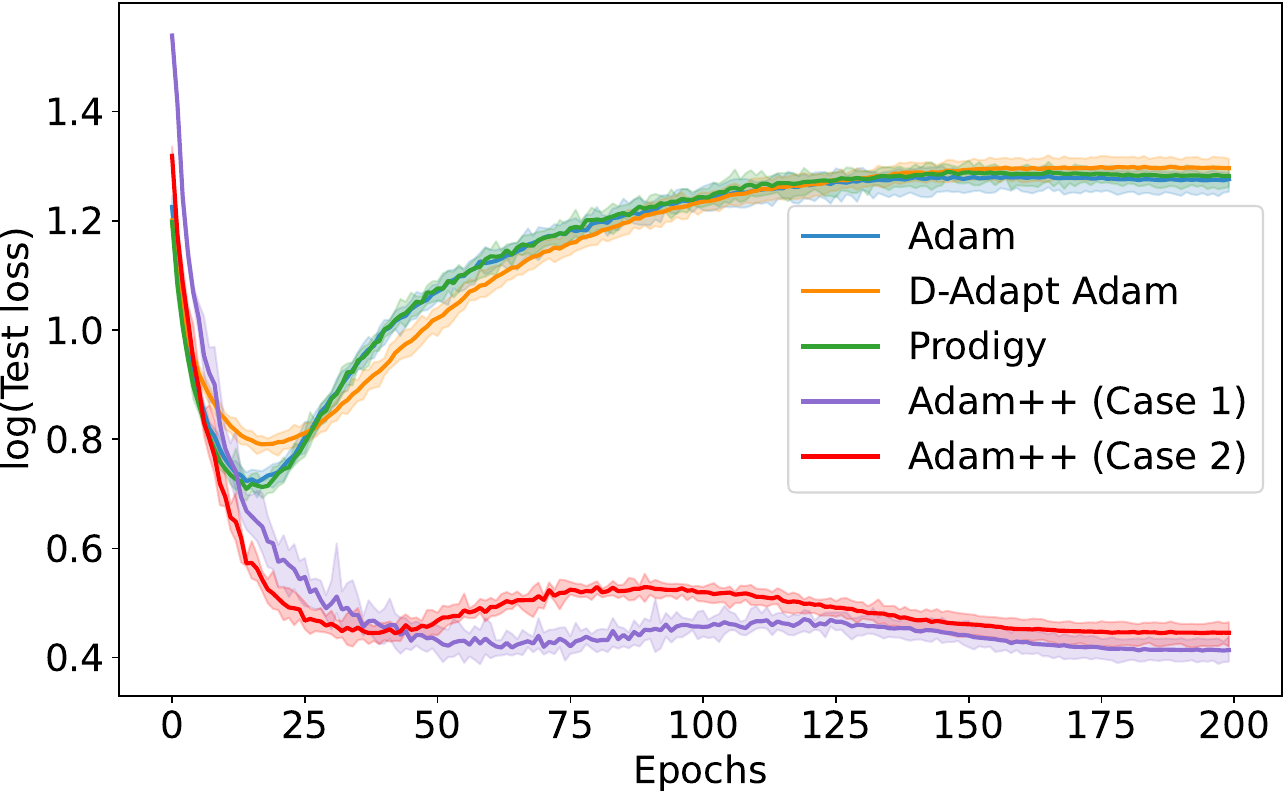}}
\caption{\CC{The results of training ResNet-18 and DenseNet-121 on CIFAR-100 with a cosine learning rate schedule.}}
\label{fig:cifar100_cosine}
\vspace{-5mm}
\end{figure}

\begin{figure}[ht!]
\centering   
\subfigure[ResNet-18, test accuracy]{\includegraphics[width=0.32\textwidth]{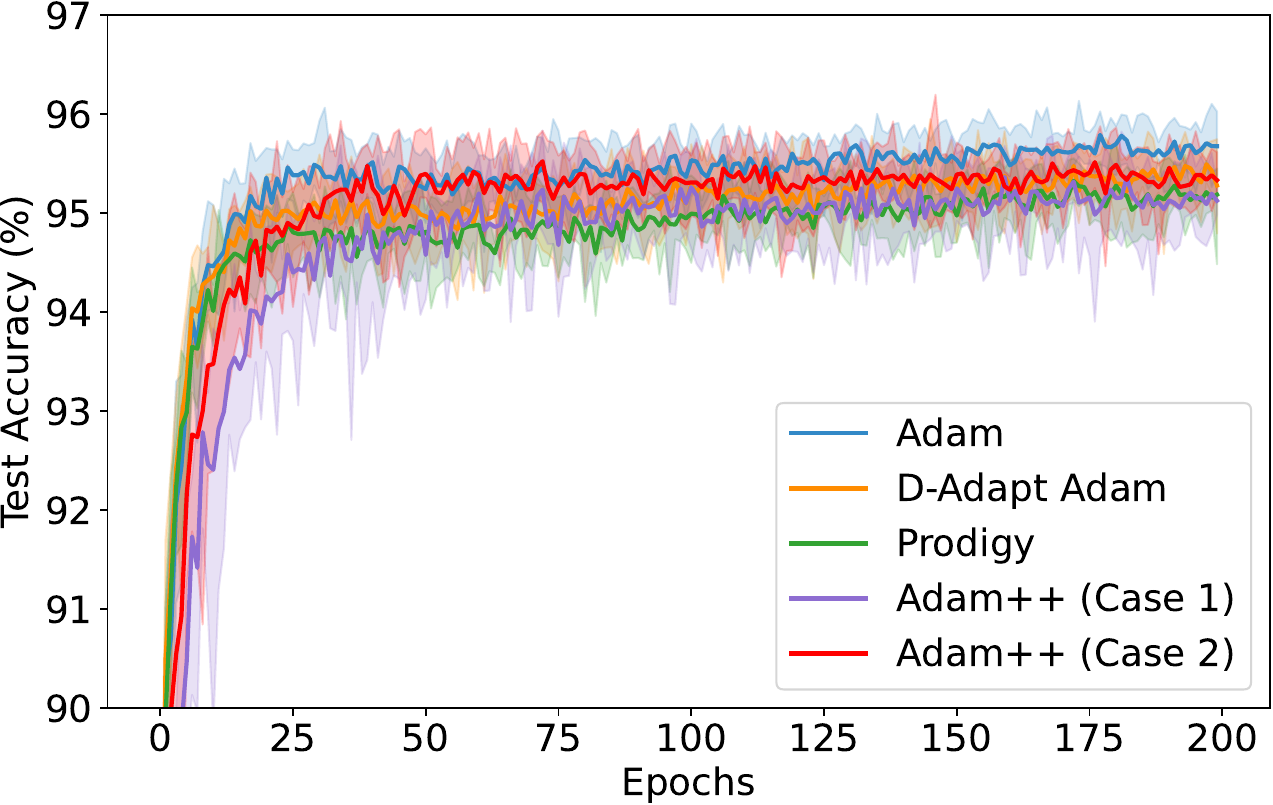}}
\subfigure[ResNet-18, training loss]{\includegraphics[width=0.32\textwidth]{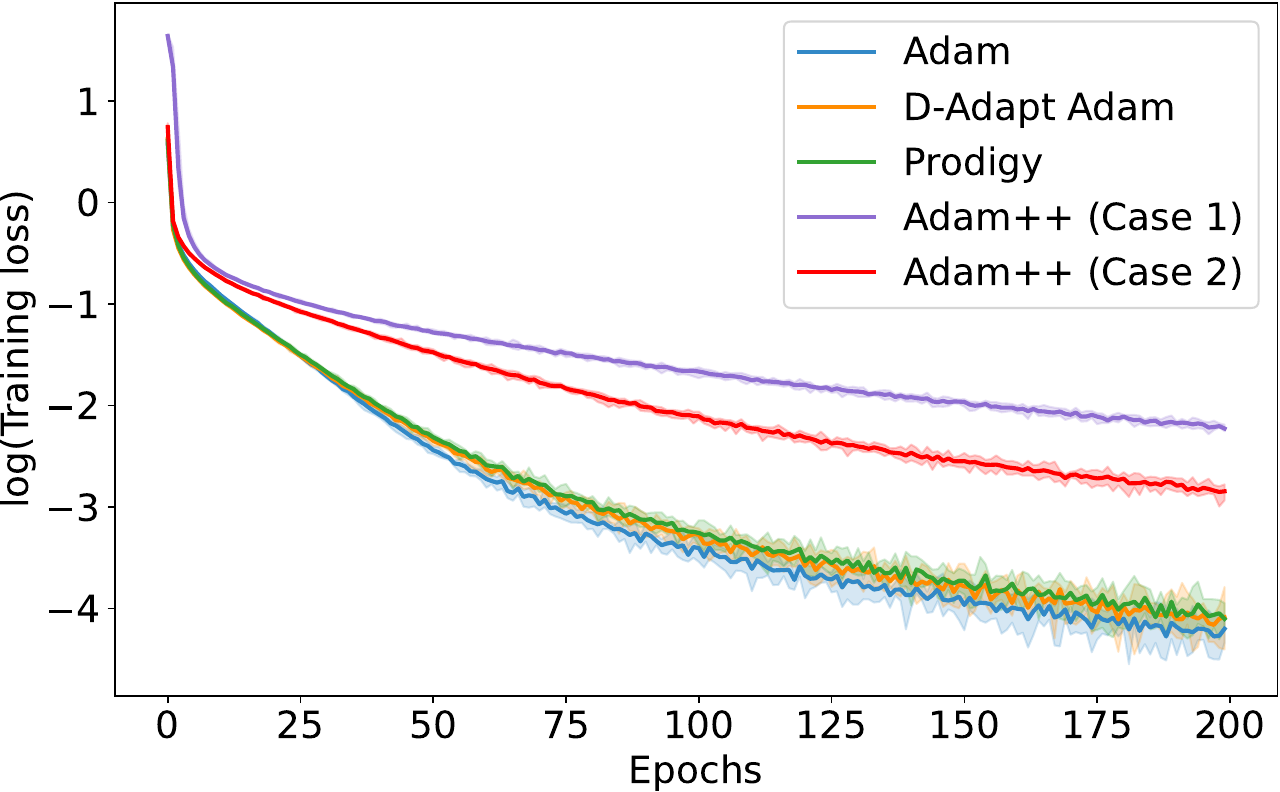}}
\subfigure[ResNet-18, test loss]{\includegraphics[width=0.32\textwidth]{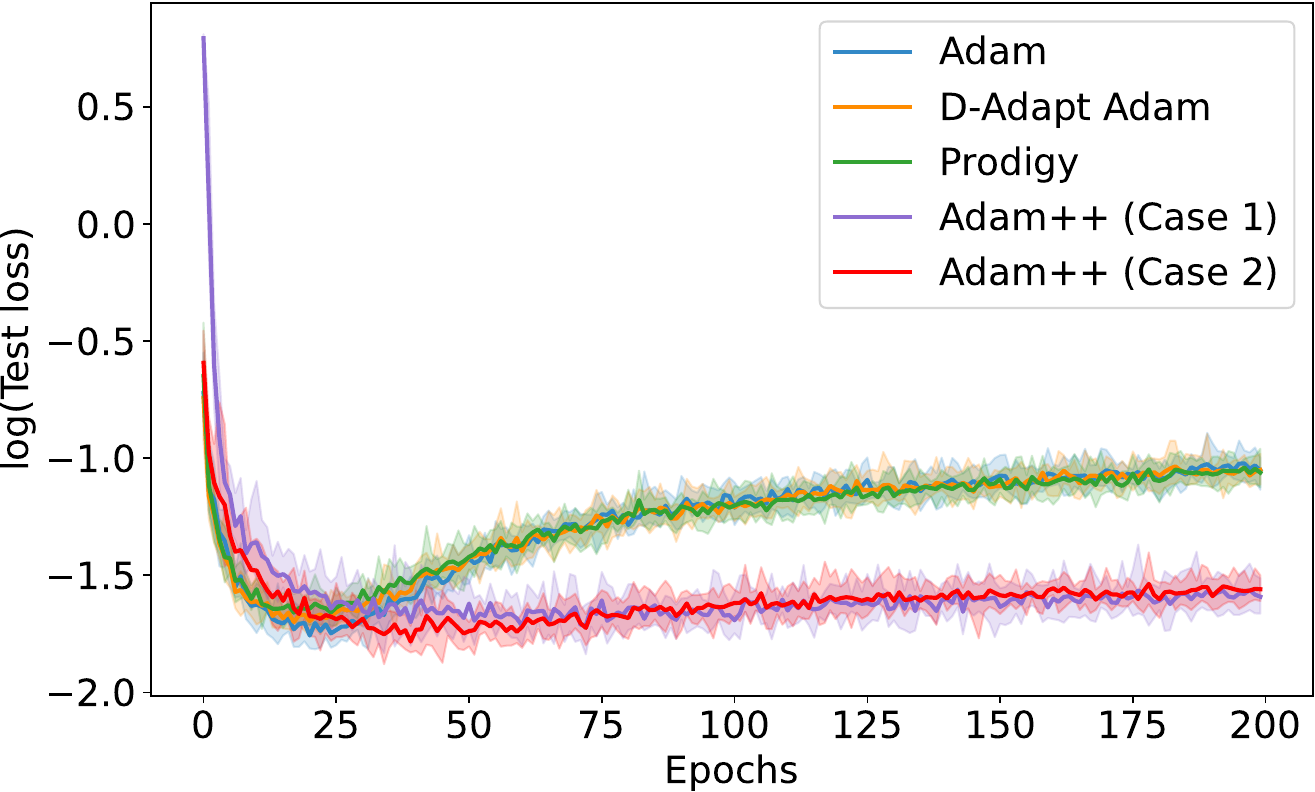}}
\caption{\CC{The results of training ResNet-18 on SVHN with a constant learning rate schedule.}}
\label{fig:svhn_constant}
\vspace{-5mm}
\end{figure}

\begin{figure}[htb!]
\centering   
\subfigure[ResNet-18, test accuracy]{\includegraphics[width=0.32\textwidth]{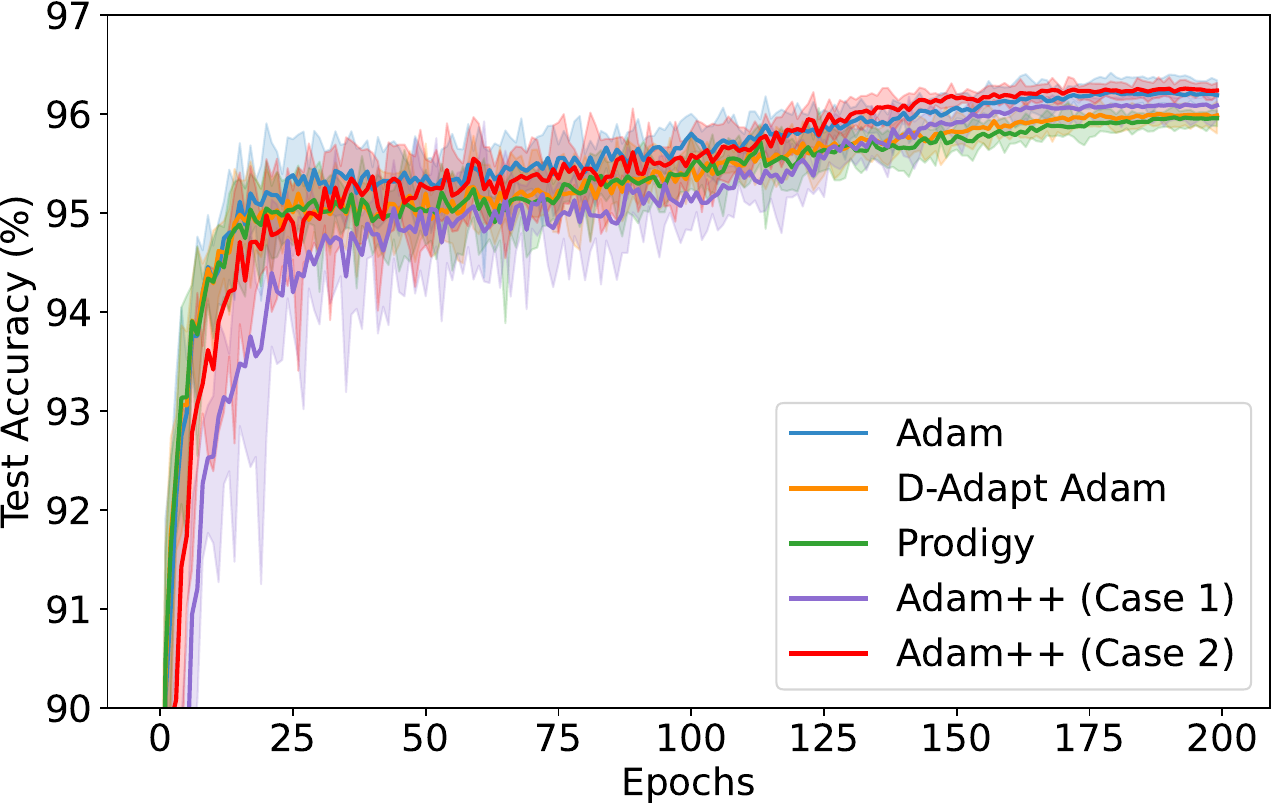}}
\subfigure[ResNet-18, training loss]{\includegraphics[width=0.32\textwidth]{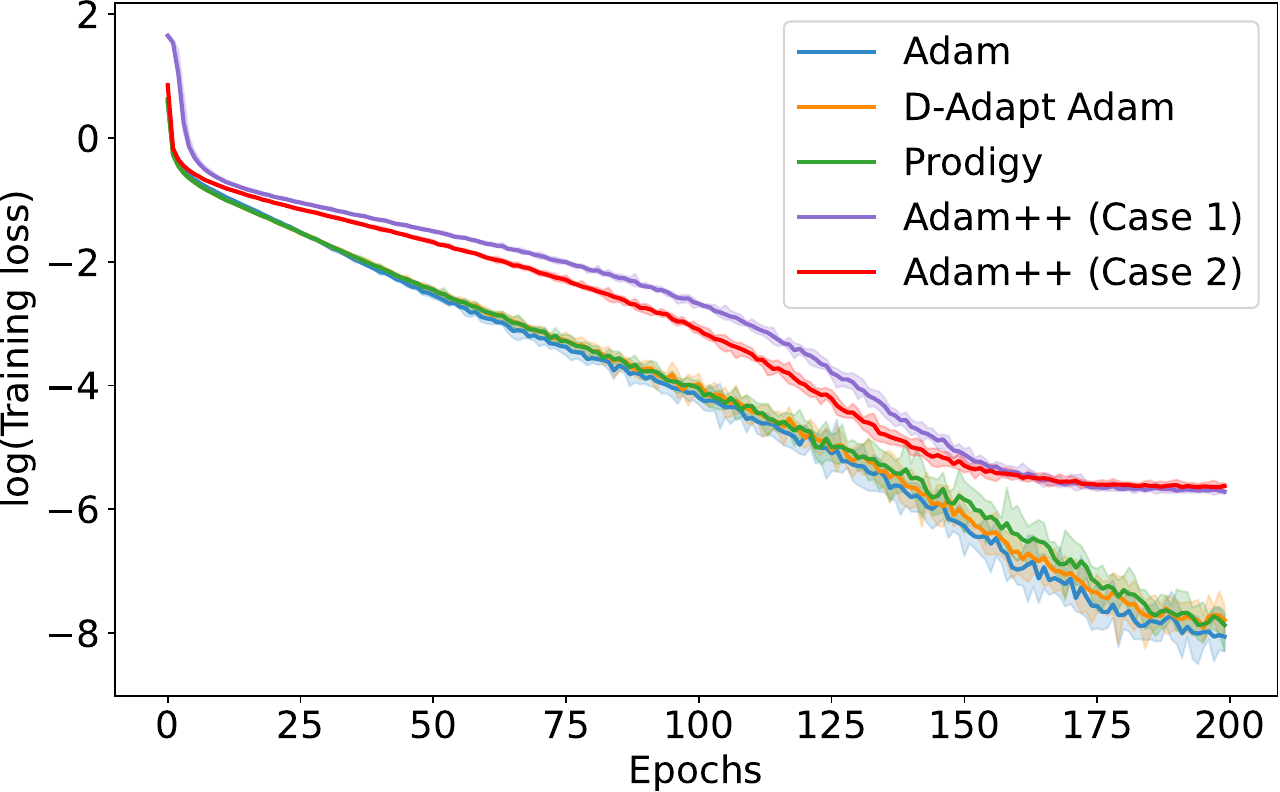}}
\subfigure[ResNet-18, test loss]{\includegraphics[width=0.32\textwidth]{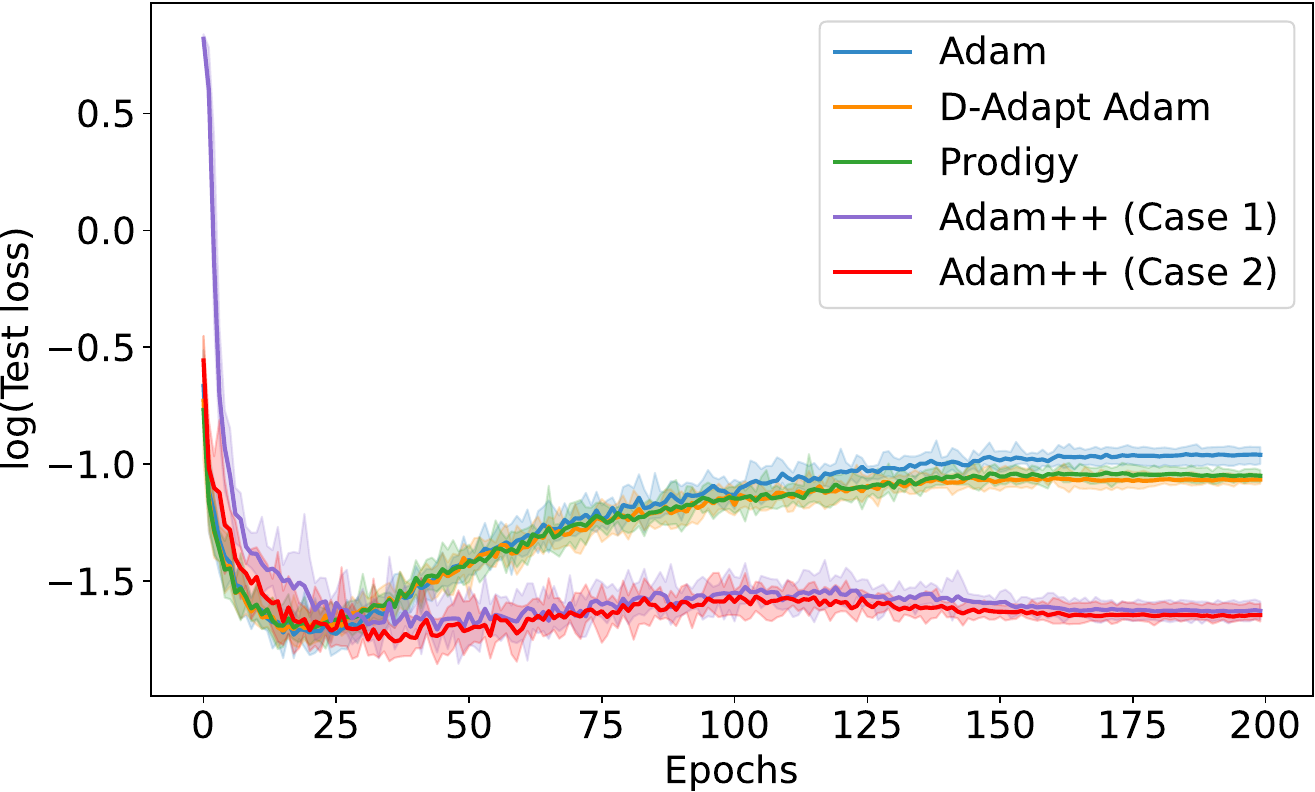}}
\caption{\CC{The results of training ResNet-18 on SVHN with a cosine learning rate schedule. }}
\label{fig:svhn_cosine}
\vspace{-5mm}
\end{figure}

\begin{figure}[htb!]
\centering   
\subfigure[ResNet-50, test accuracy]{\includegraphics[width=0.32\textwidth]{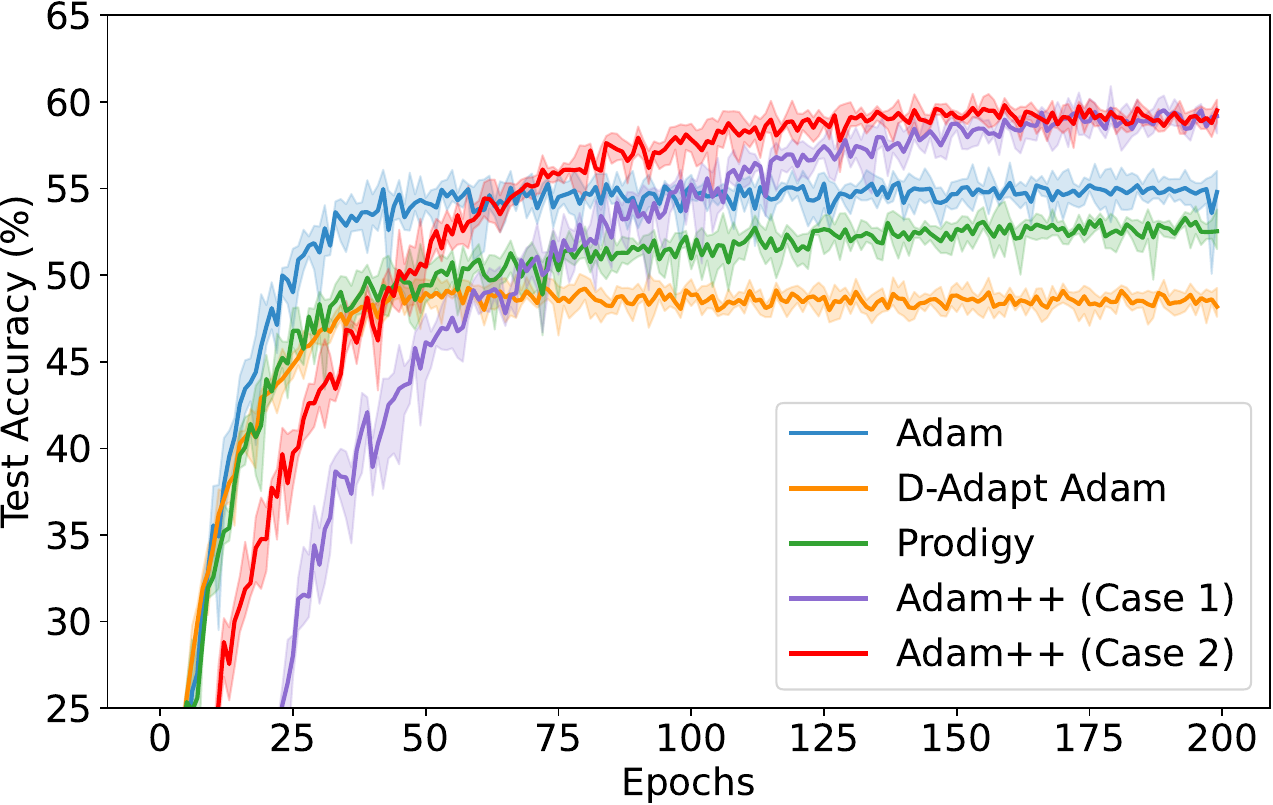}}
\subfigure[ResNet-50, training loss]{\includegraphics[width=0.32\textwidth]{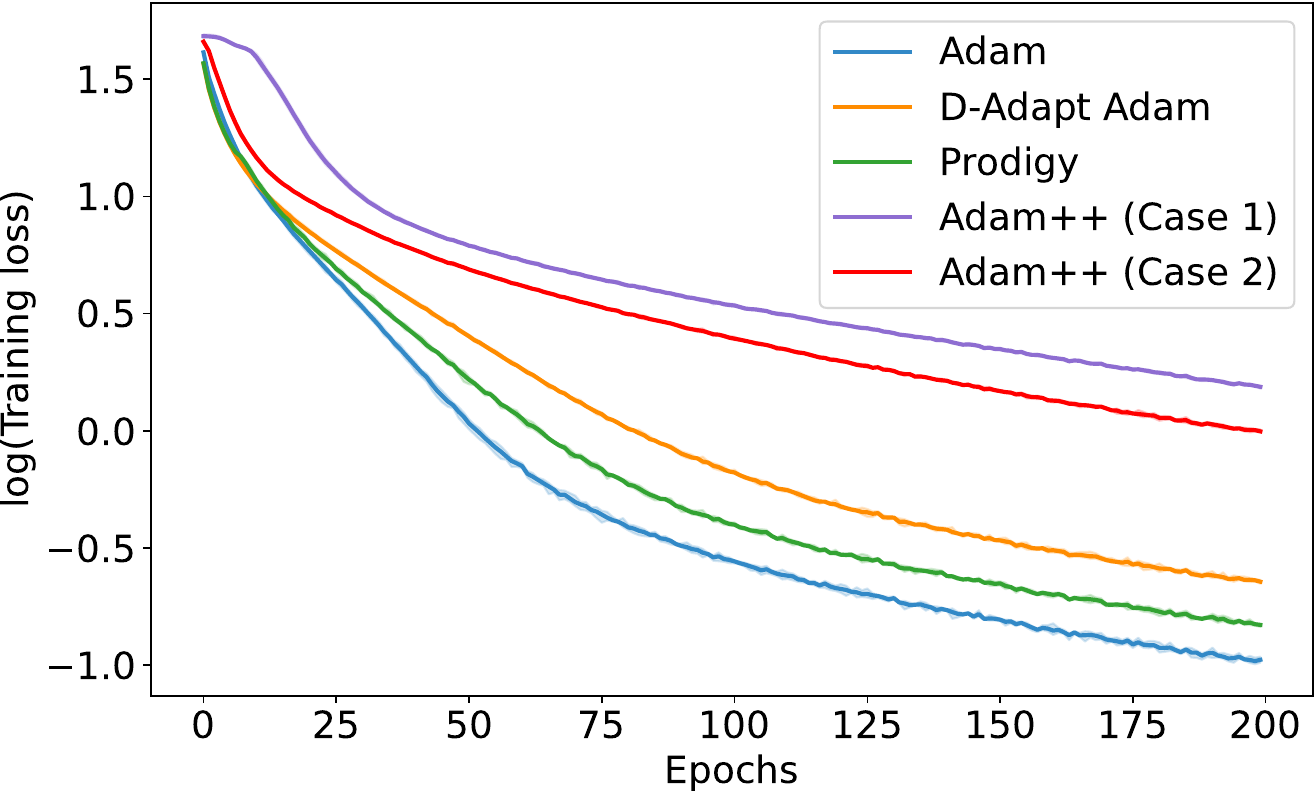}}
\subfigure[ResNet-50, test loss]{\includegraphics[width=0.32\textwidth]{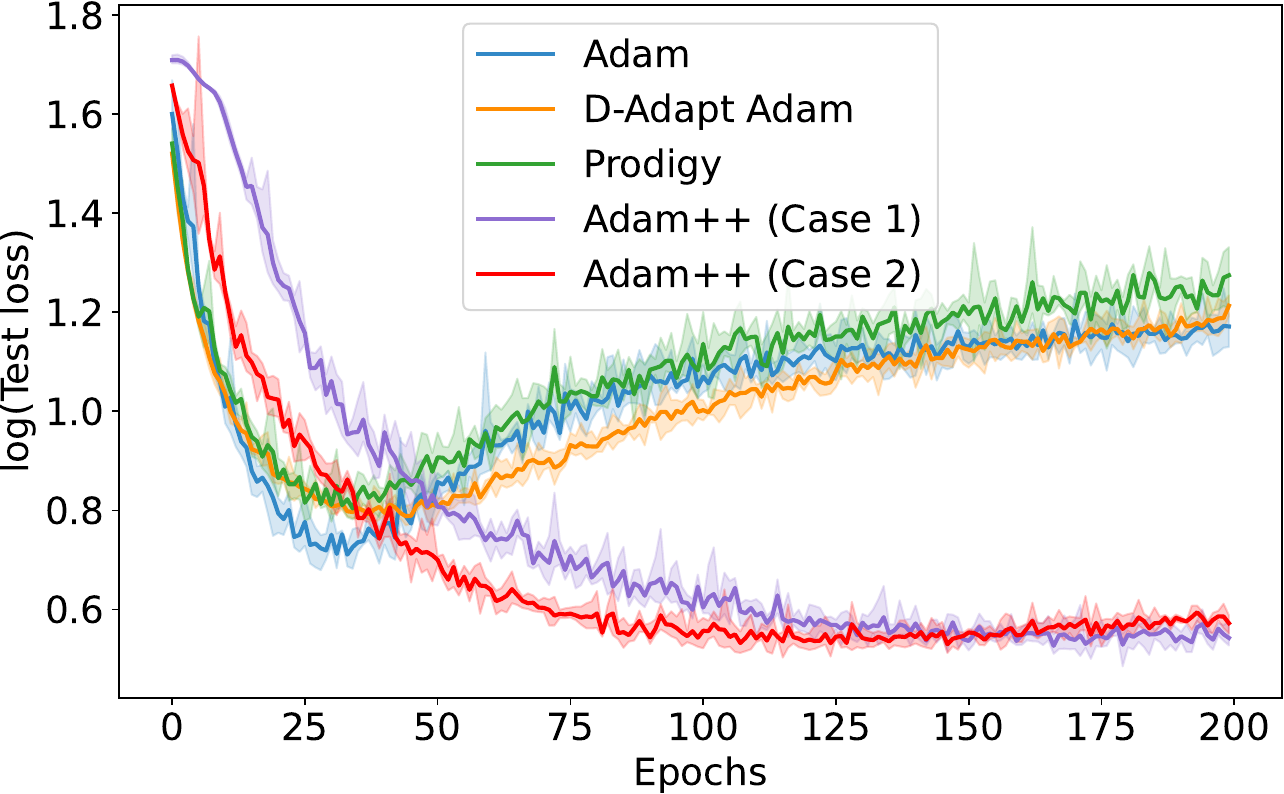}}
\caption{\CC{The results of training ResNet-50 with on Tiny-ImageNet with a constant learning rate schedule.}}
\label{fig:tiny_constant}
\vspace{-5mm}
\end{figure}

\begin{figure}[t!]
\centering   
\subfigure[ResNet-50, test accuracy]{\includegraphics[width=0.32\textwidth]{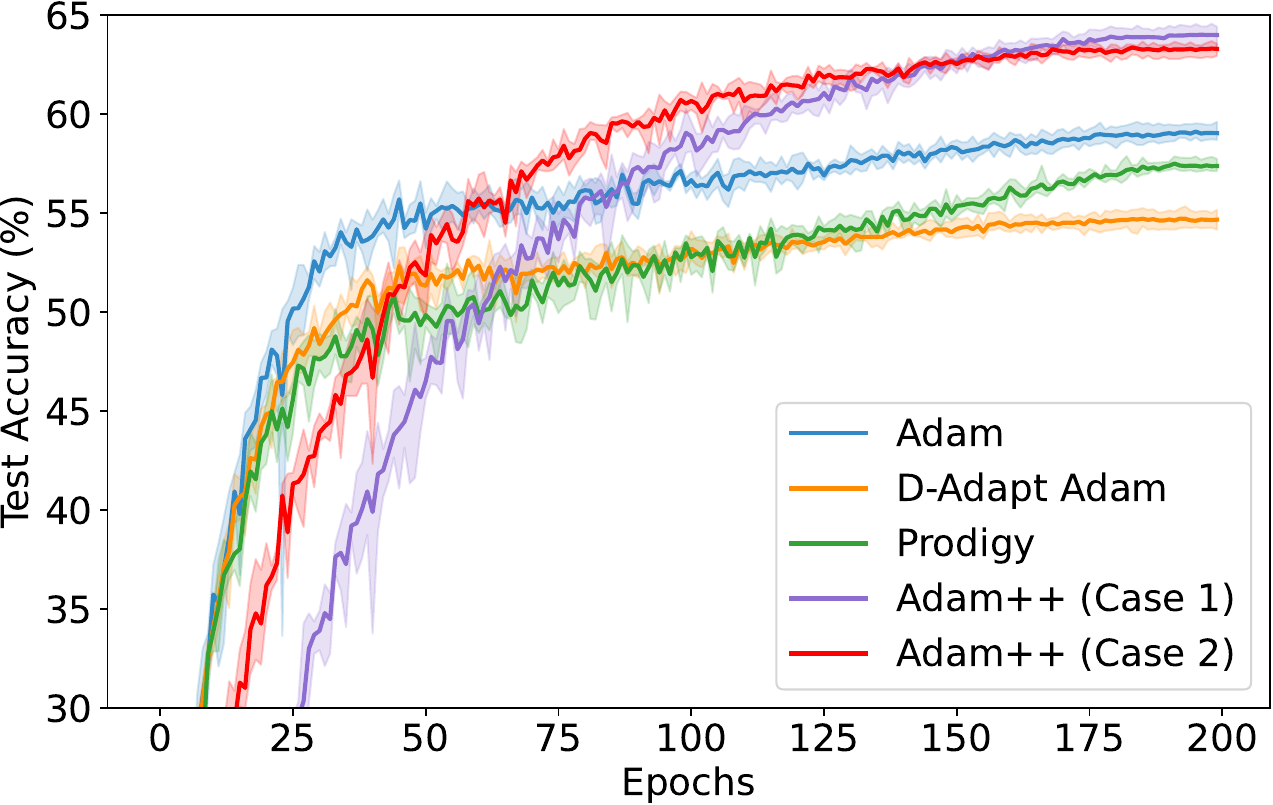}}
\subfigure[ResNet-50, training loss]{\includegraphics[width=0.32\textwidth]{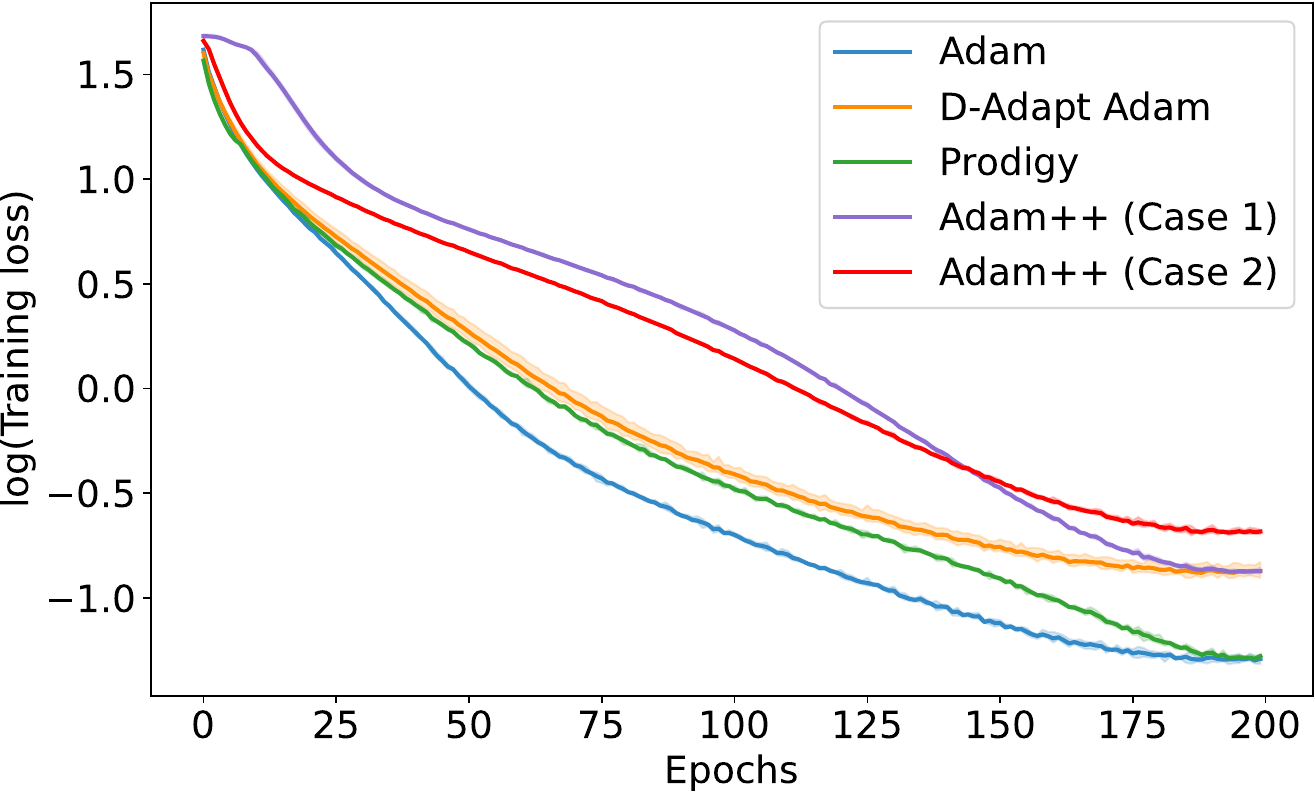}}
\subfigure[ResNet-50, test loss]{\includegraphics[width=0.32\textwidth]{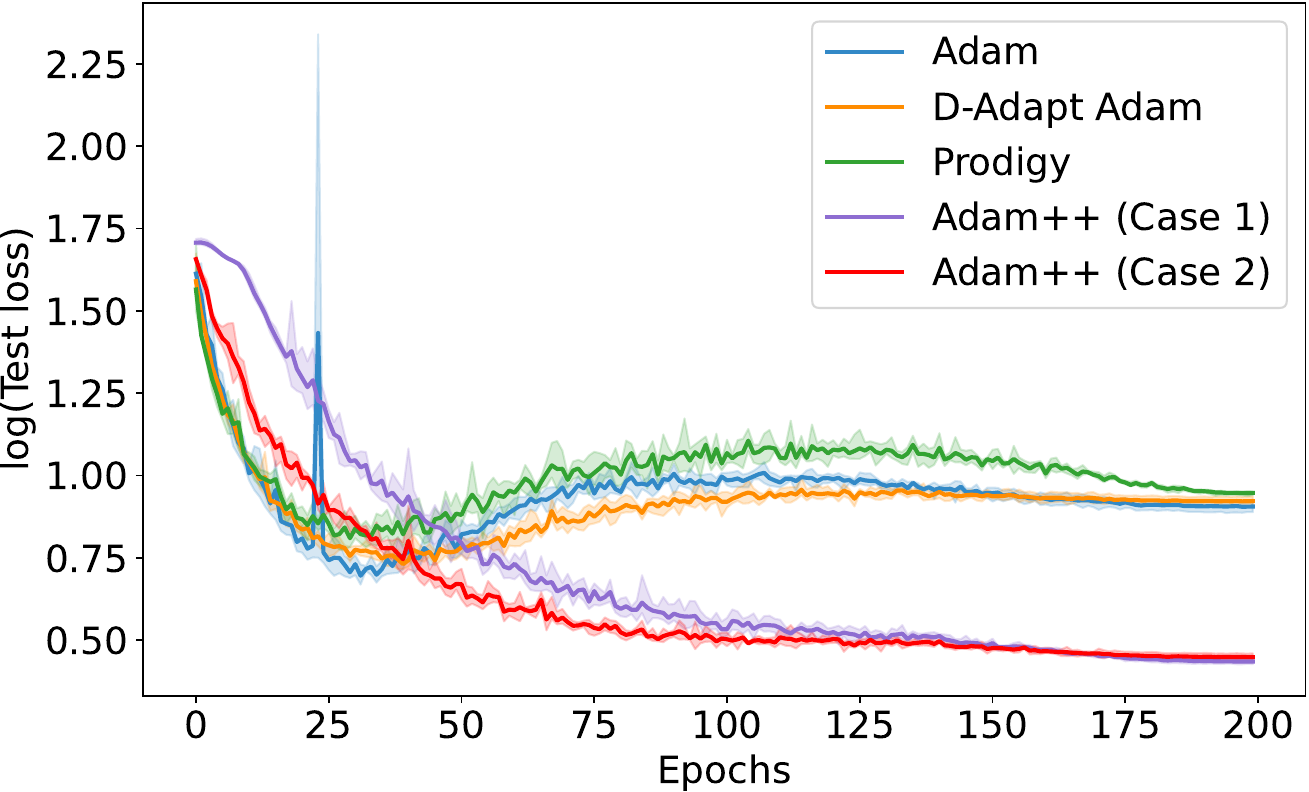}}
\caption{\CC{The results of training ResNet-50 on Tiny-ImageNet with a cosine learning rate schedule. }}
\label{fig:tiny_cosine}
\vspace{-5mm}
\end{figure}

\section{Discussion on the Memory Usage of AdaGrad++ and Adam++}

In this section, we briefly discuss the memory usage of AdaGrad++ and Adam++. Specifically, we note that, compared to vanilla AdaGrad and Adam, AdaGrad++ and Adam++ require the storage of an additional set of parameters, $\xb_0$, resulting in slightly higher memory usage. However, it is important to highlight that, compared to existing parameter-free adaptive gradient methods such as Prodigy \citep{mishchenko2023prodigy} and D-adaptation \citep{defazio2023learning}, which necessitate storing multiple intermediate quantities of the same size as the number of parameters, our proposed algorithms are more efficient in terms of memory usage.

\section{Impact Statement and limitation}
\label{sec:limitations}
This work introduces two kinds of parameter-free optimization algorithms that demonstrate better efficiency on training deep learning neural networks. These methods have the potential for accelerating innovation and application of AI techniques in downstream fields such as education and healthcare. However, latent threats to the job market and potential promotion for the spread of false information should be further investigated.

Although this work proposes these novel algorithms with a lot of experiments, it still contain some limitations. For example, more ablation study on $\beta$s and weight decay of 2 cases of Adam++ should be further implemented.

\newpage
\mbox{}
\newpage
\mbox{}
\newpage
\bibliography{reference}
\bibliographystyle{ims}

\end{document}